\newcommand{\norm}[1]{\| #1 \|}
\newcommand{\abs}[1]{| #1 |}
\newcommand{\Norm}[1]{\left\| #1 \right\|}
\newcommand{\Abs}[1]{\left| #1 \right|}
\newcommand{\threefigsmargin}{0.30\linewidth}
\newcommand{\R}{\mathbb{R}}
\newcommand{\eps}{\epsilon}
\DeclareMathOperator*{\E}{\mathbb{E}}
\DeclareMathOperator*{\argmin}{arg\,min}
\newcommand{\cL}{\mathcal{L}}
\newcommand{\cU}{\mathcal{U}}
\newcommand{\cQ}{\mathcal{Q}}
\DeclareMathOperator{\rk}{rank}
\DeclareMathOperator{\poly}{poly}
\theoremstyle{plain}
\newtheorem{theorem}{Theorem}[section]
\newtheorem{lemma}[theorem]{Lemma}
\newtheorem{corollary}[theorem]{Corollary}
\theoremstyle{definition}
\newtheorem{definition}[theorem]{Definition}
\newtheorem{fact}[theorem]{Fact}
\theoremstyle{remark}
\algnewcommand\algorithmicinput{\textbf{Initialize:}}
\algnewcommand\INPUT{\item[\algorithmicinput]}
\title{One-shot Active Learning Based on Lewis Weight Sampling for Multiple Deep Models}
\author[1]{Sheng-Jun Huang}
\author[2]{Yi Li}
\author[2]{Yiming Sun}
\author[1]{Ying-Peng Tang\footnote{All authors contributed equally.}}
\affil[1]{College of Computer Science and Technology\\ Nanjing University of Aeronautics and Astronautics\\ \texttt{\{huangsj, tangyp\}@nuaa.edu.cn}}
\affil[2]{School of Physical and Mathematical Sciences\\ Nanyang Technological University\\
\texttt{\{yili, yiming005\}@ntu.edu.sg}}
\begin{document}

\maketitle

\begin{abstract}
Active learning (AL) for multiple target models aims to reduce labeled data querying while effectively training multiple models concurrently. Existing AL algorithms often rely on iterative model training, which can be computationally expensive, particularly for deep models. In this paper, we propose a one-shot AL method to address this challenge, which performs all label queries without repeated model training. 

Specifically, we extract different representations of the same dataset using distinct network backbones, and actively learn the linear prediction layer on each representation via an $\ell_p$-regression formulation. The regression problems are solved approximately by 
sampling and reweighting the unlabeled instances based on their maximum Lewis weights across the representations. An upper bound on the number of samples needed is provided with a rigorous analysis for $p\in [1, +\infty)$. 

Experimental results on 11 benchmarks show that our one-shot approach achieves competitive performances with the state-of-the-art AL methods for multiple target models.
\end{abstract}

\section{Introduction}

The rapid advancements in deep learning have led to a substantial increase in demand for extensive labeled data points to effectively train high-performance models. However, data labeling remains costly due to its reliance on human labor. To address this challenge, active learning~(AL) \cite{S10,ren2021survey} has emerged as an effective strategy to mitigate annotation costs. This approach estimates the potential utility of different unlabeled instances in improving the performance of a target model and selectively queries the labels of the most beneficial instances from the oracle (i.e., an expert who can provide the ground-truth label). 
A typical practice of AL conducts label querying and model updating iteratively to exploit the insights from model decisions, 
i.e., selecting one or a small batch of instances based on the model predictions and updating the target model in each iteration until the labeling budget is exhausted \cite{cohn1994improving,HuangJZ14}. This paradigm has been widely applied in real-world scenarios \cite{HoiJZL08,shih2023deep}.

Recently, there has been a significant surge in the demand for the deployment of machine learning systems on diverse resource-constrained devices \cite{deng2020model,gou2021knowledge,menghani2023efficient}. For example, speech recognition and face recognition systems usually need to support various types of machines with varying computing and memory resources.
As a result, the task of training multiple models with varying complexities using the same labeled dataset has arisen \cite{cai2019once}, leading to a new setting of AL where multiple target models are to be learned simultaneously \cite{tang2022active}.

Tang and Huang \cite{tang2022active} provide both theoretical and empirical evidence showcasing the potential of AL in alleviating the substantial data labeling burden associated with training multiple target models. They propose an iterative AL algorithm DIAM and validate its effectiveness for multiple deep models.
However, the use of iterative AL methods results in a significant increase in model training cost. This is due to the requirement of training multiple deep models at each query iteration. 
A potential solution is increasing the querying batch size of conventional batch-mode AL methods. Nevertheless, this may lead to redundant querying \cite{yang2019single}.
A more cost-effective strategy could be one-shot or single-shot querying, which selects the required number of unlabeled instances and makes all label queries within one iteration devoid of retraining the models.

Most existing one-shot AL methods query a representative set of instances using the distance between feature vectors \cite{yang2019single,viering2019nuclear,jin2022one,shoham2023experimental}. However, this approach faces challenges when handling multiple deep models, as the same instance can exhibit different feature representations in different models. This phenomenon arises due to the intrinsic representation learning of deep models, where data representations are implicitly optimized during the training process and varied network architectures yield distinct embeddings. These embeddings may contain abundant information to facilitate data selection. However, such information has not been well exploited by existing one-shot AL methods. Therefore, they may not yield optimal performance in the setting of multiple models.

In this paper, we propose a one-shot AL method for multiple deep models, accompanied by rigorous theoretical analysis. Our method is based on the fact that a deep model can be viewed as a linear prediction layer (i.e., multiple neuron models) and a nonlinear feature extractor (i.e., the network backbone). Therefore, training multiple deep models can be described as learning linear prediction layers from the outputs of distinct network backbones.
In this way, active learning from diverse data representations can be formulated as optimizing a shared sampling matrix to minimize the error of each linear predictor. To facilitate computation and analysis, we consider the learning of the prediction layer as an $\ell_p$ regression problem with $p\in (0, +\infty)$. In particular, our empirical studies place particular emphasis on the case of $p=2$, i.e.\ squared loss, which is one of the most commonly used loss functions in deep learning. 
Specifically, suppose that there are $k$ models and $A^j \in \R^{n\times d}$ ($j=1,\dots,k$) is the feature matrix obtained by feeding the dataset into the $j$-th network backbone.
Let $f:\R\to\R$ be an $L$-Lipschitz function with $f(0) = 0$. Typical choices of $f$ are activation functions such as ReLU, Sigmoid, and so on. 
We abuse the notation and apply $f$ to a vector $\bm{v}\in \R^n$ coordinatewise, i.e.\ $f(\bm{v}) = (f(\bm{v}_1),\dots,f(\bm{v}_n))^T$.
Suppose that $\bm{y}^1,\dots,\bm{y}^c \in \R^n$ are $c$ label vectors and the task is to minimize the loss $\sum_{i=1}^c \norm{f(A^j\bm{\theta}^{ij}) - \bm{y}^i}_p^p$ over $\bm{\theta}^{1j},\dots,\bm{\theta}^{cj}\in\R^d$ for all models $j$ simultaneously. Since the construction of $S$ is independent of $\bm{y}^1,\dots,\bm{y}^c$, we henceforth assume that $c=1$, with a single label vector $\bm{y} \in \R^{n}$.
Therefore, we seek a shared reweighted sampling matrix $S$ such that we can, from the labels of the sampled instances $S\bm{y}$, approximately solve the regression problem $\min_{\bm{\theta}} \norm{f(A^j\bm{\theta})-\bm{y}}_p^p$ for all models $j$ simultaneously.

The simplest case is when there is a single model, i.e., $k=1$. In this case, Gajjar et al.~\cite{CHC} are the first to study the problem of actively learning a single neuron model. They cast the problem as a least-squares regression problem (i.e.\ $p=2$) $\min_{\bm{\theta}} \norm{f(A\bm{\theta})-\bm{y}}_2^2$ and find an $\tilde{\bm{\theta}}$ such that 
\[
\norm{f(A\tilde{\bm{\theta}}^j)-\bm{y}}_2^2 \leq C\cdot \big( \norm{f(A\bm{\theta}^\ast)-\bm{y}}_2^2 + \eps L^2 \norm{A\bm{\theta}^\ast}_2^2 \big),
\]
where $\bm{\theta}^\ast = \argmin_{\bm{\theta}} \norm{f(A\bm{\theta})-\bm{y}}_2^2$ is the minimizer, $C$ is an absolute constant and $\eps$ is an accuracy parameter. Recall that $L$ is the Lipschitz constant of $f$. Gajjar et al.~\cite{CHC} also show that the additive term $\eps L^2 \norm{A\bm{\theta}^\ast}_2^2$ is necessary. For $k>1$ and general $p$, we seek approximate solutions $\tilde{\bm{\theta}}_1,\dots,\tilde{\bm{\theta}}_k$ with the following error guarantee of a similar form on each individual model:
\begin{equation}\label{eq:probobj}
%\begin{gathered}
    \norm{f(A^j\tilde{\bm{\theta}}^j)-\bm{y}}_p^p \leq C\cdot \big( \norm{f(A^j\bm{\theta}^j)-\bm{y}}_p^p + \eps L^p \norm{A^j\bm{\theta}^j}_p^p \big), %\\
\end{equation}
where $\bm{\theta}^j =  \argmin_{\bm{\theta}} \norm{f(A^j\bm{\theta})-\bm{y}}_p^p$ is the minimizer for model $j$ and $C = C(p) > 0$ is a constant depending only on $p$. 
Gajjar et al.\cite{CHC} construct $S$ to be a leverage score sampling matrix and solve $\tilde{\bm{\theta}} = \argmin_{\bm{\theta}\in E} \norm{f(SA\bm{\theta})-S\bm{y}}_2^2$ with $E = \{\bm{\theta} : \norm{SA \bm{\theta}}_2^2 \leq  \norm{S\bm{y}}_2^2/(\eps L^2)\}$. At the core of their argument lies the classical fact that such an $S$ gives an $\ell_2$ subspace embedding for $A$, i.e., $\norm{SA\bm{\theta}}_2 \approx \norm{A\bm{\theta}}_2$ for all $\bm{\theta}$ simultaneously. In fact, it is not necessary to sample the rows of $A$ according to the exact leverage scores $\tau_1(A),\dots,\tau_n(A)$; any sampling probability proportional to $t_i \gtrsim \tau_i(A)$ for $i$-th row will suffice, with the number of samples being proportional to $\sum_i t_i$. This very fact motivates us to tackle the task of data selection from diverse representations by sampling the rows according to the maximum of leverage scores across $A^j$'s, i.e., letting $t_i \sim \max_j \tau_i(A^j)$. 
Solving for each model $j$ by $\tilde{\bm{\theta}}^j = \argmin_{\bm{\theta}\in E^j} \norm{f(SA^j\bm{\theta}) - S\bm{y}}_2^2$ with $E^j = \{\bm{\theta} : \norm{SA^j \bm{\theta}}_2^2 \leq \norm{S\bm{y}}_2^2/(\eps L^2)\}$ will then achieve \eqref{eq:probobj} for $p=2$.
This indicates that the queried instances are effective in learning each of the linear predictors, which fits our problem well.
A potential caveat is that the number of samples needed will be proportional to % the sum of maximum Lewis weights, i.e.\ 
$\sum_i t_i \sim \sum_i \max_j \tau_i(A^j)$, which could be as large as $kd$. However, empirical studies show that this is not the case for real-world datasets (see Section \ref{sec:empobs}) and our approach will thus be efficient.

For general $p$, instead of leverage scores, it is natural to consider Lewis weights, which can be seen as generalizations of leverage scores for general $p$ (see Section~\ref{sec:tech_prelim} for the definition). It is known that an $\ell_p$ Lewis weight sampling matrix $S$ give an $\ell_p$ subspace embedding, i.e., $\norm{SA\bm{\theta}}_p \approx \norm{A\bm{\theta}}_p$ for all $\bm{\theta}$ simultaneously~\cite{CP2015}. The approach mentioned above extends to general $p$ naturally, attaining \eqref{eq:probobj} for general $p$, by sampling according to the maximum Lewis weights and solving an $\ell_p$-regression problem for $\bm{\tilde x^j}$ with an $\ell_p$-version of $E^j$.

\paragraph{Theoretical Results.} For $k=1$, Gajjar et al.\ devised an algorithm using $\tilde{O}(d/\eps^4)$ queries~\cite{GXHM}, with an analysis specific to $p=2$.
We generalize the approach to the $\ell_p$ Lewis weight sampling for $p\geq 1$ and extend it to $k\geq 1$, giving the following theorem.

\begin{theorem}[Informal version of Corollary~\ref{crl:main}]\label{thm:main_intro}
    Let $w_1(A^j),\dots,w_n(A^j)$ denote the Lewis weights of $A^j$ and $T = \sum_{i=1}^{n} \max_{j\in [k]} w_i(A^j)$. Suppose that $T = \poly(d)$. There exists a randomized algorithm which samples
    \[
    m\lesssim
            \eps^{-4} T d^{\max\{\frac{p}{2}-1, 0\}}  \log^2 d \log(d/\eps)
    \]
    unlabeled instances and outputs solutions $\Tilde{\bm{\theta}}^1, \dots,\ \Tilde{\bm{\theta}}^k \in \R^d$ such that \eqref{eq:probobj} holds for $p\geq 1$ and all $j\in[k]$ with probability at least 0.9.
\end{theorem}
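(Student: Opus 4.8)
The plan is to fix a single sampling matrix $S$ obtained by importance sampling according to the maximum Lewis weights, i.e.\ sampling row $i$ with probability proportional to $t_i \gtrsim \max_{j\in[k]} w_i(A^j)$ and reweighting so that $\norm{S\bm{v}}_p^p = \sum_{i} \frac{1}{m p_i}\abs{\bm{v}_i}^p$ (sum over the sampled rows) is an unbiased estimator of $\norm{\bm{v}}_p^p$. I would then reduce the theorem to two deterministic consequences of this random $S$, shown to hold simultaneously with high probability: (i) $S$ is a $(1\pm\eps)$ $\ell_p$ subspace embedding for \emph{every} $A^j$ at once, and (ii) a uniform nonlinear-loss preservation statement asserting that for each $j$ and all $\bm{\theta}$ in the feasible set $E^j$,
\[
\norm{Sf(A^j\bm{\theta}) - S\bm{y}}_p^p = (1\pm\eps)\norm{f(A^j\bm{\theta}) - \bm{y}}_p^p \pm \eps L^p \norm{A^j\bm{\theta}}_p^p .
\]
Granting (i) and (ii), the output $\tilde{\bm{\theta}}^j = \argmin_{\bm{\theta}\in E^j}\norm{Sf(A^j\bm{\theta})-S\bm{y}}_p^p$ for each model is handled by a standard sandwiching argument, so the real work is in establishing (i) and (ii) while tracking the sample count.

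For (i), the crucial point is that sampling by the coordinatewise maximum dominates each individual Lewis-weight distribution: since $t_i\gtrsim w_i(A^j)$ for every $j$, the single-matrix $\ell_p$ Lewis-weight embedding guarantee of~\cite{CP2015} applies verbatim to each $A^j$, and a union bound over $j\in[k]$ yields a simultaneous embedding. The sample count is proportional to $\sum_i t_i \sim T$ times the per-matrix $\ell_p$ overhead, which is $\tilde{O}(\eps^{-2})$ for $p\le 2$ and carries an extra $d^{p/2-1}$ factor for $p>2$; this is exactly the source of the $T\, d^{\max\{p/2-1,0\}}$ term.

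The main obstacle is (ii), because $f$ is nonlinear and so the subspace embedding cannot be applied to $f(A^j\bm{\theta})$ directly. I would first use $f(0)=0$ together with the $L$-Lipschitz property to pin the signal down coordinatewise, $\abs{f((A^j\bm{\theta})_i)}\le L\abs{(A^j\bm{\theta})_i}$, which both explains the additive $\eps L^p\norm{A^j\bm{\theta}}_p^p$ slack and lets me bound the per-row contribution of the reweighted estimator by the Lewis-weight sensitivities. Writing the deviation as a sum of independent mean-zero terms over the sampled rows, I would control it \emph{uniformly} over the $d$-dimensional family $\{A^j\bm{\theta} : \bm{\theta}\in E^j\}$ by symmetrization followed by a chaining/Dudley argument, using the embedding from (i) to convert the empirical metric on this family into the ambient $\ell_p$ metric and thereby bound its metric entropy by $\poly(d)$. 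I expect the compounding of the embedding accuracy with the finer accuracy needed to push the nonlinear fluctuation below the $\eps$ threshold (the feasible radius in $E^j$ itself scales with a negative power of $\eps$) to be what produces the $\eps^{-4}$ dependence, while the net and union-bound overhead contributes the $\log^2 d\log(d/\eps)$ factors.

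Finally I would assemble the per-model guarantee by a two-case split on the size of $\norm{A^j\bm{\theta}^j}_p^p$ relative to $\norm{\bm{y}}_p^p/(\eps L^p)$. When $\bm{\theta}^j$ is (essentially) feasible, which holds up to the embedding constants relating the sampled constraint $E^j$ to its population analogue, feasibility of $\bm{\theta}^j$ plus optimality of $\tilde{\bm{\theta}}^j$ on the sampled objective, sandwiched through (ii) and the embedding, gives \eqref{eq:probobj} directly. When $\norm{A^j\bm{\theta}^j}_p^p$ is large, the additive term $\eps L^p\norm{A^j\bm{\theta}^j}_p^p$ already exceeds $\norm{\bm{y}}_p^p\ge\norm{f(A^j\bm{\theta}^j)-\bm{y}}_p^p$, and since $\bm{0}\in E^j$ the minimizer $\tilde{\bm{\theta}}^j$ does no worse than $\bm{0}$, whose true loss is $\norm{\bm{y}}_p^p$; either way the right-hand side of \eqref{eq:probobj} dominates. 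A union bound over these high-probability events and over $j\in[k]$ then yields the claim with probability at least $0.9$.
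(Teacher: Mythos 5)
Your high-level architecture (sample by the maximum Lewis weights, reduce to a per-model statement via a subspace-embedding property plus a uniform concentration bound proved by symmetrization and Dudley chaining, then a feasibility case split using $\bm{0}\in E^j$) parallels the paper's, but your key step (ii) is false as stated, and this is a genuine gap rather than a technicality. The sampling distribution depends only on the Lewis weights of the $A^j$'s and is oblivious to $\bm{y}$, so no bound of the form $\norm{Sf(A^j\bm{\theta})-S\bm{y}}_p^p=(1\pm\eps)\norm{f(A^j\bm{\theta})-\bm{y}}_p^p\pm\eps L^p\norm{A^j\bm{\theta}}_p^p$ can hold with high probability uniformly over $E^j$ --- it already fails at a single point. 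Concretely, take $\bm{\theta}=\bm{0}\in E^j$ (so $f(A^j\bm{\theta})=0$ and your additive slack vanishes): the claim degenerates to $\norm{S\bm{y}}_p^p=(1\pm\eps)\norm{\bm{y}}_p^p$, which fails whenever $\bm{y}$ is concentrated on a row of small sampling probability, since that row is then almost never sampled (e.g.\ uniform Lewis weights $w_i=d/n$, $\bm{y}=\bm{e}_1$, $m\ll n$: with probability $1-O(m/n)$ one has $S\bm{y}=0$). The same example breaks the two-sided multiplicative part at every $\bm{\theta}$: the deviation is of order $\abs{y_1}^p$, which is controlled neither by $\eps\norm{f(A^j\bm{\theta})-\bm{y}}_p^p$ nor by $\eps L^p\norm{A^j\bm{\theta}}_p^p$. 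This is precisely why the guarantee \eqref{eq:probobj} carries a constant factor $C$ rather than $1+\eps$; your (ii) is implicitly strong enough to yield a near-$(1+\eps)$ guarantee, which is too strong for label-oblivious sampling.

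The repair is the route the paper takes. The uniform chaining statement must be restricted to vectors that the Lewis weights actually control, namely differences $f(A\bm{\theta}_1)-f(A\bm{\theta}_2)$ with $\norm{SA\bm{\theta}_i}_p\le R$, whose coordinates are bounded by $L\abs{\bm{a}_i^\top(\bm{\theta}_1-\bm{\theta}_2)}$ by the Lipschitz property; this is Lemma~\ref{lem:key}, proved by symmetrization and Dudley's integral after conditioning on a \emph{constant-factor} subspace embedding (not a $(1\pm\eps)$ one --- note that for $p>2$ a $(1\pm\eps)$ embedding with $\eps^{-2}$ dependence is not available ``verbatim'' from \cite{CP2015}; the prior bound had $\eps^{-5}$ dependence, and the improvement is itself a contribution of this paper) and on the Lewis weights of $SA$ being uniformly small (Lemma~\ref{lem:bounded-LS}). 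All terms involving $\bm{y}$ are then handled only at \emph{fixed} vectors via Markov's inequality ($\norm{S\bm{y}}_p^p\le 100\norm{\bm{y}}_p^p$ and $\norm{S(f(A\bm{\theta}^\ast)-\bm{y})}_p^p\lesssim \mathrm{OPT}^p$ with constant probability), and everything is reassembled with $(a+b)^p\le 2^{\abs{p-1}}(a^p+b^p)$ together with the feasibility dichotomy you describe, which is \cite[Claim~1]{CHC}. Your closing case analysis and your account of the $\eps^{-4}$ (the radius $R^p\sim\eps^{-1}\norm{\bm{y}}_p^p/L^p$ forces the concentration to be invoked at accuracy $\eps^2$) are consistent with the paper, but without replacing (ii) by the difference form the proof does not go through.
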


Note that for a single matrix $A\in\R^{n\times d}$, the sum $T = \sum_i w_i(A) = d$ and so Theorem~\ref{thm:main_intro} implies a sample complexity of $\tilde{O}(d^{\max\{p/2,1\}}/\eps^4)$, 
recovering the result in \cite{GXHM} for $p=2$ (\footnote{An updated version of \cite{GXHM} improves the query complexity to $\tilde{O}(d/\eps^2)$~\cite{COLT24}, while the analysis remains specific to $p=2$. We would like to note that the technique of $\eps$-dependence improvement can similarly be  employed here, yielding $\tilde{O}(d^{\max\{p/2,1\}}/\eps^2)$ queries for general $p\geq 1$. We leave the details of the proof to future work.}).

\paragraph{Empirical Findings.} Extensive experiments are conducted on 11 classification and regression benchmarks with 50 distinct deep models.
In Section~\ref{sec:empobs}, we empirically observe that the sum of the maximum leverage scores grows very slowly as the number of models increases. This result reveals the strong correlation among the leverage scores of different deep representations, providing a direction for interpreting deep representation learning \cite{kornblith2019similarity,nguyen2020wide}. 
In Section~\ref{sec:experiment}, we validate the effectiveness of our method with fine-tuning and vanilla learning scenarios of deep models for both the $\ell_2$-regression loss and cross-entropy loss. The results show that our method outperforms other one-shot baselines. Even when comparing with the state-of-the-art iterative AL methods for multiple models, our approach achieves competitive performance.

\section{Related Work}

Active learning has been extensively studied in the past decades~\cite{S10,ren2021survey}. With a limited query budget, many methods try to query the labels of the most useful instances for a target model 
by designing effective selection criteria, which commonly depend on two notions, informativeness and representativeness. 
Informativeness-based criteria prefer instances where the target model has a highly uncertain prediction \cite{LewisG94,YanH18,kirsch2019batchbald}, while representativeness-based criteria prefer instances which can help reduce the distribution gap between the queried instances and the entire dataset \cite{dasgupta2008hierarchical,RZWISJ12,sener2018active}.
While most existing methods focus on improving the performance of a specific target model, Tang and Huang \cite{tang2022active} extend the setting of AL to multiple target models. In this scenario, the active learner seeks to enhance the performance of every target model simultaneously by selective querying.
% diverse networks with different complexities are learned from a shared labeled set to accommodate various resource-constraint devices.
Their work demonstrates that the query complexity of AL for multiple models can be upper bounded by that of an appropriately designed single model. Based on this insight, they propose an iterative algorithm called DIAM, which queries the labels of the instances located in the joint disagreement regions among multiple models.
Although the method is effective, a significant concern is the substantial cost incurred by training multiple deep models at each iteration.

To reduce the computational cost of repetitive model training, one-shot AL algorithms have been proposed to query all useful instances in a single batch, thereby avoiding the need for model updates.
Yang and Loog~\cite{yang2019single} employ existing AL methods with pseudo-labeling to obtain a candidate set of diverse instances and select queries based on the feature distance between unlabeled instances and candidate instances. Viering et al.~\cite{viering2019nuclear} select representative data points by the kernelized discrepancy methods, e.g., Maximum Mean Discrepancy~(MMD)~\cite{BorgwardtGRKSS06}, and give error bounds under different assumptions on data distribution. Jin et al. \cite{jin2022one} propose a one-shot AL method for deep image segmentation. Their approach uses self-supervised learning to obtain more informative representations and selects diverse instances based on clustering results and feature distances. 
In addition, Coreset~\cite{sener2018active} and Transductive Experimental Design~\cite{yu2006active} are implicit one-shot AL methods.
However, all the aforementioned one-shot AL methods cannot handle the distinct representations of multiple deep models.

Although most existing AL methods rely on heuristics lacking theoretical analysis, 
AL with Lewis weight sampling has been well studied for active $\ell_p$-regression problems $\min_{\bm{\theta}} \norm{A\bm{\theta}-\bm{y}}_p$, where the matrix $A\in\R^{n\times d}$ is fully accessible while the label vector $\bm{y}\in\R^n$ needs to be queried \cite{chen2019active,chen2021query,parulekar2021l1,CLS2022,MMWY}. Provable guarantees are obtained for $(1+\eps)$-approximate solutions, i.e., $\norm{A\bm{\theta}'-\bm{y}}_p \leq (1+\eps)\norm{A\bm{\theta}^\ast-\bm{y}}_p$, where $\bm{\theta}'$ is the output of the algorithm and $\bm{\theta}^\ast$ the true minimizer.
For $p=1$, Parulekar et al.~\cite{parulekar2021l1} show that $O(\eps^{-2}d\log(d/(\eps\delta)))$ samples suffice. For $p=2$, Chen and Price \cite{chen2019active} solve the problem optimally with $O(d/\eps)$ queries. For $p\in (1, 2)$, Chen and Derezinski \cite{chen2021query} propose the first algorithm to solve the problem with sublinear query complexity, i.e., $O(\eps^{-2}d^2\log d)$. For $p>2$, Musco et al.~\cite{MMWY} show that $O(\eps^{-p} d^{p / 2} \log^2 d \log^{p-1}(d/\eps))$ queries suffice. Recently, Gajjar et al.~\cite{CHC} extend such sampling method to the single neuron model for $p=2$, which inspires our work. They establish a multiplicative constant-factor error bound of the form \eqref{eq:probobj} using $O(d^2/\eps^4)$ samples. This has been further improved to $O(d/\eps^4)$ in \cite{GXHM}. 
\section{Our Approach}

\subsection{Preliminaries} \label{sec:tech_prelim}
\paragraph{Notation. } Suppose that the dataset has $n$ instances $\bm\alpha_1,\dots,\bm\alpha_n$ and each $\bm\alpha_i$ has a ground-truth label $y_i$. The given data consist of a small labeled set $\cL=\{(\bm \alpha_i, y_i)\}_{i=1}^{n_l}$, used for model initialization, and a large unlabeled set $\cU=\{\bm \alpha_{n_l+i}\}_{i=1}^{n_u}$, used for active querying. Here, $n=n_l + n_u$ and it is assumed that $n_l \ll n_u$.
A neural network can be viewed as the composition of a network backbone and a linear prediction layer $\bm{\theta} \in \R^d$ composed by an activation function $f(\cdot)$.
The prediction of the network is given by $f(A\bm{\theta})$, where $A\in \R^{n\times d}$ is the feature matrix obtained by feeding the dataset into the network backbone. Denote by $\bm{y} \in \R^n$ the corresponding label vector that needs to be queried. 
In our theoretical analysis, we assume that $d\ll n$, $A$ has full column rank, the network backbone is  fixed during the learning of $\bm{\theta}$ and $f$ is $L$-Lipschitz continuous with $f(0) = 0$.

For $p\geq 1$, the $\ell_p$ norm of a vector $\bm{\theta}$ is defined to be $\norm{\bm{\theta}}_p = (\sum_{i=1}^{n}\abs{\bm{\theta}_i}^p)^{\frac 1p}$, where $\bm{\theta}_i$ is the $i$-th coordinate of $\bm{\theta}$.

For a matrix $A$, the operator norm of $A$ is defined as $\norm{A}_2= \sup_{\bm{\theta}\in\R^d\setminus\{0\}} \norm{A\bm{\theta}}_2/\norm{\bm{\theta}}_2$.
For integer $n\geq 1$, we use $[n]$ to denote the set $\{1, 2, \ldots, n\}$. We write $a=(1\pm \eps)b$ if $(1-\eps)b \leq a \leq (1+\eps)b$ and $a\lesssim_{t_1,t_2,\dots} b$ if there exists a constant $C$ depending only on $t_1,t_2,\dots$ such that $a\leq Cb$. We also write $a\sim_{t_1,t_2,\dots} b$ if $a\lesssim_{t_1,t_2,\dots} b$ and $b\lesssim_{t_1,t_2,\dots} a$.

\paragraph{Lewis Weights Sampling. } We shall define the Lewis weights and state a classical result that Lewis weight sampling gives subspace embeddings, which is the starting point of our algorithm.

\begin{definition}[$\ell_p$ Lewis Weights]\label{def:lewis}
    Let $p>0$. Suppose that $A\in\R^{n\times d}$ and its $i$-th row is $\bm{a}_i\in\R^d$. The Lewis weights of $A$ are $w_1,\dots,w_n$ such that $w_{i} = (\bm{a}_i^\top (A^\top W^{1-\frac 2p} A)^{-1} \bm{a}_i)^{\frac{p}{2}}$, where $W$ is a diagonal matrix with diagonal elements $w_1, w_2, \ldots, w_n$.
\end{definition}

We remark that Lewis weights satisfy that $w_i(A) \in [0,1]$ and $\sum_{i=1}^{n} w_i(A) = d$. When $p=2$, Lewis weights are exactly the leverage scores. 
Next, we define $\ell_p$ subspace embedding and sampling matrix. Then, we state the result that Lewis weight sampling gives subspace embeddings. 
\begin{definition}[$\ell_p$ Subspace Embedding]\label{def:SE}
    Let $p>0$ and $\eps \in (0,1)$ be the distortion parameter. A matrix $S \in \R^{m\times n}$ is said to be an $\ell_p$ $\eps$-subspace-embedding matrix for $A\in \R^{n \times d}$ if it holds simultaneously for all vectors $\bm{\theta}\in\R^d$ that $(1-\eps)\norm{A\bm{\theta}}_p \leq \norm{SA\bm{\theta}}_p \leq (1+\eps)\norm{A\bm{\theta}}_p$.
\end{definition}

\begin{definition}[Sampling Matrix]\label{def:sampling matrix}
	Let $p>0$. Suppose that $p_1,\dots,p_n\geq 0$ such that $p_1+p_2+\dots+p_n=1$ and $\bm{e}_1,\dots,\bm{e}_n$ are the standard basis vectors of $\R^n$.
	A matrix $S \in \R^{m\times n}$ is called a reweighted sampling matrix if the rows of $S$ are i.i.d.\ copies of random vector $X$, where $X = (m p_j)^{-1/p} \bm{e}_j^T$ with probability $p_j$, $j=1,\dots,n$. The number $m$ of rows in $S$ is called the sample size.
\end{definition}

\begin{lemma}[Constant-factor Subspace Embedding, {\cite[Theorem 7.1]{CP2015}}]\label{lem:constant-SE}
    Given $A \in \R^{n \times d}$. Suppose that $t_i \geq \beta w_i$ for all $i\in[n]$, where 
    \[\beta \gtrsim_p
    \begin{cases}
        \log^3 d + \log \frac{1}{\delta}, & 0<p<2, p\neq 1 \\
        \log \frac{d}{\delta}, & p=1,2 \\
        d^{\frac{p}{2}-1}(\log d + \log\frac{1}{\delta}) & 2<p<\infty
    \end{cases}
    \]
    is a sampling parameter.
    Let $m = \sum_{i=1}^{n} t_i$. If $S \in \R^{m \times n}$ is a reweighted sampling matrix with sampling probability $p_i = \frac{t_i}{m}$ for all $i$, then $S$ is an $\ell_p$ $\frac{1}{2}$-subspace-embedding matrix for $A$ with probability at least $1-\delta$.  
\end{lemma}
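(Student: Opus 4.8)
The plan is to reduce the subspace-embedding guarantee of Definition~\ref{def:SE} to a concentration inequality for a single fixed vector and then to make that inequality uniform over the column space of $A$. By homogeneity of $\norm{\cdot}_p$ it suffices to show that $\norm{S\bm y}_p^p$ stays within a fixed constant factor of $\norm{\bm y}_p^p$ for every $\bm y$ on the $\ell_p$-unit sphere of the $d$-dimensional subspace $\{A\bm\theta : \bm\theta\in\R^d\}$. For the reweighted sampling matrix of Definition~\ref{def:sampling matrix} with $p_i = t_i/m$, one has $\norm{S\bm y}_p^p = \sum_{r=1}^m Z_r$, where $Z_1,\dots,Z_m$ are i.i.d.\ and each $Z_r$ equals $\abs{y_i}^p/t_i$ with probability $p_i$. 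Hence $\E\norm{S\bm y}_p^p = \norm{\bm y}_p^p$, and the whole task reduces to controlling the fluctuation of a sum of bounded, nonnegative i.i.d.\ variables, uniformly in $\bm y$.

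The first quantitative ingredient is a per-coordinate influence bound read off from the definition of the Lewis weights. Writing $M = A^\top W^{1-\frac2p}A$, Definition~\ref{def:lewis} states exactly that $\bm a_i^\top M^{-1}\bm a_i = w_i^{2/p}$; equivalently, the $w_i$ are the $\ell_2$ leverage scores of the reweighted matrix $\tilde A = W^{\frac12-\frac1p}A$. Cauchy--Schwarz then gives, for $\bm y = A\bm\theta$, the bound $\abs{y_i}^p \le w_i\,\norm{\bm\theta}_M^p$ with $\norm{\bm\theta}_M^2 = \sum_i w_i^{1-\frac2p}\abs{y_i}^2$. For $p\ge 2$, applying H\"older with exponents $p/2$ and $p/(p-2)$ together with $\sum_i w_i = d$ turns this into $\abs{y_i}^p \lesssim_p w_i\, d^{p/2-1}\norm{\bm y}_p^p$, while for $p\le 2$ the corresponding flatness property of Lewis weights yields $\abs{y_i}^p \lesssim_p w_i\norm{\bm y}_p^p$ with no dimension factor. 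Combined with $t_i \ge \beta w_i$, this gives the uniform summand bound $Z_r \le \beta^{-1}\,d^{\max\{p/2-1,0\}}\norm{\bm y}_p^p$. This is precisely why $\beta$ must absorb a $d^{p/2-1}$ factor when $p>2$: the maximal per-sample contribution, and hence the variance proxy of $\sum_r Z_r$, must be driven below a constant.

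With the bounded-summand estimate in hand, a Bernstein/Chernoff bound for $\sum_{r=1}^m Z_r$ gives, for each fixed $\bm y$, a pointwise deviation estimate of the form $\Pr[\abs{\norm{S\bm y}_p^p - \norm{\bm y}_p^p} > \tfrac12\norm{\bm y}_p^p] \le \exp\big(-c\,\beta\,d^{-\max\{\frac p2-1,\,0\}}\big)$. The remaining, and hardest, step is to upgrade this to a statement holding simultaneously for all $\bm y$ on the unit sphere of the subspace. When $p=2$ this is immediate and clean: the guarantee is equivalent to $\norm{U^\top S^\top S\,U - I}_2 \le \tfrac12$ for an orthonormal basis $U$ of the column space, and the matrix Chernoff inequality controls all directions at once, which is what makes $\beta\gtrsim\log(d/\delta)$ sufficient. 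For $p\ne 2$ there is no such single-shot matrix formulation, and a naive $\eps$-net over the $\ell_p$-unit ball of the subspace would contain $e^{\Theta(d)}$ points, forcing $\beta\gtrsim d$, which is far too large.

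The main obstacle is therefore the uniform bound for $p\ne 2$, and I would resolve it with a chaining argument (Dudley/Talagrand) that bounds the expected supremum of the empirical process $\bm y \mapsto \norm{S\bm y}_p^p - \norm{\bm y}_p^p$ via entropy integrals over the subspace $\ell_p$-ball, rather than a single union bound. This is exactly where the polylogarithmic factors of the statement originate: the generic $0<p<2$, $p\ne1$ case incurs $\log^3 d$ from the iterated chaining, whereas the special structure of $\ell_1$ and the controlled $d^{p/2-1}$ moment growth for $p>2$ produce the cleaner expressions quoted for those ranges. Assembling the pointwise Chernoff estimate, the Lewis-weight influence bound, and this chaining-based supremum bound, and then substituting the stated choice of $\beta$, yields the $\tfrac12$-subspace embedding with probability at least $1-\delta$, completing the proof.
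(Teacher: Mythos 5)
You should first note a structural point: the paper never proves this lemma at all --- it is imported verbatim as Theorem 7.1 of Cohen--Peng \cite{CP2015}, with the $p>2$ case originally due to Bourgain, Lindenstrauss and Milman \cite{BLM89}. So there is no internal proof to compare against; the comparison must be with the cited literature. Measured against that, your outline reconstructs the correct architecture, and the parts you actually execute are right: the identity $\E\norm{S\bm{y}}_p^p = \norm{\bm{y}}_p^p$ for the reweighted sampling matrix, the observation that the $\ell_p$ Lewis weights are exactly the leverage scores of $W^{1/2-1/p}A$, the Cauchy--Schwarz/H\"older flatness bound $\abs{y_i}^p \le w_i\, d^{\max\{p/2-1,0\}}\norm{\bm{y}}_p^p$ for $\bm{y}$ in the column space (with no dimension factor when $p\le 2$), the resulting per-summand bound $Z_r \le \beta^{-1} d^{\max\{p/2-1,0\}}\norm{\bm{y}}_p^p$, the pointwise Bernstein estimate, the matrix-Chernoff shortcut at $p=2$, and the correct diagnosis that a naive $\eps$-net union bound costs an unacceptable factor of $d$.

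The gap is that the step carrying essentially all of the theorem's content --- uniformity over the subspace when $p\ne 2$ --- is invoked rather than proved. Writing that you ``would resolve it with a chaining argument (Dudley/Talagrand) via entropy integrals'' names the right tool but does none of the work: the entropy estimates for the $\ell_p$-ball of a subspace under the relevant pseudo-metric (\cite[Propositions 15.18 and 15.19]{LT91}, and for $0<p<2$ the iterated-chaining argument of Talagrand on which Cohen--Peng rely) are precisely what produce the quantitative form of the statement --- $\log^3 d$ for $0<p<2$, $p\ne 1$, the cleaner $\log(d/\delta)$ at $p=1$, and $d^{p/2-1}(\log d + \log(1/\delta))$ for $p>2$. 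None of these exponents can be derived from your sketch; they are asserted to ``originate'' from chaining, which is true but is the theorem, not a proof of it. There is also a smaller assembly issue: a pointwise Bernstein bound plus a bound on the \emph{expected} supremum do not by themselves yield failure probability $\delta$; one needs either high-moment bounds on the supremum (this is how the present paper handles the analogous issue in its own Lemma~\ref{lem:key}, taking $\ell=\log(1/\delta)$ moments) or a tail/concentration version of Dudley's inequality, and your proposal does not say which. As a roadmap your write-up is faithful to the cited proofs; as a standalone proof it defers exactly the hard part.
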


We note that our main theorem only requires constant-factor subspace embedding property of the sampling matrix $S$ and, therefore, we can ignore the dependence on $\eps$ in the bounds for $\ell_p$ subspace embeddings. The case of $p\leq 2$ is proved by Cohen and Peng~\cite{CP2015} and the case of $p>2$ is originally due to Bourgain et al.~\cite{BLM89}.

The following are stability results of Lewis weights, due to \cite{CP2015}.

\begin{lemma}[Lemmata 5.3 and 5.4 of \cite{CP2015}]\label{lem:good_Lewis_initial_strong}
	Suppose $A\in \R^{n\times d}$ and $\overline{w_1},\dots,\overline{w_n}$ are the Lewis weights of $A$. Let $w_1,\dots,w_n$ be weights such that
	\[
	\frac{1}{\alpha} w_i^{2/p} \leq a_i^\top \left(\sum_i w_i^{1-2/p} a_i a_i^\top\right)^{-1} a_i \leq \alpha w_i^{2/p},\quad \forall i=1,\dots,n.
	\]
	Then it holds for all $i$ that 
	\[
	\alpha^{-c(p,d)} w_i \leq \overline{w_i} \leq \alpha^{c(p,d)} w_i,
	\]
	where $c(p,d) = (p/2)/(1-\abs{p/2 - 1})$ when $p < 4$ or $c(p,d) \sim_p \sqrt{d}$ when $p\geq 4$.
\end{lemma}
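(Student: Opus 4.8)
The plan is to recast the statement as a stability (perturbation) bound for the fixed point of the Lewis-weight map, measured in the multiplicative metric $\rho_\infty(u,v) = \max_i \abs{\log(u_i/v_i)}$ on positive weight vectors. Define the map $\Phi$ by $\Phi_i(w) = (a_i^\top M(w)^{-1} a_i)^{p/2}$, where $M(w) = \sum_j w_j^{1-2/p} a_j a_j^\top$. By Definition~\ref{def:lewis} the true Lewis weights $\overline w$ are exactly the fixed point of $\Phi$, i.e.\ $\Phi(\overline w) = \overline w$. The hypothesis says precisely that $w$ is an \emph{approximate} fixed point: since $a_i^\top M(w)^{-1}a_i \in [\alpha^{-1} w_i^{2/p}, \alpha w_i^{2/p}]$, raising to the power $p/2$ gives $\Phi_i(w) \in [\alpha^{-p/2} w_i, \alpha^{p/2} w_i]$, that is $\rho_\infty(w, \Phi(w)) \le (p/2)\log\alpha$. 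The goal $\alpha^{-c} w_i \le \overline w_i \le \alpha^c w_i$ is exactly $\rho_\infty(w, \overline w) \le c(p,d)\log\alpha$. So it suffices to control how far $w$ can lie from the fixed point $\overline w$, given that $w$ barely moves under $\Phi$.

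The heart of the argument is a contraction estimate for $\Phi$ in $\rho_\infty$, valid when $p<4$. Suppose $\rho_\infty(u,v)\le\delta$, i.e.\ $e^{-\delta} v_i \le u_i \le e^{\delta} v_i$ for all $i$. Raising to the power $s := 1 - 2/p$ and summing against the PSD matrices $a_i a_i^\top$ yields the Loewner sandwich $e^{-\abs{s}\delta} M(v) \preceq M(u) \preceq e^{\abs{s}\delta} M(v)$ (the sign of $s$ only flips which inequality is which, leaving the same sandwich with $\abs{s}$). Inverting reverses the order, and applying the resulting bound to the quadratic form $a_i^\top(\cdot)a_i$ gives $e^{-\abs{s}\delta} a_i^\top M(v)^{-1} a_i \le a_i^\top M(u)^{-1} a_i \le e^{\abs{s}\delta} a_i^\top M(v)^{-1} a_i$ for every $i$. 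Raising to the power $p/2$ and taking the maximum over $i$ gives $\rho_\infty(\Phi(u),\Phi(v)) \le \abs{s}\cdot (p/2)\cdot\delta = \abs{p/2 - 1}\,\delta$. Thus $\Phi$ is a $\rho$-contraction with factor $\rho = \abs{p/2-1}$, which is strictly less than $1$ precisely when $0<p<4$.

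With the contraction in hand the bound follows from the standard fixed-point estimate. Writing $D = \rho_\infty(w,\overline w)$ and using $\Phi(\overline w)=\overline w$,
\[
D \le \rho_\infty(w,\Phi(w)) + \rho_\infty(\Phi(w),\Phi(\overline w)) \le \tfrac{p}{2}\log\alpha + \rho\, D,
\]
so that $D \le \frac{(p/2)\log\alpha}{1-\abs{p/2-1}} = c(p,d)\log\alpha$, which is the claim for $p<4$ (and in particular yields $c=1$ for $0<p\le 2$).

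The main obstacle is the regime $p\ge 4$, where $\rho = p/2-1 \ge 1$, so $\Phi$ is no longer a contraction in $\rho_\infty$ and the elementary fixed-point argument collapses; the constant must then degrade to $c(p,d)\sim_p\sqrt d$. Recovering this weaker bound requires treating all coordinates jointly rather than independently, exploiting the two global constraints on the leverage scores $\tau_i(w) := w_i^{1-2/p}\, a_i^\top M(w)^{-1} a_i$ — namely $\tau_i(w)\in[0,1]$ and $\sum_i \tau_i(w) = d$. These let one replace the coordinatewise operator bound by a spectral/$\ell_2$ estimate over the at most $d$ effective directions, at the cost of an $\ell_2$-to-$\ell_\infty$ conversion factor of order $\sqrt d$. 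I expect this joint analysis, rather than the $p<4$ contraction, to be the technically demanding part, and I would follow the argument of Cohen and Peng to pin down the precise constant.
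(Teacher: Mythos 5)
First, a framing note: the paper never proves this lemma at all --- it is imported verbatim from Cohen and Peng \cite{CP2015} (their Lemmata 5.3 and 5.4) --- so the comparison is against their argument. Your treatment of the $p<4$ case is correct, complete, and essentially the same as theirs: the Loewner sandwich $e^{-\abs{s}\delta}M(v)\preceq M(u)\preceq e^{\abs{s}\delta}M(v)$ with $s=1-2/p$, order reversal under inversion, monotonicity of quadratic forms, and the final $p/2$ power give the Lipschitz constant $\abs{p/2-1}$ for $\Phi$ in the metric $\rho_\infty$, and the Banach-type fixed-point estimate then yields exactly $c(p)=(p/2)/(1-\abs{p/2-1})$. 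This contraction property is precisely what Cohen and Peng establish (it also underlies their iterative algorithm for computing Lewis weights when $p<4$). Two points you leave implicit are harmless but worth noting: rearranging $D\le(p/2)\log\alpha+\abs{p/2-1}\,D$ requires $D<\infty$, which holds because there are finitely many coordinates and all weights involved are strictly positive; and you invoke existence and uniqueness of the fixed point $\overline{w}$ (Lewis's theorem), which the statement of the lemma presupposes.

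The genuine gap is the regime $p\ge 4$, which is half of the statement. There you offer a direction, not a proof: you correctly diagnose that the contraction factor $p/2-1$ reaches $1$, you point at the global constraints $\tau_i(w)\in[0,1]$ and $\sum_i\tau_i(w)=d$, you guess that an $\ell_2$-to-$\ell_\infty$ conversion should be the source of the $\sqrt{d}$, and then you say you would ``follow the argument of Cohen and Peng'' --- that is, you cite the very result to be proved. No step of the proposed joint analysis is actually carried out: no quantity is defined to be bounded, no chain of inequalities relates $\max_i\abs{\log(\overline{w}_i/w_i)}$ to $\sqrt{d}\log\alpha$, and nothing rules out the deviation concentrating on a single coordinate, which is precisely the failure mode that the coordinatewise argument can no longer exclude once the Lipschitz factor is at least $1$. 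Since the $p\ge 4$ case requires a mechanism genuinely different from coordinatewise contraction (as you yourself observe, the fixed-point argument collapses there), and that mechanism is the entire technical content of Cohen and Peng's Lemma 5.4, your proposal as it stands establishes only the $p<4$ half of the lemma.
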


\subsection{An Empirical Observation}\label{sec:empobs}

To address the challenges of distinct representations from multiple deep models, one solution is to sample the unlabeled instances by their maximum Lewis weight (i.e. $\max_j w_i(A^j)$) among different representations.
Recall that the sample size is proportional to $\sum_i \max_j w_i(A^j)$, this strategy will not save the number of queries if this sum is $k$ times larger than that of a single regression problem.
Therefore, we would like first to examine the sum of maximum Lewis weights across representations as it will determine the potential query savings.
In the following empirical studies, we mainly consider the case of $p=2$ (i.e., squared loss), where the Lewis weight becomes exactly the leverage score.	

\begin{table}[tb]
\centering
  \caption{The specifications of the datasets used in the experiments.} \label{tab:datasets}
    \begin{tabular}{l| c| c| c| l}
    \toprule
		    \hline
        Dataset & \#Training & \#Testing & \#Label & Task \\
        \hline
        MNIST \cite{lecun1998gradient} & 60,000 & 10,000 & 10 & Classification \\
        \hline
        Fashion-MNIST \cite{xiao2017/online} & 60,000 & 10,000 & 10 & Classification \\
        \hline
        Kuzushiji-MNIST \cite{clanuwat2018deep} & 60,000 & 10,000 & 10  & Classification \\
        \hline
        SVHN \cite{netzer2011reading}& 73,257 & 26,032 & 10& Classification \\
        \hline 
        EMNIST-digits \cite{CohenATS17} & 240,000 & 40,000 & 10  & Classification\\
        \hline 
        EMNIST-letters \cite{CohenATS17} & 88,800 & 14,800 & 26  & Classification\\
        \hline
        CIFAR-10 \cite{krizhevsky2009learning} & 50,000 & 10,000 & 10  & Classification\\
        \hline 
        CIFAR-100 \cite{krizhevsky2009learning} & 50,000 & 10,000 & 100  & Classification\\
        \hline 
         Biwi  \cite{fanelli2013random} & 10,317 & 5,361 & 2  & Regression \\
         \hline 
          FLD \cite{sun2013deep} & 13,466 & 249 & 10 & Regression  \\
          \hline 
         CelebA  \cite{liu2015faceattributes} & 162,770 & 19,962 & 10  & Regression \\
        \hline
        \bottomrule
    \end{tabular}
\end{table}

\begin{figure*}[!t]
	\centering
	\subfloat[MNIST]{
		\includegraphics[width=0.3\textwidth]{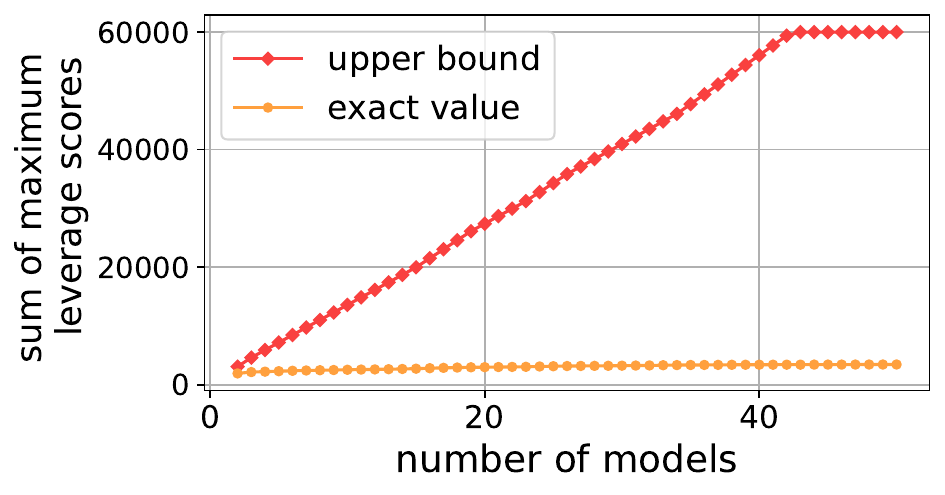}
	}
	\subfloat[Kuzushiji-MNIST]{
		\includegraphics[width=0.3\textwidth]{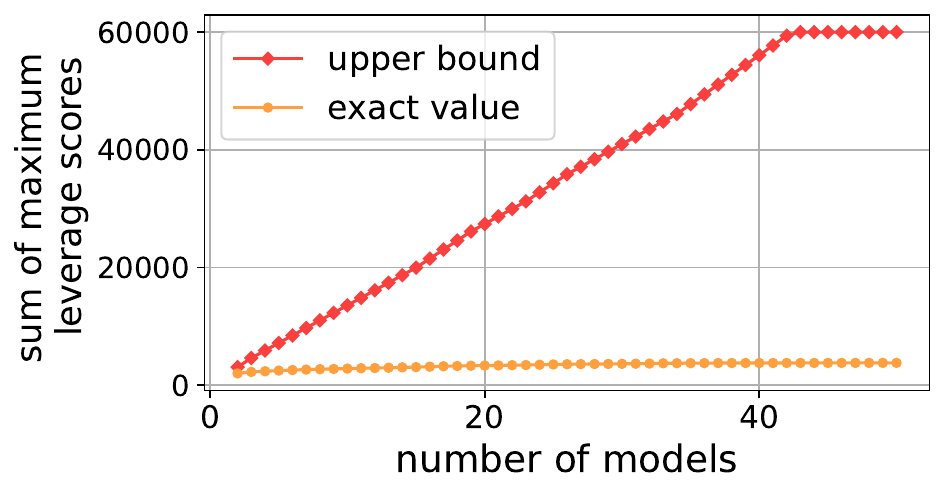}
	}
	\subfloat[Fashion-MNIST]{
		\includegraphics[width=0.3\textwidth]{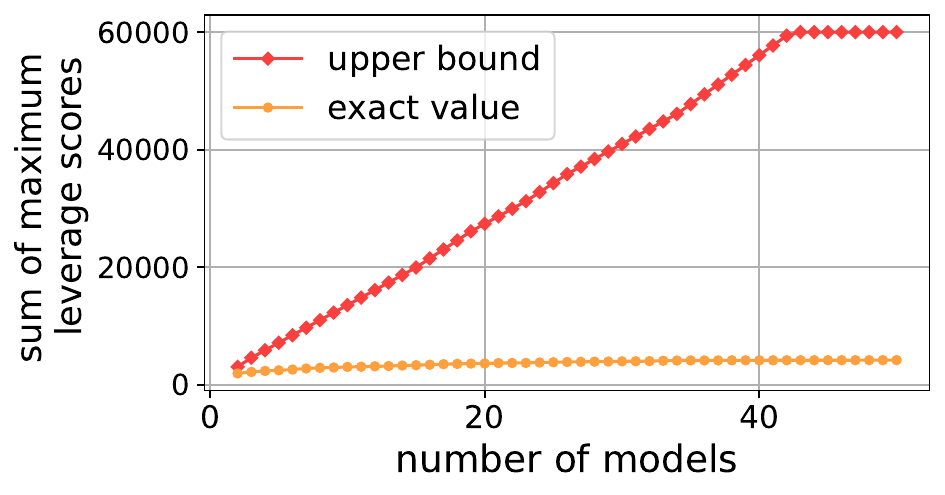}
	}\\
	\subfloat[EMNIST-letters]{
		\includegraphics[width=0.3\textwidth]{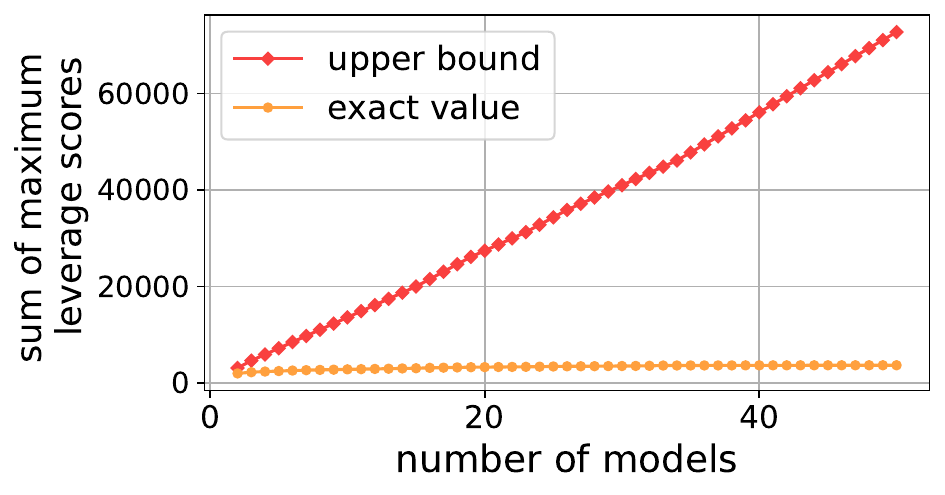}
	}
        \subfloat[EMNIST-digits]{
		\includegraphics[width=0.3\textwidth]{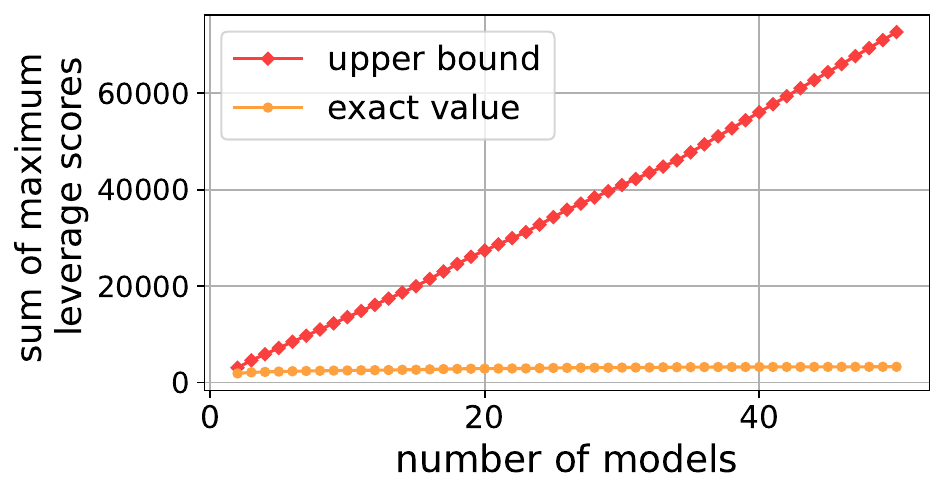}
	}
	\subfloat[SVHN]{
		\includegraphics[width=0.3\textwidth]{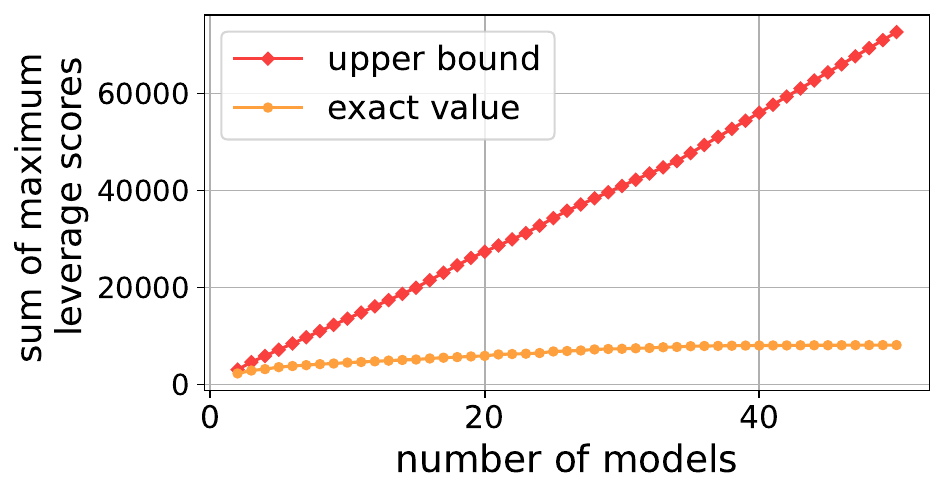}
	}\\
	\subfloat[CIFAR-10]{
		\includegraphics[width=0.3\textwidth]{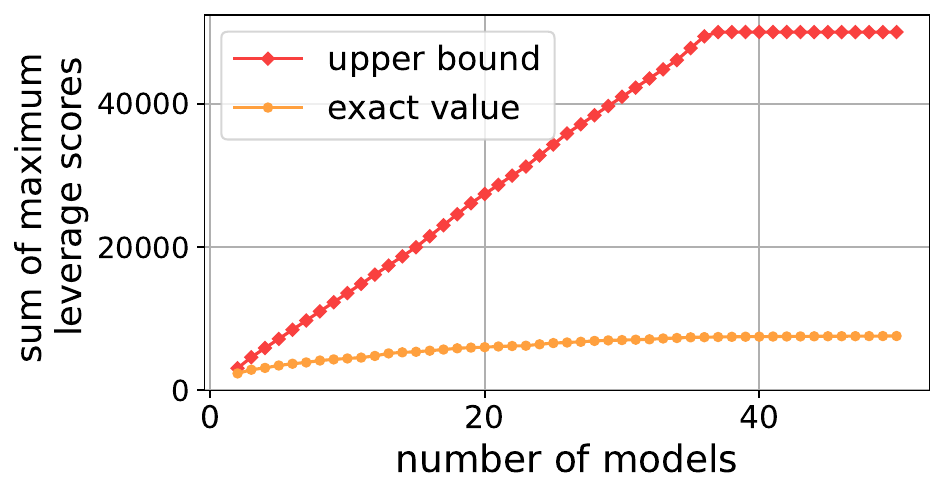}
	}
        \subfloat[CIFAR-100]{
		\includegraphics[width=0.3\textwidth]{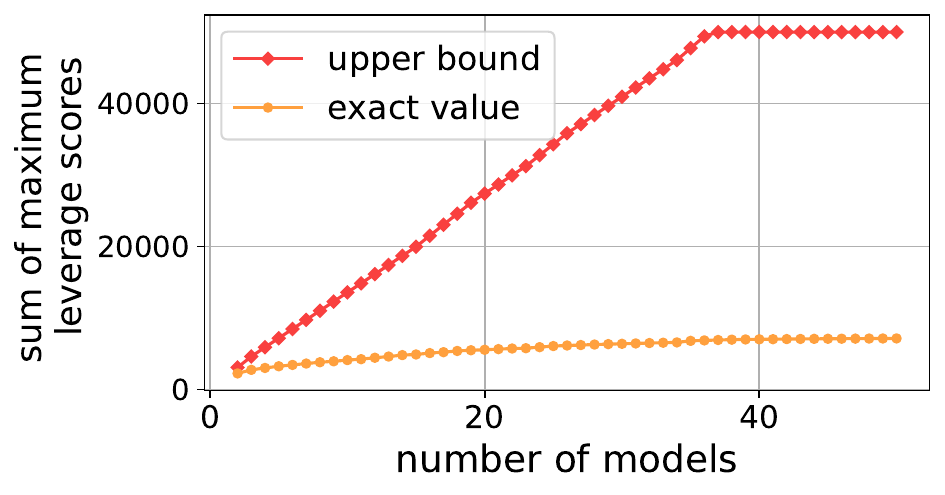}
	}
	\subfloat[Biwi]{
		\includegraphics[width=0.3\textwidth]{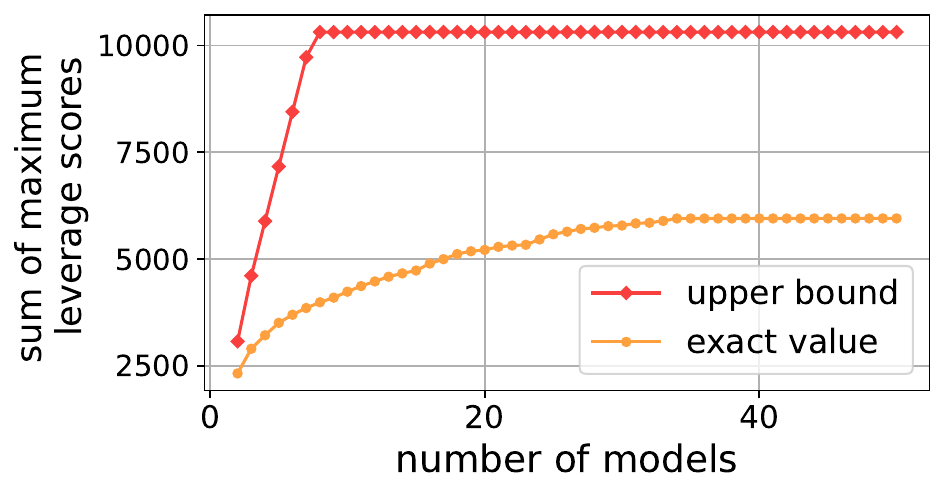}
	}\\
	\subfloat[FLD]{
		\includegraphics[width=0.3\textwidth]{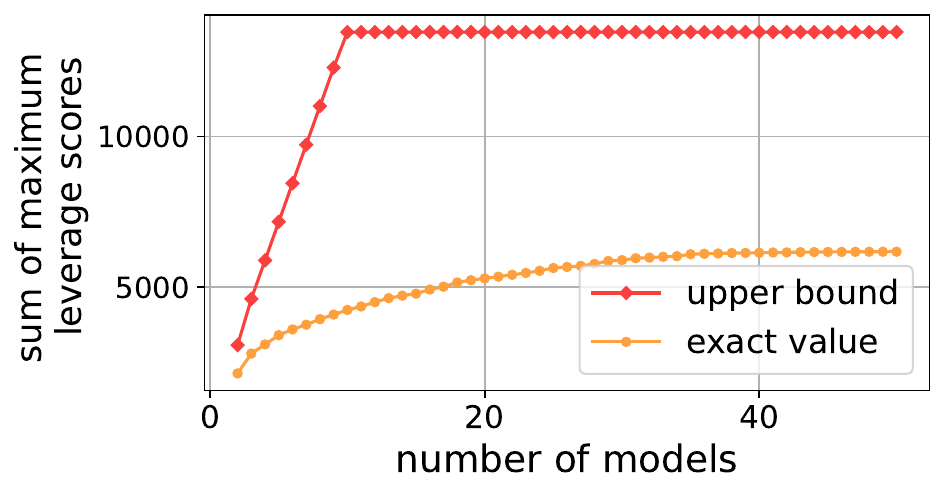}
	}
 	\subfloat[CelebA]{
		\includegraphics[width=0.3\textwidth]{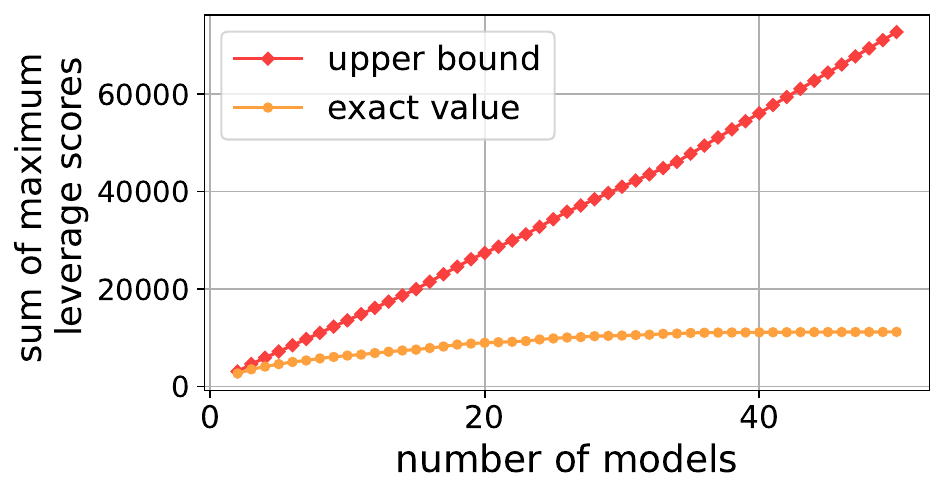}
	}
	\caption{The trends of the sum of the maximum Lewis weights with $p=2$ among multiple representations as the number of deep models increases.}\label{fig:observation}
\end{figure*}

\begin{table}[t]
    \centering
    \caption{Summary of the initial performances of 50 deep models on the classification datasets. We report the mean accuracy, standard deviation of the accuracies, maximum and minimum accuracy in the table.}\label{tab:model_ini_perf}
    \begin{tabular}{c|c|c|c|c}
    \toprule
        \hline
    Dataset & mean accuracy & std. deviation & maximum & minimum \\
    \hline
    MNIST & 94.79 &4.39 &97.61 &72.99 \\
    F.MNIST & 81.76 &3.03 &85.09 &67.32 \\
    K.MNIST & 74.79 &7.94 &85.61 &51.31 \\
    CIFAR-10 & 48.54 &3.36 &53.01 &36.92 \\
    CIFAR-100 & 11.34 &2.14 &16.85 &7.63 \\
    SVHN & 45.53 &7.62 &60.98 &31.88 \\
    EMNIST-l. & 70.04 &14.21 &83.23 &25.84 \\
    EMNIST-d. & 95.12 &3.64 &97.76 &76.13 \\
    \hline
    \bottomrule
    \end{tabular}

\end{table}

\paragraph{Empirical Settings. } We conduct experiments on 11 datasets, the details are summarized in Table~\ref{tab:datasets}.
For each dataset, we randomly sample 3000 instances to train the models with 20 epochs, then extract features for the whole dataset and calculate their leverage scores. We use the squared loss in model training and keep all other settings the same as the OFA project. We employ 50 distinct network architectures as the target models. These architectures are published by a recent NAS method OFA \cite{cai2019once} for accommodating diverse resource-constraint devices, ranging from NVIDIA Tesla V100 GPU to mobile devices. To demonstrate the distinctness of these models, we report the initial performances of the models on the classification datasets in Table~\ref{tab:model_ini_perf}. It can be observed that the model performances are significantly diverse. It aligns well with our problem setting. 
Figure~\ref{fig:observation} shows the theoretical upper bound and the exact values of the sum of the maximum leverage score of each instance across different representations.

\paragraph{Results. } We first plot the upper bound of the sum of the maximum leverage scores across multiple representations as the number of models increases. 
For matrices $A^1,\dots,A^k$ of $n$ rows, it clearly holds that $\sum_i \max_j w_i(A^j) \leq \min\{\sum_i\sum_j  w_i(A^j), \sum_i 1\} \leq \min\{\sum_i \rk(A^j), n\}$. This upper bound is plotted in red color, which grows almost linearly until it reaches the number of instances $n$.

We examine the exact values of the sum of maximum leverage scores across multiple representations. All figures show that the exact sum grows very slowly as the number of models increases. 
This suggests highly consistent discrimination power of most instances across different representations, as the leverage score when $p=2$ is exactly the leverage score, which measures how hard an instance can be linearly represented by others. Therefore, a small number of discriminating examples suffices to effectively train multiple models. leverage scores also provide a possible direction to interpret the behavior of deep representation learning, as prior works have not discovered any simple form of correlation among the diverse representations obtained by different model architectures
\cite{kornblith2019similarity,nguyen2020wide}.

\begin{algorithm}[tb]
	\caption{The Proposed Algorithm}
    \label{alg:max_lewis_weight}
    \begin{algorithmic}[1]
        \Require{Feature matrices of labeled and unlabeled instances $L^j, U^j$ ($j=1,\dots,k$), query budget $\tau $, error parameter $\epsilon$}
        \Ensure{Trained linear models $\tilde{\bm{\theta}}^1, \dots, \tilde{\bm{\theta}}^k$.}
        \INPUT {$\bm{p}, \bar{\bm{y}} \gets \text{zero vector of length } n_u$; $\cQ \gets$ an empty list; $m \gets 0$}
        % \State{$\cQ \gets$ an empty list}
        % \State{$m \gets 0$}
        \State{${p}_i \gets \max_{1 \leq j \leq k}{w_i(U^j)}$ for $i=1,\dots,n_u$}
        \State{${p}_i \gets {p}_i/\norm{\bm{p}}_1$ for $i=1,\dots,n_u$}
        \While{ $\cQ$ has fewer than $\tau$ distinct elements}\label{line:sample-start}
        \State{$q \gets$ sample a number from $[n_u]$ with replacement with probability ${p}_1, \dots, {p}_{n_u}$}
        \State{$m \gets m+1$}
        \State{append $q$ to $\cQ$}
        \If{the label of $q$-th unlabeled instance is unknown}
        \State{$\bar{{y}}_q \gets$ query the label of $q$-th unlabeled instance}
        \EndIf
        \EndWhile\label{line:sample-end}
        \State{$S \gets $ zero matrix with shape $(n_l+m) \times (n_l+n_u)$}
        \State $S_{i,i} \gets 1$ for $i=1, \dots, n_l$
        \State{$S_{i+n_l,\cQ_i+n_l} \gets (m\cdot p_{\cQ_i})^{-1/p}$ for $i=1,\dots, m$}
        
        \State{$\bm{y} \gets [y_1, \dots, y_{n_l}, \bar{\bm{y}}]^T$}
        
        \For{$j=1,...,k$}
        \State{$A^j \gets \big[\begin{smallmatrix} L^j \\ U^j \end{smallmatrix}\big]$}
        \State{$\tilde{\bm{\theta}}^j \gets \arg\min_{x\in E}\norm{Sf(A^j\bm{\theta})-S\bm{y}}_p^p$, where $E = \{\bm{\theta}:\norm{SA^j \bm{\theta}}_p^p\leq \frac{1}{\eps L^p }\norm{S\bm{y}}_p^p\}$ }\label{line:learnlinear}
        \EndFor
        \State{\Return $\tilde{\bm{\theta}}^1, \dots, \tilde{\bm{\theta}}^k$} \label{line:sol-constrt}
    \end{algorithmic}
\end{algorithm}

\subsection{The Algorithm} 
Based on our empirical observations, we propose to sample and reweight unlabeled instances based on their maximum Lewis weights across multiple representations.  Specifically, given the feature matrices of the labeled and unlabeled instances (denoted by $\{L^j\}_{j=1}^k$ and $\{U^j\}_{j=1}^k$, respectively),
our algorithm begins with calculating the Lewis weights of the unlabeled instances based on each of their feature representations. Next, a normalized maximum Lewis weight among multiple representations for each unlabeled instance is obtained: 
\[
p_i = \frac{\max_{j\in [k]} w_i(U^j)}{\sum_{i=1}^{n_u}\max_{j\in [k]} w_i(U^j)},\quad i=1, \dots, n_u.
\]
In the querying phase, we conduct i.i.d.\ sampling with replacement on the unlabeled set using a probability distribution $\bm{p}$. The sampling process is repeated until $\tau$ distinct unlabeled instances are sampled. Let $\cQ$ denote the set of indices of unlabeled instances that are selected for label query. We reweight each of the instance with index $q \in \cQ$ by $(m\cdot p_q)^{-1/p}$. 
Finally, both the initially labeled instances with weight $\bm{1}$ and the reweighted queried instances will be used to update each of the target model. 
Note that, although $\cQ$ may contain repeated entries, each instance will be queried only once and reoccurrences will not incur additional query cost.
We present our algorithm in Algorithm~\ref{alg:max_lewis_weight}.

\subsection{Theoretical Guarantees}\label{sec:main_result}
Our main result is as follows, which can be seen as the guarantee for a single model.
\begin{theorem}\label{thm:main}
    Let $p\geq 1$, $f(\bm{\theta})$ be an $L$-Lipschitz function with $f(0)=0$, $A\in \R^{n\times d}$ be the data matrix and $\bm{y}\in \R^{n}$ be the target vector. Consider a reweighted sampling matrix $S$ with row sampling probability $p_i = \frac{t_i}{m}$, where $t_1,\dots,t_n$ are some quantities and $m = \sum_i t_i$.
    
    Suppose that $t_1,\dots,t_n \in\R$ satisfy that $t_i \geq \beta w_i(A)$, where
    \begin{equation}\label{eqn:beta}
    \beta\gtrsim_p
            \eps^{-4} d^{\max \{ \frac p2-1, 0\} } \log^2 d \cdot \log \left(\sum_{i=1}^{n} t_i\right).
    \end{equation}
    Then, if $S$ is a reweighted sampling matrix as described above and $\Tilde{\bm{\theta}}=\argmin_{\bm{\theta}\in E}\norm{Sf(Ax)-Sy}_p$, where $E=\{\bm{\theta}:\norm{SA \bm{\theta}}_p^p\leq \norm{S\bm{y}}_p^p/(\eps L^p)\}$, it holds with probability at least 0.9 that
    \[
        \norm{f(A\tilde{\bm{\theta}})-\bm{y}}_p^p \leq C \left( \norm{f(A\bm{\theta}^\ast)-\bm{y}}_p^p + \eps L^p \norm{A\bm{\theta}^\ast}_p^p \right),
    \]
    where $\bm{\theta}^\ast = \arg \min_{\bm{\theta}} \norm{f(A\bm{\theta})-\bm{y}}_p$ and $C>0$ is a constant depending only on $p$.
\end{theorem}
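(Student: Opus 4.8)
The plan is to reduce everything to one uniform concentration inequality and then sandwich the objective around the minimizer. The backbone is Lemma~\ref{lem:constant-SE}: since $t_i \ge \beta w_i(A)$ with the inflated $\beta$ of \eqref{eqn:beta}, $S$ is in particular a constant-factor $\ell_p$ subspace embedding for $A$, so $\norm{SA\bm{\theta}}_p \sim_p \norm{A\bm{\theta}}_p$ for all $\bm{\theta}$ simultaneously. To also handle the fixed vector $\bm{y}$ I would apply the lemma to the augmented matrix $[A\,|\,\bm{y}]$, whose Lewis weights differ from those of $A$ by a controlled amount and whose weight-sum grows by at most one; this gives $\norm{S\bm{y}}_p \sim_p \norm{\bm{y}}_p$ as well. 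Two consequences are immediate and will be used repeatedly: membership $\bm{\theta}\in E$ is equivalent, up to constants, to the sample-free bound $\norm{A\bm{\theta}}_p^p \lesssim_p \norm{\bm{y}}_p^p/(\eps L^p)$; and, in particular, every output satisfies $\norm{A\tilde{\bm{\theta}}}_p^p \lesssim_p \norm{\bm{y}}_p^p/(\eps L^p)$.

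The technical heart is a uniform estimate of the following shape: with probability at least $0.9$, simultaneously for all $\bm{\theta}\in E$,
\[
\Bigl|\,\norm{S(f(A\bm{\theta})-\bm{y})}_p^p - \norm{f(A\bm{\theta})-\bm{y}}_p^p\,\Bigr| \;\le\; \tfrac12\norm{f(A\bm{\theta})-\bm{y}}_p^p + c\,\eps^2 L^p\norm{A\bm{\theta}}_p^p .
\]
To prove it I would use that $f$ is $L$-Lipschitz with $f(0)=0$, which yields both the pointwise bound $\abs{f(A\bm{\theta})_i}\le L\abs{(A\bm{\theta})_i}$ and, crucially, the increment bound $\norm{f(A\bm{\theta})-f(A\bm{\theta}')}_p \le L\norm{A(\bm{\theta}-\bm{\theta}')}_p$, in which $\bm{y}$ cancels. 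Combined with the flatness property of Lewis weights---the coordinate importances $\abs{(A\bm{\theta})_i}^p/\norm{A\bm{\theta}}_p^p$ are dominated by $w_i(A)$ up to a factor that is $1$ for $p\le2$ and $d^{p/2-1}$ for $p>2$---the summands of $\norm{Sf(A\bm{\theta})}_p^p$ have their scale and variance controlled at the level $L^p\norm{A\bm{\theta}}_p^p$, which is the source of the additive term and of the $d^{\max\{p/2-1,0\}}$ factor. I would then pass from a fixed $\bm{\theta}$ (a Bernstein/Rosenthal-type tail bound for the reweighted sum) to all $\bm{\theta}\in E$ by a net and chaining argument over the $d$-dimensional column space of $A$, transporting the net through $f$ via the increment bound so the relevant covering numbers are those of a Euclidean ball and $\bm{y}$ enters only through the single base point $\bm{\theta}=0$.

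The reason for the \emph{square} $\eps^2$ in the additive error---and hence for the $\eps^{-4}$ in $\beta$---is the minimizer. When the three-term chain is run, the additive error incurred at $\tilde{\bm{\theta}}$ is governed by $\norm{A\tilde{\bm{\theta}}}_p^p$, which can saturate the constraint and be as large as $\sim\norm{\bm{y}}_p^p/(\eps L^p)$; only if the additive accuracy is at scale $\eps^2 L^p\norm{A\bm{\theta}}_p^p$ does this contribution shrink to $\lesssim_p\eps\norm{\bm{y}}_p^p$. Achieving relative accuracy $\eps^2$ over a set whose radius is inflated by $1/\eps$ is exactly what costs $\eps^{-4}$ samples, with the $\log^2 d\,\log(\sum_i t_i)$ factors coming from the net cardinality and the union bound. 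Granting the estimate, the conclusion is a short case analysis. If $\bm{\theta}^\ast\in E$, the lower bound at $\tilde{\bm{\theta}}$, the optimality $\norm{Sf(A\tilde{\bm{\theta}})-S\bm{y}}_p^p\le\norm{Sf(A\bm{\theta}^\ast)-S\bm{y}}_p^p$, and the upper bound at $\bm{\theta}^\ast$ chain to $\norm{f(A\tilde{\bm{\theta}})-\bm{y}}_p^p \lesssim_p \norm{f(A\bm{\theta}^\ast)-\bm{y}}_p^p + \eps L^p\norm{A\bm{\theta}^\ast}_p^p + \eps\norm{\bm{y}}_p^p$, and the stray $\eps\norm{\bm{y}}_p^p$ is absorbed using $\norm{\bm{y}}_p^p \lesssim_p \norm{f(A\bm{\theta}^\ast)-\bm{y}}_p^p + L^p\norm{A\bm{\theta}^\ast}_p^p$. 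If instead $\bm{\theta}^\ast\notin E$, then the embedding forces $\eps L^p\norm{A\bm{\theta}^\ast}_p^p\gtrsim_p\norm{\bm{y}}_p^p$, and I would compare $\tilde{\bm{\theta}}$ with the always-feasible point $\bm{\theta}=0$ (whose objective is $\norm{S\bm{y}}_p^p\sim_p\norm{\bm{y}}_p^p$, using $f(0)=0$), giving $\norm{f(A\tilde{\bm{\theta}})-\bm{y}}_p^p\lesssim_p\norm{\bm{y}}_p^p$, which is dominated by the right-hand side.

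I expect the uniform concentration estimate for $p\ne2$ to be the main obstacle. There is no orthogonality to exploit, the reweighted summands $(mp_i)^{-1}\abs{v_i}^p$ are heavy-tailed with $p$-dependent moment growth, and the additive---rather than purely multiplicative---target forces the net to resolve errors at the fine scale $\eps^2 L^p\norm{A\bm{\theta}}_p^p$ across the $1/\eps$-inflated feasible set. Making the tail/variance control interact correctly with the Lewis-weight flatness, and then tracking the resulting $d^{\max\{p/2-1,0\}}$ and $\eps$-dependence cleanly through the chaining, is the delicate part; by contrast, the subspace-embedding bookkeeping and the final case analysis are comparatively routine.
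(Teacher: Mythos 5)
Your high-level architecture matches the paper's: a constraint set $E$, a uniform concentration estimate proved by symmetrization and chaining with Lewis-weight flatness, accuracy $\eps^2$ to offset the $1/\eps$-inflated radius (hence $\eps^{-4}$), and a final case analysis on whether $\bm{\theta}^\ast\in E$. But the central estimate you propose is false, and this is not a repairable detail --- it is precisely the point where the paper's proof is structured differently. You ask for two-sided uniform concentration of $\norm{S(f(A\bm{\theta})-\bm{y})}_p^p$ around $\norm{f(A\bm{\theta})-\bm{y}}_p^p$ over $\bm{\theta}\in E$. Taking $\bm{\theta}=0\in E$, this already asserts $\norm{S\bm{y}}_p^p=(1\pm\tfrac12)\norm{\bm{y}}_p^p$. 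The sampling probabilities, however, depend only on the Lewis weights of $A$ and are oblivious to $\bm{y}$: if $\bm{y}$ is supported on a single row whose sampling probability is $o(1/m)$ (e.g.\ all rows of $A$ identical, so $p_i=1/n$, and $\bm{y}=\bm{e}_1$ with $m\ll n$), then $S\bm{y}=0$ with probability $1-o(1)$, so no two-sided bound can hold. The same objection defeats your $[A\,|\,\bm{y}]$ augmentation: the theorem's hypothesis gives only $t_i\geq\beta w_i(A)$, and $w_i([A\,|\,\bm{y}])$ is not dominated by any constant times $w_i(A)$ (a row where $\bm{y}$ is large but $A$'s row is generic has augmented weight near $1$ while $w_i(A)$ can be arbitrarily small), so Lemma~\ref{lem:constant-SE} cannot be invoked for the augmented matrix. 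Consequently your case analysis also breaks: the claim ``$\bm{\theta}^\ast\notin E$ forces $\eps L^p\norm{A\bm{\theta}^\ast}_p^p\gtrsim\norm{\bm{y}}_p^p$'' needs the unavailable lower bound $\norm{S\bm{y}}_p^p\gtrsim\norm{\bm{y}}_p^p$, and without it the conclusion $\norm{f(A\tilde{\bm{\theta}})-\bm{y}}_p^p\lesssim\norm{\bm{y}}_p^p$ in that case is only dominated by $\text{OPT}^p+L^p\norm{A\bm{\theta}^\ast}_p^p$ --- missing the crucial factor $\eps$ on the additive term.

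The paper's proof is arranged exactly so that no quantity containing $\bm{y}$ ever needs to concentrate. Its first step is the triangle inequality through $\bm{\theta}^\ast$, $\norm{f(A\tilde{\bm{\theta}})-\bm{y}}_p^p\leq 2^{\abs{p-1}}(\norm{f(A\tilde{\bm{\theta}})-f(A\bm{\theta}^\ast)}_p^p+\text{OPT}^p)$, so the only term requiring uniform control is the $\bm{y}$-free difference $f(A\bm{\theta}_1)-f(A\bm{\theta}_2)$ over pairs in the constraint set (Lemma~\ref{lem:key}); Lipschitzness keeps such differences tied to the column space of $A$, where Lewis weights do control every coordinate. The vector $\bm{y}$ enters only through one-sided Markov bounds at two fixed points --- $\norm{Sf(A\bm{\theta}^\ast)-S\bm{y}}_p^p\lesssim\text{OPT}^p$ and $\norm{S\bm{y}}_p^p\lesssim\norm{\bm{y}}_p^p$ --- which require no relationship between $\bm{y}$ and the sampling distribution. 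Finally, the case $\bm{\theta}^\ast\notin E$ (handled inside \cite[Claim 1]{CHC}, used at inequality (D)) is resolved by comparing $\tilde{\bm{\theta}}$ with $\bm{0}\in E$ and reading the violated constraint entirely in the sampled space: $\norm{Sf(A\tilde{\bm{\theta}})-S\bm{y}}_p^p\leq\norm{S\bm{y}}_p^p<\eps L^p\norm{SA\bm{\theta}^\ast}_p^p\lesssim\eps L^p\norm{A\bm{\theta}^\ast}_p^p$, which uses only the upper direction of the subspace embedding and never a lower bound on $\norm{S\bm{y}}_p$. If you restructure your argument along these lines --- concentrate differences, handle $\bm{y}$ by Markov at fixed points, and run the off-constraint case in sampled quantities --- then your chaining, flatness, and $\eps$-bookkeeping can indeed be carried out essentially as you describe.
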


The proof of \cref{thm:main} is deferred to the next section (Section~\ref{sec:main_proof}). Our analysis also suggests that an $\ell_p$-subspace-embedding can be obtained using $\tilde{O}(d/\eps^2)$ samples, removing the $\log n$ factor in \cite{WY2023online}, which may be of independent interest. See Appendix~\ref{sec:SE_appendix} for discussions. 
Below we show the guarantee for multiple models, which follows easily as a corollary of Theorem~\ref{thm:main}.

\begin{corollary}\label{crl:main}
    Let $A_1,\dots,A_k\in\R^{n\times d}$ be data matrices and $T = \sum_{i=1}^{n} \max_{j\in [k]} w_i(A^j)$. Let $f(\bm{\theta})$ be an $L$-Lipschitz function with $f(0)=0$ and $\bm{y}\in \R^{n}$ be the target vector. There exists an algorithm that makes 
    \begin{equation}\label{eqn:multimodel_m}        
    m\sim_p
            \eps^{-4} T d^{\max \{ \frac p2-1, 0\} } \log^2 d \log(dT/\eps)
    \end{equation}
    queries and outputs solutions $\Tilde{\bm{\theta}}^1, \dots,\ \Tilde{\bm{\theta}}^k \in \R^d$ such that \eqref{eq:probobj} holds for all $j\in[k]$ with probability at least 0.9.
\end{corollary}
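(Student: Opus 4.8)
The plan is to obtain Corollary~\ref{crl:main} from $k$ applications of Theorem~\ref{thm:main} driven by a single shared sampling matrix. I would set $t_i = \beta \max_{j\in[k]} w_i(A^j)$ for each $i\in[n]$, with $\beta$ fixed below, and let $S$ be the reweighted sampling matrix with probabilities $p_i = t_i/m$, where $m = \sum_i t_i = \beta T$. The key observation is that for \emph{every} fixed model $j$ one has $t_i = \beta\max_{j'} w_i(A^{j'}) \geq \beta\, w_i(A^j)$, so the single common matrix $S$ simultaneously satisfies the sampling hypothesis $t_i \geq \beta\, w_i(A^j)$ of Theorem~\ref{thm:main} for each $A^j$. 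Running line~\ref{line:learnlinear} of Algorithm~\ref{alg:max_lewis_weight} for each $j$ with the constraint set $E^j = \{\bm\theta : \norm{SA^j\bm\theta}_p^p \leq \norm{S\bm y}_p^p/(\eps L^p)\}$ then produces $\tilde{\bm\theta}^j$, and Theorem~\ref{thm:main} applied to $A^j$ yields \eqref{eq:probobj} for that $j$.

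Two bookkeeping points remain. First, the requirement \eqref{eqn:beta} on $\beta$ is self-referential, since it asks $\beta \gtrsim_p \eps^{-4} d^{\max\{p/2-1,0\}}\log^2 d\cdot\log m$ while $m = \beta T$ depends on $\beta$. I would break the circularity by choosing $\beta = C_p\,\eps^{-4} d^{\max\{p/2-1,0\}}\log^2 d\,\log(dT/\eps)$ and then verifying $\log m = \log(\beta T) = O(\log(dT/\eps))$: because $\beta$ is polynomial in $d$, $T$, and $1/\eps$ up to logarithmic factors, its logarithm is dominated by $\log(dT/\eps)$, so taking $C_p$ large enough makes \eqref{eqn:beta} hold. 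This simultaneously pins down the sample size $m = \beta T$ to the value claimed in \eqref{eqn:multimodel_m} (and, specializing to $k=1$ where $T=d$, recovers the single-matrix rate).

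Second, and this is where the real work lies, Theorem~\ref{thm:main} certifies \eqref{eq:probobj} for one model only with constant probability, so a naive union bound over $j\in[k]$ collapses once $k\geq 10$. I would therefore extract from the proof of Theorem~\ref{thm:main} a version with a tunable failure probability $\delta$, instantiate it with $\delta = 0.1/k$, and union-bound: the constant-factor subspace-embedding event for each $A^j$ (Lemma~\ref{lem:constant-SE}) already pays $\log(1/\delta)$, and the uniform concentration over $E^j$ likewise succeeds with probability $1-\delta$ after an additive $\log(1/\delta)$ in the sampling parameter. The resulting $\log(1/\delta) = O(\log k)$ is absorbed into the existing $\log(dT/\eps)$ factor whenever $k$ is polynomial in $dT/\eps$ (the intended regime, in which $T=\poly(d)$), so \eqref{eqn:multimodel_m} is unaffected and all $k$ guarantees hold together with probability at least $0.9$.

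The step I expect to be the main obstacle is precisely this boosting: the additive error $\eps L^p\norm{A^j\bm\theta^{j\ast}}_p^p$ in \eqref{eq:probobj} comes from bounding $\norm{Sf(A^j\bm\theta^{j\ast})-S\bm y}_p^p$, whose expectation equals $\norm{f(A^j\bm\theta^{j\ast})-\bm y}_p^p$ by unbiasedness of reweighted sampling. A plain Markov inequality on this term gives only constant success probability and, if pushed to probability $1-0.1/k$, would force the constant $C(p)$ in \eqref{eq:probobj} to grow with $k$. The delicate part is to replace this with a moment or Bernstein-type tail bound whose failure probability decays like $\exp(-\poly)$ (costing only $\log(1/\delta)$ in $\beta$ rather than a factor $k$ in the error constant), so that $C(p)$ stays independent of $k$ while the per-model failure probability is driven below $0.1/k$.
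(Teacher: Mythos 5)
Your first two paragraphs are, almost verbatim, the paper's own proof of \cref{crl:main}: it sets $t_i = \beta \max_{j} w_i(A^j)$, observes that $t_i \geq \beta w_i(A^j)$ for every fixed $j$ so that the single shared $S$ satisfies the hypothesis of \cref{thm:main} for each $A^j$, notes $m = \sum_i t_i = \beta T$ (exactly the sampling scheme of Algorithm~\ref{alg:max_lewis_weight}), and takes $\beta \sim \eps^{-4} d^{\max\{p/2-1,0\}}\log^2 d\, \log(dT/\eps)$ to satisfy the self-referential condition \eqref{eqn:beta}. That is the entire argument given in the paper.

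Where you diverge is the third paragraph, and your instinct there is sound: the paper never performs a union bound over $j$; it applies the per-model, probability-$0.9$ guarantee of \cref{thm:main} $k$ times and asserts the simultaneous conclusion. Your accounting of which failure events can be boosted cheaply is correct: the constant-factor subspace-embedding event (\cref{lem:constant-SE}) and the uniform concentration event (\cref{lem:key}) both pay only an additive $\log(1/\delta)$ in $\beta$, so setting $\delta = O(1/k)$ for these costs a $\log k$ that is absorbed into $\log(dT/\eps)$ whenever $k = \poly(dT/\eps)$. Two refinements, though. First, one of the two Markov steps in the proof of \cref{thm:main} — inequality (EC), i.e.\ $\norm{S\bm{y}}_p^p \leq 50\norm{\bm{y}}_p^p$ — involves only the shared $S$ and $\bm{y}$, not $A^j$, so it is a \emph{single} event common to all $k$ models and needs no boosting at all. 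Second, the genuinely $j$-dependent constant-probability step is inequality (D), imported from \cite[Claim 1]{CHC}, which controls $\norm{Sf(A^j\tilde{\bm{\theta}}^j)-S\bm{y}}_p^p$ via Markov applied to $\norm{Sf(A^j\bm{\theta}^{j})-S\bm{y}}_p^p$ (whose mean is the optimum for model $j$). There your pessimism is warranted: a Bernstein-type replacement is not available in general, since the summands $|f(\bm{a}_i^\top\bm{\theta}^{j})-y_i|^p/(mp_i)$ are not controlled by the Lewis weights ($\bm{y}$ is arbitrary), and plain Markov at level $O(1/k)$ inflates the constant $C(p)$ by a factor of $k$. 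The paper's proof does not contain the resolution you were hoping to extract — on this point your proposal is more careful than the source itself, and the obstacle you flag is a genuine gap in the paper's one-line deduction of the corollary, not a defect of your approach.
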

\begin{proof}
    Let $t_i = \beta\cdot \max_{j} w_i(A^j)$, then for any fixed $j$,  it holds that $t_i\geq \beta w_i(A^j)$. Also, $m = \sum_i t_i = \beta T$. The sampling probability $p_i = t_i/m = \max_{j} w_i(A^j) / T$, which is exactly our sampling scheme in Algorithm~\ref{alg:max_lewis_weight}. Take
    \[
    \beta \sim \eps^{-4}d^{\max\{\frac{p}{2}-1,0\}} \log^2 d \log(dT/\eps),
    \]
    then $\beta$ satisfies the condition \eqref{eqn:beta} in \cref{thm:main}, whence the conclusion follows.
\end{proof}

\paragraph{Remark. } The proof of Corollary~\ref{crl:main} implies the same guarantee for Algorithm~\ref{alg:max_lewis_weight} if $\tau$ is set to be the quantity for $m$ in \eqref{eqn:multimodel_m}. Indeed, the proof of Corollary~\ref{crl:main} shows that the guarantee holds as soon as the variable $m$ in Algorithm~\ref{alg:max_lewis_weight} reaches the desired amount in \eqref{eqn:multimodel_m}, which allows double counting of identical sampled rows; setting $\tau$ to be the same value will only result in a larger number $m$ of samples and the guarantee will persist.

\subsection{Proof of Theorem~\ref{thm:main}} \label{sec:main_proof}

We first need a simple inequality.
\begin{fact}\label{fct:p-ineq}
Suppose that $a, b>0$ and $p>0$. It holds that $(a+b)^p \leq 2^{\Abs{p-1}}(a^p + b^p).$
\end{fact}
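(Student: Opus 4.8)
The plan is to prove the inequality $(a+b)^p \leq 2^{\abs{p-1}}(a^p+b^p)$ by splitting into two regimes according to whether the map $t\mapsto t^p$ is convex or concave, which corresponds exactly to the sign of $p-1$ appearing in the exponent $2^{\abs{p-1}}$.

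First I would treat the case $p\geq 1$, where $\abs{p-1}=p-1$ and the target reads $(a+b)^p\leq 2^{p-1}(a^p+b^p)$. Since $t\mapsto t^p$ is convex on $(0,\infty)$ for $p\geq 1$, applying convexity (Jensen's inequality) to the two points $a,b$ with equal weights $\tfrac12$ gives $\big(\tfrac{a+b}{2}\big)^p\leq \tfrac{a^p+b^p}{2}$; multiplying through by $2^p$ yields the claim directly.

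Next I would handle $0<p<1$, where $\abs{p-1}=1-p$ and the target reads $(a+b)^p\leq 2^{1-p}(a^p+b^p)$. Here the key subclaim is the subadditivity $(a+b)^p\leq a^p+b^p$. I would prove it by normalizing: set $s=a+b$, so that $u:=a/s$ and $v:=b/s$ satisfy $u+v=1$ with $u,v\in(0,1)$; since $0<p<1$ one has $u^p\geq u$ and $v^p\geq v$, hence $u^p+v^p\geq u+v=1$, which after multiplying by $s^p$ gives $a^p+b^p\geq s^p=(a+b)^p$. Because $p<1$ implies $2^{1-p}\geq 1$, the subadditivity immediately upgrades to $(a+b)^p\leq a^p+b^p\leq 2^{1-p}(a^p+b^p)$, completing this case.

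The two cases together cover all $p>0$ (the boundary $p=1$ being trivial, as then $2^{\abs{p-1}}=1$ and the statement is an equality), so the inequality holds as stated. I do not anticipate any genuine obstacle, since the statement is elementary; the only point requiring care is matching the direction of the bound to the sign of $p-1$ --- convexity supplies the factor $2^{p-1}$ when $p\geq 1$, whereas for $p<1$ the sharper subadditive bound $(a+b)^p\leq a^p+b^p$ must be deliberately weakened by the factor $2^{1-p}\geq 1$ to land on the stated right-hand side.
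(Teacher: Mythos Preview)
Your proof is correct; the paper itself does not supply a proof of this fact, stating it only as a ``simple inequality'' and using it directly. Your case split on the sign of $p-1$, using convexity for $p\geq 1$ and subadditivity for $0<p<1$, is the standard argument and fully justifies the statement.
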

Let $\text{OPT}=\min_{\bm{\theta}} \norm{A\bm{\theta}-\bm{y}}_p$. Theorem~\ref{thm:main} is proved by the following chain of inequalities.
\begin{align*}
    \Norm{f(A\tilde{\bm{\theta}})-\bm{y}}_p^p &\overset{\text{(A)}}{\leq} 2^{\Abs{p-1}}(\Norm{f(A\tilde{\bm{\theta}}) - f(A\bm{\theta}^*)}_p^p + \text{OPT}^p) \\
    &\overset{\text{(B)}}{\leq} 2^{\Abs{p-1}} (\Norm{Sf(A\tilde{\bm{\theta}}) - Sf(A\bm{\theta}^*)}_p^p + \eps^2 L^p R^p + \text{OPT}^p) \\
    &\overset{\text{(C)}}{\leq} 2^{\Abs{p-1}} (2^{\Abs{p-1}}\Norm{Sf(A\tilde{\bm{\theta}}) - S\bm{y}}_p^p + C_1 \text{OPT}^p + \eps^2 L^p R^p) \\
    &\overset{\text{(D)}}{\leq} 2^{\Abs{p-1}} \left[C_2 \left(\text{OPT}^p + \eps L^p \Norm{A\bm{\theta}^*}_p^p \right) + C_1 \text{OPT}^p + \eps^2 L^p R^p \right] \\
    &\overset{\text{(E)}}{\leq} C(\text{OPT}^p+ \eps L^p \Norm{A\bm{\theta}^*}_p^p )
\end{align*}
where inequalities (A) and (C) use Fact~\ref{fct:p-ineq}, inequality (D) uses \cite[Claim 1]{CHC}. Inequality (E) follows from that
\begin{align*}
    R^p := \max(\norm{A\tilde{\bm{\theta}}^p}, \norm{A\bm{\theta}^\ast}^p) &\leq \norm{A\tilde{\bm{\theta}}}^p +\norm{A\bm{\theta}^\ast}^p\\
  &\overset{\text{(EA)}}{\leq} 2\Norm{SA\tilde{\bm{\theta}}}_p^p + \Norm{A\bm{\theta}^\ast}_p^p \\
  &\overset{\text{(EB)}}{\leq} 2\frac{\Norm{S\bm{y}}_p^p}{\eps L^p} + \Norm{A\bm{\theta}^\ast}_p^p \\
  &\overset{\text{(EC)}}{\leq} 100 \frac{\norm{\bm{y}}_p^p}{\eps L^p} + \Norm{A\bm{\theta}^\ast}_p^p \\
  &\overset{\text{(ED)}}{\leq} 100\cdot 2^{\Abs{p-1}} \frac{\norm{f(A\bm{\theta}^*)-y}_p^p + L^p\norm{A\bm{\theta}^*}_p^p}{\eps L^p} + \Norm{A\bm{\theta}^\ast}_p^p \\
  &= 100\cdot 2^{\Abs{p-1}} \frac{\norm{f(A\bm{\theta}^*)-y}_p^p}{\eps L^p} + \left( \frac{100\cdot 2^{\Abs{p-1}}}{\eps}+1 \right) \Norm{A\bm{\theta}^\ast}_p^p,
\end{align*}
where inequality (EA) holds because $S$ is a subspace embedding matrix for $A$, inequality (EB) is from the constraint of our approximate solution in Line~\ref{line:sol-constrt}, inequality (EC) holds with probability at least $49/50$ by Markov's inequality and inequality (ED) follows from Fact~\ref{fct:p-ineq}. 

We shall prove inequality (B) in the following lemma. We note that the following lemma is proved in \cite[Lemmata 2 and 3]{CHC}, but their sampling complexity is $\tilde{O}(d^2/\eps^4)$ with an additional $d$ factor compared with ours. We improve their result by using the reduction technique and removing the $\eps$-net argument.

\begin{lemma}\label{lem:key}
    Suppose that $A \in \R^{n \times d}$ and $t_1,\dots,t_n\in \R$ such that $t_i \geq \beta w_i(A)$ for all $i$ and $ p\geq 1$. Let $m = \sum_i t_i$ and  $S \in \R^{m \times n}$ be a reweighted sampling matrix of with row sampling probabilities $p_1,\dots,p_n$, where $p_i = t_i/m$. If
    \[
        \beta \gtrsim \frac{d^{\max\{\frac{p}{2}-1, 0\}}}{\eps^2} \left(\log^2 d \log m + \log \frac{1}{\delta} \right),
    \]
    then with probability at least $1-\delta$ and fixed constant $R >0$, it holds for all pairs of vectors $\bm{\theta}_1, \bm{\theta}_2\in \R^d$ with $\norm{A\bm{\theta}_1}_p \leq R$ and $\norm{A\bm{\theta}_2}_p \leq R$ that
    \begin{align*}  
        \norm{Sf(A\bm{\theta}_1)-Sf(A\bm{\theta}_2)}_p^p =\norm{f(A\bm{\theta}_1)-f(A\bm{\theta}_2)}_p^p \pm \eps L^p R^p.
    \end{align*}
\end{lemma}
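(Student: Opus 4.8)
The plan is to regard $Z(\bm\theta_1,\bm\theta_2):=\Norm{Sf(A\bm\theta_1)-Sf(A\bm\theta_2)}_p^p$ as an empirical process indexed by the pair $(\bm\theta_1,\bm\theta_2)$ over the ball $\mathcal B=\{\bm\theta:\norm{A\bm\theta}_p\le R\}$, and to prove the uniform deviation bound $\sup_{\bm\theta_1,\bm\theta_2\in\mathcal B}|Z-\E Z|\le \eps L^p R^p$ with probability $\ge 1-\delta$. Since the reweighting $(mp_i)^{-1/p}$ makes each sampled term unbiased, $\E Z=\norm{f(A\bm\theta_1)-f(A\bm\theta_2)}_p^p$, so such a bound is exactly the claim. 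The first step is a mean-value \emph{reduction}: writing $u_i=\bm a_i^\top\bm\theta_1$, $v_i=\bm a_i^\top\bm\theta_2$ and $c_i=(f(u_i)-f(v_i))/(u_i-v_i)$ (with $c_i=0$ when $u_i=v_i$), the $L$-Lipschitz property gives $|c_i|\le L$ and $f(A\bm\theta_1)-f(A\bm\theta_2)=DA\bm\delta$ with $D=\mathrm{diag}(c)$ and $\bm\delta=\bm\theta_1-\bm\theta_2$. This recasts both the true and sampled quantities as (reweighted) $\ell_p$ norms of the single subspace vector $\bm g=A\bm\delta\in\mathrm{col}(A)$, with $\norm{\bm g}_p\le 2R$.

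The crux that makes Lewis-weight sampling the correct choice is a per-coordinate sensitivity bound. On $\mathcal B$ one has $\norm{A\bm\theta}_\infty\le\norm{A\bm\theta}_p\le R$, and using the standard bound $|\bm a_i^\top\bm\delta|^p\lesssim_p w_i(A)\norm{A\bm\delta}_p^p$ that Lewis weights provide, together with $|c_i|\le L$ and $p_i=t_i/m\ge\beta w_i(A)/m$, every summand obeys
$\tfrac{1}{mp_i}|f(u_i)-f(v_i)|^p\le \tfrac{L^p}{t_i}|\bm a_i^\top\bm\delta|^p\lesssim_p \tfrac{L^p}{\beta}\norm{A\bm\delta}_p^p\le \tfrac{(2R)^pL^p}{\beta}$, so each term is $O(L^pR^p/\beta)$. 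Crucially, $\E Z=\sum_i|f(u_i)-f(v_i)|^p\le L^p\norm{A\bm\delta}_p^p\le (2LR)^p$, so the mean is $O(L^pR^p)$ and the target error $\eps L^pR^p$ sits at the correct additive scale. With these bounded summands and $\E Z=O(L^pR^p)$, a Bernstein argument already yields pointwise concentration with only $\beta\gtrsim \eps^{-2}\log(1/\delta)$; the entire $d^{\max\{p/2-1,0\}}\log^2 d$ overhead will come solely from making the bound uniform over the continuum $\mathcal B$.

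To obtain the uniform bound without the $\eps$-net of Gajjar et al.\ (whose union bound over $e^{O(d)}$ net points costs an extra factor $d$ in $\beta$), I would bound $\E\sup_{\bm\theta_1,\bm\theta_2}|Z-\E Z|$ by symmetrization and then strip the two nonlinearities by the Ledoux--Talagrand contraction principle: the outer map $s\mapsto|s|^p$ is $p(2LR)^{p-1}$-Lipschitz on $[-2LR,2LR]$ and vanishes at $0$, and $(u,v)\mapsto f(u)-f(v)$ is $L$-Lipschitz, which reduces the Rademacher complexity of the nonlinear family to that of the linear class $\{(\bm a_i^\top\bm\theta)_i:\bm\theta\in\mathcal B\}$. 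The latter is governed by exactly the Lewis-weight structure of the reweighted sampling, so that the expected deviation is $\lesssim \eps L^pR^p$ precisely when $\beta$ meets the stated threshold; the factor $d^{\max\{p/2-1,0\}}\log^2 d$ enters here, mirroring the subspace-embedding parameter of Lemma~\ref{lem:constant-SE} (no $d$-power for $p\le 2$, and the $d^{p/2-1}$ overhead for $p>2$). Finally I would upgrade the in-expectation bound to high probability via a Talagrand/Bernstein concentration inequality for the bounded process of the previous paragraph, producing the additive $\log(1/\delta)$ term, while the $\log m$ factor arises from a mild dyadic truncation over the $O(m)$ relevant magnitude scales.

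The main obstacle is the net-free uniform control while keeping the scaling tight enough to land at additive error $\eps L^pR^p$ rather than something multiplicative or inflated by $d$. Two points need care: the nonlinearity $f(u)-f(v)$ is a function of two coupled linear functionals, so the contraction must be applied to the difference (or through a vector-valued contraction), not to $f$ in isolation; and the reweighting factors $1/(mp_i)$ can be enormous, so the argument must lean on the sensitivity bound above to keep both the range of the summands and the variance proxy controlled by the Lewis weights. Getting the contraction step and the Lewis-weight complexity to combine with the correct exponent $d^{\max\{p/2-1,0\}}$ for all $p\ge 1$ is the delicate part; the remaining assembly into inequality~(B) of the main proof is routine bookkeeping with Fact~\ref{fct:p-ineq} and Lemma~\ref{lem:constant-SE}.
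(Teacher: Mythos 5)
Your skeleton — unbiasedness of the reweighted sum, symmetrization, a per-coordinate sensitivity bound from Lewis weights, and a final high-probability upgrade — matches the paper's proof, and you correctly identify that the entire difficulty lives in the net-free uniform bound. But the one step you delegate to the Ledoux--Talagrand contraction principle is precisely where the argument fails to deliver the stated threshold for $\beta$, and the failure is quantitative, not cosmetic. Stripping the outer map $s\mapsto|s|^p$ by contraction forces you to pay its Lipschitz constant at the worst-case coordinate magnitude, and what is left is the Rademacher complexity of a \emph{linear} class, which is too large. Take $p=2$ and $A$ orthonormalized (leverage scores are basis-independent): each reweighted coordinate satisfies $|f(\bm{a}_{i_k}^\top\bm{\theta}_1)-f(\bm{a}_{i_k}^\top\bm{\theta}_2)|/(mp_{i_k})^{1/2}\leq 2LR/\sqrt{\beta}=:M$, so the contraction for $s\mapsto s^2$ costs $2M$, stripping $f$ costs another $L$, and the residual linear complexity is
\[
\E_\sigma \sup_{\norm{A\bm{\delta}}_2\leq 2R}\Abs{\textstyle\sum_k \sigma_k \frac{\bm{a}_{i_k}^\top\bm{\delta}}{(mp_{i_k})^{1/2}}} \leq 2R\,\E_\sigma\Norm{\textstyle\sum_k\sigma_k (SA)_k}_2 \leq 2R\norm{SA}_F \leq 2R\sqrt{m/\beta}\approx 2R\sqrt{d},
\]
and this Khintchine bound is tight up to constants for a linear class. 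The product is $\asymp L^2R^2\sqrt{d/\beta}$, so forcing it below $\eps L^2R^2$ requires $\beta\gtrsim d/\eps^2$, i.e.\ $m\gtrsim d^2/\eps^2$. That is exactly the $d^2$-type bound of Gajjar et al.~\cite{CHC} that this lemma is designed to beat; the claimed threshold $\beta\gtrsim\eps^{-2}(\log^2 d\log m+\log\frac1\delta)$ is out of reach of a contract-then-linearize argument, because decoupling the $p$-th power from the linear part destroys the interplay between Lewis weights and magnitudes that the sharper bound exploits. Your sentence asserting that ``the factor $d^{\max\{p/2-1,0\}}\log^2 d$ enters here'' is the entire theorem, and the proposal offers no mechanism for it.

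What the paper does instead is keep the $p$-th power process intact. It conditions on the event that $S$ is a $\tfrac12$-subspace embedding (Lemma~\ref{lem:constant-SE}) \emph{and} that all Lewis weights of $SA$ are $O(1/\beta)$ (Lemma~\ref{lem:bounded-LS}); it then splits coordinates of the sampled matrix into small and large Lewis weights, bounds the small-weight part deterministically via the pointwise estimate \eqref{eqn:max-crd-v}, and controls the large-weight part by Dudley's entropy integral with respect to a custom metric (Lemma~\ref{lem:delta-bound}) that absorbs the nonlinearity $f$ through exactly the Lipschitz estimates you propose — but inside a metric comparison against the norm $\norm{\cdot}_J$, not as a global contraction. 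The entropy estimates for the $\ell_p$ ball under this sup-type norm (\cite[Propositions 15.18 and 15.19]{LT91}) are what produce $\log^2 d\,\log m$ in place of a power of $d$; your proposal has no substitute for this ingredient. Two smaller inaccuracies: the sensitivity bound $\abs{\bm{a}_i^\top\bm{\delta}}^p\lesssim_p w_i(A)\norm{A\bm{\delta}}_p^p$ you invoke holds only for $p\leq 2$ — for $p>2$ it carries an extra $d^{p/2-1}$, which is where that factor in $\beta$ actually originates — and the $\log m$ factor does not come from a ``dyadic truncation over magnitude scales'' but from the entropy estimate inside the Dudley integral.
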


\begin{proof}
    Let $\bm{x}=f(A\bm{\theta}_1)-f(A\bm{\theta}_2)$ and $\bm{y}=A\bm{\theta}_1 - A\bm{\theta}_2$.  Denote $T$ to be the set $\mathcal{B}(R) \times \mathcal{B}(R) = \{(\bm{\theta}_1, \bm{\theta}_2): \norm{SA\bm{\theta}_1}_p \leq R, \norm{SA\bm{\theta}_2}_p \leq R\}$. 
    %We have $\norm{\bm{u}}_p \leq L %\norm{\bm{v}}_p$. 
    We shall try to upper bound 
    \begin{align*}
        \E_S\left( \max_{(\bm{\theta}_1, \bm{\theta}_2) \in T} \Abs{ \norm{S\bm{x}}_p^p-\norm{\bm{x}}_p^p } \right)^\ell % \leq (\eps \cdot L^p R^p)^\ell \delta,
    \end{align*}
    for $\ell = \log(1/\delta)$.
    
    Since taking the $\ell$-th moment of the maximum is a convex function and $\E\norm{S\bm{x}}_p^p = \norm{\bm{x}}_p^p$, the symmetrization trick yields that
    %\begin{align*}
    \[
        \E_S\left( \max_{(\bm{\theta}_1, \bm{\theta}_2) \in T} \Abs{ \norm{S\bm{x}}_p^p-\norm{\bm{x}}_p^p} \right)^\ell
        \leq 2^\ell \E_{S, \sigma} \left(\max_{(\bm{\theta}_1, \bm{\theta}_2) \in T} \Abs{ \sum_{k=1}^{m} \sigma_k \frac{\Abs{x_{i_k}}^p}{m p_{i_k}}}  \right)^\ell,
    \]
    where $\sigma_k$'s are Rademacher variables.
    It follows from Lemma~\ref{lem:constant-SE} that $S$ is a $\frac{1}{2}$-subspace embedding matrix of $A$ with probability at least $1-\delta/2$. Furthermore, by Lemma~\ref{lem:bounded-LS}, with probability at least $1-\delta/2$, the Lewis weights of $SA$ are upper bounded by $\frac{1}{\beta}$. Let $\mathcal{E}$ denote the event on $S$ that the above two conditions hold. Then Pr$(\mathcal{E}) \geq 1-\delta$. We assume the following proof is conditioned on $\mathcal{E}$.

    Next, we prove the conditional expectation over $S$ and $\sigma$ when conditioned on $\mathcal{E}$ satisfies that
    \begin{align}\label{eqn:by-LT91}
        \E_{S, \sigma} \left[ \left. \left(\max_{(\bm{\theta}_1, \bm{\theta}_2) \in T} \Abs{ \sum_{k=1}^{m} \sigma_k \frac{\Abs{x_{i_k}}^p}{m p_{i_k}}}  \right)^\ell \right|\mathcal{E}\right] \leq \left(\frac{\eps}{2} L^p R^p\right)^\ell \delta.
    \end{align}
    Once \eqref{eqn:by-LT91} is established, it would follow Markov's inequality that
    \begin{align*}
        &\quad\ \Pr\left\{\left. \max_{(\bm{\theta}_1, \bm{\theta}_2) \in T} \Abs{ \norm{S\bm{x}}_p^p-\norm{\bm{x}}_p^p} \geq \eps L^p R^p \right| \mathcal{E}\right\} \\
        &\leq \frac{\E_{S,\sigma} [\left( \max_{(\bm{\theta}_1, \bm{\theta}_2) \in T} \Abs{ \norm{S\bm{x}}_p^p-\norm{\bm{x}}_p^p} \right)^\ell \big| \mathcal{E}] }{(\eps L^p R^p )^\ell} \\
        &\leq 2^\ell \frac{\E_{S, \sigma} \left[\left. \left(\max_{(\bm{\theta}_1, \bm{\theta}_2) \in T} \Abs{ \sum_{k=1}^{m} \sigma_k \frac{\Abs{x_{i_k}}^p}{m p_{i_k}}}  \right)^\ell \right| \mathcal{E}\right] }{(\eps L^p R^p)^\ell} \\
        &\leq 2^\ell \frac{(\frac{\eps}{2} L^p R^p)^\ell \delta }{(\eps L^p R^p)^\ell} \qquad \text{(by \eqref{eqn:by-LT91})}\\ 
        &= \delta.
    \end{align*}
    and then a union bound that
    \[
    \Pr\left\{ \left. \max_{(\bm{\theta}_1, \bm{\theta}_2) \in T} \Abs{ \norm{S\bm{x}}_p^p-\norm{\bm{x}}_p^p} \geq \eps L^p R^p \right| \mathcal{E}\right\} < 2\delta,
    \]
    which would complete the proof after rescaling $\delta$ to $\delta/2$.
    
    Now we focus on the proof of \eqref{eqn:by-LT91}, which mostly follows the same approach of Theorem 15.13 in~\cite{LT91}.    
    Let \[
    \bm{u}_{k} = \frac{f(\bm{a}_{i_k}^{\top} \bm{\theta}_1) - f(\bm{a}_{i_k}^{\top} \bm{\theta}_2)}{(m p_{i_k})^{1/p}}, \quad \bm{v}_{k} = \frac{\bm{a}_{i_k}^{\top} \bm{\theta}_1 - \bm{a}_{i_k}^{\top} \bm{\theta}_2}{(m p_{i_k})^{1/p}}, \quad k \in [m].
    \]
    Then $\bm{u} = S\bm{x}$ and $\bm{x} = S\bm{y}$. We also denote 
    \[
    \Lambda = \max_{(\bm{\theta}_1, \bm{\theta}_2) \in T} \Abs{\sum_{k=1}^{m} \sigma_k \Abs{\bm{u}_k}^p},
    \]
    so \eqref{eqn:by-LT91} can be rewritten as
    \[
        \E_{S, \sigma} \left[ \left.\Lambda^\ell \right|\mathcal{E}\right] \leq \left(\frac{\eps}{2} L^p R^p\right)^\ell \delta.
    \]
    We shall split the sum in $\Lambda$ into two parts: large Lewis weights and small Lewis weights. Specifically, we define $\lambda_k = w_k(SA)/d$ to be the reweighted Lewis weight of $SA$ and $J = \{k\in[m]: \lambda_k\geq 1/m^2\}$. 
    
    First consider those coordinates not in $J$ (small Lewis weights).
    \begin{align*}
        \max_{(\bm{\theta}_1, \bm{\theta}_2) \in T} \Abs{\sum_{k\notin J}\sigma_k \Abs{\bm{u}_{k}}^p} \leq \sum_{k\notin J} \Abs{\bm{u}_{k}}^p \leq L^p \sum_{k\notin J}  \lambda_k \Abs{\lambda_k^{-\frac 1p} \bm{v}_{k}}^p \leq \frac{2^p}{m} d^{\max(1, \frac p2)} L^p R^p,
    \end{align*}
    where the last inequality follows from the fact (see~\cite[Lemma 15.17]{LT91}) that 
    \begin{align} \label{eqn:max-crd-v}
        \max_{k \in [m]} \Abs{\lambda_k^{-\frac 1p} \bm{v}_{k}} \leq d^{\max(\frac 1p, \frac{1}{2})} \norm{\bm{v}}_p
    \end{align}
    and (by the definition of $\mathcal{B}(R)$) that $\norm{\bm{v}}_p \leq 2R$.
    
    Next we consider the coordinates in $J$ (large Lewis weights). We have
    \begin{align*}
        \max_{(\bm{\theta}_1, \bm{\theta}_2) \in T} \Abs{\sum_{ k\in J} \sigma_k \Abs{\bm{u}_{k}}^p} & = \max_{(\bm{\theta}_1, \bm{\theta}_2) \in T} \Abs{\sum_{k\in J} \lambda_k  \sigma_k \Abs{\lambda_k^{-\frac 1p} \bm{u}_{k}}^p} \\
        &\leq \sqrt{\frac {1}{d\beta}} \max_{(\bm{\theta}_1, \bm{\theta}_2) \in T} \Abs{\sum_{k\in J} \sqrt{\lambda_k}  \sigma_k \Abs{\lambda_k^{-\frac 1p} \bm{u}_{k}}^p},
    \end{align*}
    where the second line follows from the fact that reweighted Lewis weights of $SA$ are upper bounded by $\frac{1}{d\beta}$. By the triangle inequality, we have 
    \begin{align*}
        \E_{\sigma} [\Lambda^\ell |\mathcal{E}]
        &\leq \left(\frac{2^p}{m} d^{\max(1, \frac p2)} L^{p} R^{p} \right)^{\ell} +(\frac{1}{d\beta})^{\frac{\ell}{2}} \E_{\sigma}\left[\left. \max_{(\bm{\theta}_1, \bm{\theta}_2) \in T} \Abs{\sum_{k\in J} \sqrt{\lambda_k}  \sigma_k \Abs{\lambda_k^{-\frac 1p} \bm{u}_{k}}^p}^{\ell} \right| \mathcal{E}\right] \\
        &=: \left(\frac{2^p}{m} d^{\max(1, \frac p2)} L^{p} R^{p} \right)^{\ell} + (\frac{1}{d\beta})^{\frac{\ell}{2}} \E_{\sigma} [\Xi^\ell|\mathcal{E}],
    \end{align*}
    where
    \[
    \Xi = \max_{(\bm{\theta}_1, \bm{\theta}_2) \in T} \Abs{\sum_{k\in J} \sqrt{\lambda_k}  \sigma_k \Abs{\lambda_k^{-\frac 1p} \bm{u}_{k}}^p}.
    \]

    To bound $\E_{\sigma} [\Xi^\ell|\mathcal{E}]$, we introduce the associated distance $\delta((\bm{\theta}_1,\bm{\theta}_2),(\bm{\theta}'_1,\bm{\theta}'_2))$ so that it is enough to bound it by the estimated entropy of $\mathcal{B}(R)$. We define the distance to be
    \begin{equation}\label{def:delta-dist}
        \begin{split}
            &\delta^2((\bm{\theta}_1,\bm{\theta}_2),(\bm{\theta}'_1,\bm{\theta}'_2)) \hspace{-1mm} \\
            &=\sum_{k\in J} \lambda_k \left( \frac{\Abs{\lambda_k^{-\frac 1p} [f(\bm{a}_{i_k}^{\top} \bm{\theta}_1)
            - f(\bm{a}_{i_k}^{\top} \bm{\theta}_2)]}^p}{m p_{i_k}} -\frac{\Abs{\lambda_k^{-\frac 1p} [f(\bm{a}_{i_k}^{\top} \bm{\theta}'_1) -f(\bm{a}_{i_k}^{\top} \bm{\theta}'_2)]}^p}{m p_{i_k}} \right)^2  \\
            &=: \sum_{k\in J} \lambda_k \left(\Abs{\lambda_k^{-\frac 1p} \bm{u}_{k}}^p -\Abs{\lambda_k^{-\frac 1p} \bm{u}'_{k}}^p \right)^2 
        \end{split}
    \end{equation}
and the norm 
\begin{equation}\label{def:norm-J}
    \norm{\theta}_J := \max_{k\in J} \frac{ \Abs{\lambda_k^{-\frac 1p} \bm{a}_{i_k}^\top \theta}}{(m p_{i_k})^{\frac 1p}}.
\end{equation}

By the tail bound of Dudley's integral (see e.g. \cite[Theorem 8.1.6]{Vers18}), it holds that 
\begin{align*}
    &\Pr \left\{\left. \Xi \gtrsim \int_{0}^{\infty} (\log N(T, \delta, \eps))^{\frac 12} d\eps + z\cdot \text{diam}(T) \right| \mathcal{E}  \right\}  \leq \exp(-z^2).
\end{align*}
According to Lemma~\ref{lem:delta-bound}, it holds that 
\begin{align*}
    \int_{0}^{\infty} (\log N(T, \delta, \eps))^{\frac 12} d\eps &\lesssim d^{\max(\frac{p-2}{4},0)} L^p R^{p-1} \int_{0}^{\infty} (\log N(\mathcal{B}(R), B_J, \eps))^{\frac 12} d\eps.
\end{align*}
For $p\geq 2$, the entropy estimate in~\cite[Proposition 15.18]{LT91} gives that
\begin{align*}
    &\quad\ d^{\frac{p-2}{4}}L^p R^{p-1}\int_{0}^{\infty} (\log N(\mathcal{B}(R), B_J, \eps))^{\frac 12} d\eps \\
    &= d^{\frac{p-2}{4}} L^p R^{p-1} \int_{0}^{\infty} (\log N(\mathcal{B}(1), B_J, \frac{\eps}{R}))^{\frac 12} d\eps \\
    &\lesssim d^{\frac{p-2}{4}} L^p R^{p-1} \left(\int_{0}^{1} \left(d \log \left(1+\frac{R\sqrt{d}}{\eps}\right)\right)^{\frac 12} d\eps + \int_{1}^{2\sqrt{d}} \left(\frac{R^2}{\eps^2}d\log m \right)^{\frac 12} d\eps\right) \\
    &\lesssim d^{\frac{p}{4}} L^{p} R^{p} \log d \sqrt{\log m}.
\end{align*} 

For $1<p\leq 2$, it follows from the entropy estimate in~\cite[Proposition 15.19]{LT91} and a similar argument to that for $p\geq 2$ that $$\int_{0}^{\infty} (\log N(T, \delta, \eps))^{\frac 12} d\eps \lesssim d^{\frac 12} L^p R^p \log d \sqrt{\log m}.$$
By the property of subgaussian variables (see e.g. \cite[Proposition 4.12]{CLS2022}), we have \[
\E_{\sigma} [\Xi^\ell | \mathcal{E}] \leq K^{\ell} (\sqrt{\ell} d^{\max\{\frac p4, \frac 12\}} L^p R^p + d^{\max(\frac p4, \frac 12)} L^p R^p \log d \sqrt{\log m})^{\ell}.
\]
Hence, given $\ell = \log (1/\delta)$, as long as $\beta \geq 2^{p+1}e \cdot \eps^{-2} K^2 d^{\max(\frac{p}{2}-1, 0)} (\log (1/\delta) + \log^2 d \log m )$, it follows that
\begin{align*}
    \E_{\sigma} [\Lambda^\ell|\mathcal{E}] &\leq \left(\frac{2^p}{m} d^{\max(1, \frac p2)} L^{p} R^{p} \right)^{\ell} + (\frac{1}{d\beta})^{\frac \ell 2} \E_{\sigma} [\Xi^\ell | \mathcal{E}] \\
    &\leq \left(\frac{2^p}{d\beta} d^{\max(1, \frac p2)} L^{p} R^{p} \right)^{\ell} + \left(\frac{Kd^{\max(\frac p4, \frac 12)} L^p R^p (\sqrt{\ell} + \log d \sqrt{\log m})}{\sqrt{d\beta}}\right)^\ell \\
    &\leq \left(\frac{\eps^2 L^p R^p}{\log (1/\delta) + \log^2 d \log m}\right)^{\ell} + (\eps L^p R^p)^\ell \delta \\
    &\leq (\eps L^p R^p)^{\ell} \delta.
\end{align*}
Therefore, taking expectation over $S$ while conditioned on $\mathcal{E}$, we have that $\E_{S, \sigma} [\Lambda^\ell|\mathcal{E}] \leq (\eps L^p R^p)^{\ell} \delta$. Rescaling $\eps = \eps/2$ completes the proof of~\eqref{eqn:by-LT91}, as desired. 
\end{proof}

\begin{lemma}\label{lem:delta-bound}
    Let $\delta((\bm{\theta}_1, \bm{\theta}_2),(\bm{\theta}'_1,\bm{\theta}'_2))$ and $\norm{\theta}_J$ be as defined in \eqref{def:delta-dist} and \eqref{def:norm-J}, respectively. It holds that
\begin{align*}
    \delta((\bm{\theta}_1,\bm{\theta}_2),(\bm{\theta}'_1,\bm{\theta}'_2)) \lesssim \begin{cases}
        d^{\frac{p-2}{4}} L^p R^{p-1} (\norm{\bm{\theta}_1 - \bm{\theta}'_1}_J + \norm{\bm{\theta}_2 - \bm{\theta}'_2}_J) \hspace{1cm} p\geq 2, \\
        L^p R^{\frac p2}(\norm{\bm{\theta}_1 - \bm{\theta}'_1}_J + \norm{\bm{\theta}_2 - \bm{\theta}'_2}_J)^{\frac p2} \hspace{1.5cm} 1\leq p\leq 2.
    \end{cases}
\end{align*}
As a consequence, the diameter of the subspace $T$ is at most $O(d^{\max(\frac p4, \frac 12)}L^p R^p)$.
\end{lemma}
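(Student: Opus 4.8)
The plan is to estimate $\delta$ termwise from its defining sum \eqref{def:delta-dist}. Write $g_k := \lambda_k^{-1/p}\bm{u}_k$ and $g'_k := \lambda_k^{-1/p}\bm{u}'_k$, so that $\delta^2 = \sum_{k\in J}\lambda_k(\abs{g_k}^p - \abs{g'_k}^p)^2$, and set $M_k := \max(\abs{g_k},\abs{g'_k})$ and $N := \norm{\bm{\theta}_1 - \bm{\theta}'_1}_J + \norm{\bm{\theta}_2 - \bm{\theta}'_2}_J$. Two uniform estimates do most of the work. First, regrouping the numerator of $g_k - g'_k$ as $[f(\bm{a}_{i_k}^\top\bm{\theta}_1) - f(\bm{a}_{i_k}^\top\bm{\theta}'_1)] - [f(\bm{a}_{i_k}^\top\bm{\theta}_2) - f(\bm{a}_{i_k}^\top\bm{\theta}'_2)]$, applying the $L$-Lipschitzness of $f$, and recognizing \eqref{def:norm-J}, I obtain the pointwise Lipschitz bound $\abs{g_k - g'_k}\leq LN$ for every $k\in J$. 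Second, since $\abs{\bm{u}_k}\leq L\abs{\bm{v}_k}$ coordinatewise, \eqref{eqn:max-crd-v} together with $\norm{\bm{v}}_p\leq 2R$ yields the sup bound $\max_{k} M_k \leq 2Ld^{\max(1/p,1/2)}R$, while the identity $\lambda_k\abs{g_k}^p = \abs{\bm{u}_k}^p$ gives the weighted $\ell_p$-mass bound $\sum_{k\in J}\lambda_k M_k^p \leq \norm{\bm{u}}_p^p + \norm{\bm{u}'}_p^p \leq 2(2LR)^p$.

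The two regimes differ only in which elementary comparison of $p$-th powers I invoke. For $1\leq p\leq 2$ I would use the Hölder-type inequality $\abs{\abs{a}^p - \abs{b}^p}\leq C_p\max(\abs{a},\abs{b})^{p/2}\abs{a-b}^{p/2}$ (checked separately for equal and opposite signs of $a,b$), so that each summand is at most $C_p^2 M_k^p(LN)^p$; summing against $\lambda_k$ and inserting the $\ell_p$-mass bound gives directly $\delta^2\lesssim_p (LN)^p(LR)^p$, that is $\delta\lesssim_p L^pR^{p/2}N^{p/2}$, with no $d$-dependence, as claimed. For $p\geq 2$ I would instead use the mean-value bound $\abs{\abs{a}^p - \abs{b}^p}\leq p\max(\abs{a},\abs{b})^{p-1}\abs{a-b}$, which makes each summand at most $p^2 M_k^{2(p-1)}(LN)^2$, so that $\delta^2\leq p^2(LN)^2\sum_{k\in J}\lambda_k M_k^{2(p-1)}$.

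The main obstacle is extracting the exact power $d^{(p-2)/4}$ here. A naive estimate — for instance bounding the remaining sum by $(\max_k M_k)^{2(p-2)}\sum_k\lambda_k M_k^2$ with the $\ell_2$-mass $\sum_k\lambda_k M_k^2\lesssim (LR)^2$ — overshoots by a factor $d^{(p-2)/4}$. The correct balance is the split $M_k^{2(p-1)} = M_k^p\cdot M_k^{p-2}$: bound the low-order factor by the supremum, $M_k^{p-2}\leq (2Ld^{1/2}R)^{p-2}$, and the high-order factor by the $\ell_p$-mass bound, $\sum_{k\in J}\lambda_k M_k^p\lesssim_p (LR)^p$. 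This yields $\sum_{k\in J}\lambda_k M_k^{2(p-1)}\lesssim_p d^{(p-2)/2}(LR)^{2(p-1)}$, hence $\delta^2\lesssim_p (LN)^2 d^{(p-2)/2}(LR)^{2(p-1)}$ and, after a square root, the stated $\delta\lesssim_p d^{(p-2)/4}L^pR^{p-1}N$. (Restricting every sum to $J$ only discards nonnegative terms, so all of these upper bounds persist.)

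Finally, for the diameter claim I would reuse \eqref{eqn:max-crd-v}, now applied to a single vector $SA\bm{\theta}$, to bound the $J$-norm by the $\ell_p$-norm: $\norm{\bm{\theta}}_J\leq d^{\max(1/p,1/2)}\norm{SA\bm{\theta}}_p$. Since all four of $\bm{\theta}_1,\bm{\theta}'_1,\bm{\theta}_2,\bm{\theta}'_2$ lie in $\mathcal{B}(R)$, each difference satisfies $\norm{SA(\bm{\theta}_i - \bm{\theta}'_i)}_p\leq 2R$, so $N\leq 4d^{\max(1/p,1/2)}R$. Substituting into the two bounds above and simplifying the exponents — $(p-2)/4 + 1/2 = p/4$ for $p\geq 2$, and the factor $N^{p/2}$ contributing exponent $(1/p)(p/2) = 1/2$ for $1\leq p\leq 2$ — gives $\diam(T)\lesssim_p d^{\max(p/4,1/2)}L^pR^p$ in both regimes, completing the argument.
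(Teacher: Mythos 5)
Your proposal is correct, and for $p\geq 2$ it is essentially the paper's own argument: both start from the mean-value inequality $\Abs{\abs{a}^p-\abs{b}^p}\leq p\max(\abs{a},\abs{b})^{p-1}\abs{a-b}$, extract the $J$-norms via Lipschitzness of $f$, and then perform exactly the split you identify as the crux — writing $M_k^{2(p-1)}=M_k^{p-2}\cdot M_k^p$, bounding $M_k^{p-2}$ by the sup estimate \eqref{eqn:max-crd-v} and $\sum_{k\in J}\lambda_k M_k^p$ by the $\ell_p$-mass (this is the content of the paper's steps (C)--(E)). Where you genuinely diverge is the case $1\leq p\leq 2$: the paper keeps the same mean-value start and then redistributes the powers with H\"older's inequality, $\norm{fg}_1\leq\norm{f}_\alpha\norm{g}_\beta$ with $\alpha=\tfrac{p}{2-p}$ and $\beta=\tfrac{p}{2p-2}$, applied to the weighted sums, whereas you replace all of this by the single pointwise interpolation inequality $\Abs{\abs{a}^p-\abs{b}^p}\lesssim_p\max(\abs{a},\abs{b})^{p/2}\abs{a-b}^{p/2}$, after which the $\ell_p$-mass bound finishes the job in one line. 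Your inequality is valid (the same-sign case follows from $a^p-b^p\leq pa^{p-1}(a-b)\leq pa^{p/2}(a-b)^{p/2}$ using $a-b\leq a$ and $p\leq 2$; the opposite-sign case from $\abs{a-b}\geq\max(\abs{a},\abs{b})$), so your route is a bit more elementary and also avoids the degeneration of the paper's H\"older exponent $\beta$ at $p=1$, at the price of being specific to the exponent $p/2$ rather than following from a generic duality. The diameter computation at the end is the same in both: bound $\norm{\bm{\theta}_i-\bm{\theta}'_i}_J$ by $d^{\max(1/p,1/2)}$ times the $\ell_p$ norm via \eqref{eqn:max-crd-v} and substitute into the two cases.
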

\begin{proof}
For $p\geq 2$, we have
    \begin{align*}
    &\quad\ \delta^2((\bm{\theta}_1, \bm{\theta}_2),(\bm{\theta}'_1,\bm{\theta}'_2)) \\
    & \overset{\text{(A)}}{\leq} \sum_{k\in J}\lambda_k \Abs{\lambda_k^{-\frac 1p} \bm{u}_{k} - \lambda_k^{-\frac 1p} \bm{u}'_{k}}^2 (\Abs{\lambda_k^{-\frac 1p} \bm{u}_{k}}^{p-1} + \Abs{\lambda_k^{-\frac 1p} \bm{u}'_{k}}^{p-1})^2\\
    &\overset{\text{(B)}}{\leq} 2L^{2p}p \sum_{k\in J} \lambda_k \left( \frac{\Abs{\lambda_k^{-\frac 1p} (\bm{a}_{i_k}^{\top} \bm{\theta}_1 - \bm{a}_{i_k}^{\top} \bm{\theta}'_1)} + \Abs{\lambda_k^{-\frac 1p} (\bm{a}_{i_k}^{\top} \bm{\theta}_2 - \bm{a}_{i_k}^{\top} \bm{\theta}'_2)} }{(m p_{i_k})^{\frac{1}{p}} } \right)^2 \\
    &\hspace{4.5cm}
    \cdot \left( \Abs{\lambda_k^{-\frac 1p} \bm{v}_{k}}^{2p-2} + \Abs{\lambda_k^{-\frac 1p} \bm{v}'_{k}}^{2p-2} \right) \\
    &\overset{\text{(C)}}{\leq} 2^{p-1}p d^{\frac{p-2}{2}} L^{2p} R^{p-2} \sum_{k\in J} \lambda_k \left( \frac{\Abs{\lambda_k^{-\frac 1p} (\bm{a}_{i_k}^{\top} \bm{\theta}_1 - \bm{a}_{i_k}^{\top} \bm{\theta}'_1)} + \Abs{\lambda_k^{-\frac 1p} (\bm{a}_{i_k}^{\top} \bm{\theta}_2 - \bm{a}_{i_k}^{\top} \bm{\theta}'_2)} }{(m p_{i_k})^{\frac{1}{p}} } \right)^2 \\
    &\hspace{4.5cm}\cdot \left( \Abs{\lambda_k^{-\frac 1p} \bm{v}_{k}}^{p} + \Abs{\lambda_k^{-\frac 1p} \bm{v}'_{k}}^{p} \right) \\
    &\overset{\text{(D)}}{\leq}  2^{p-1}p d^{\frac{p-2}{2}} L^{2p} R^{p-2} \left( \norm{\bm{\theta}_1 - \bm{\theta}'_1}_J + \norm{\bm{\theta}_2 - \bm{\theta}'_2}_J \right)^2 \sum_{k\in J} \lambda_k (\Abs{\lambda_k^{-\frac 1p} \bm{v}_{k}}^{p} + \Abs{\lambda_k^{-\frac 1p} \bm{v}'_{k}}^{p})\\
    &\overset{\text{(E)}}{\leq} 2^{2p-1}p d^{\frac{p-2}{2}} L^{2p} R^{2p-2} \left( \norm{\bm{\theta}_1 - \bm{\theta}'_1}_J + \norm{\bm{\theta}_2 - \bm{\theta}'_2}_J \right)^2,
\end{align*}
where the inequality (A) follows from the fact that $\Abs{a}^p - \Abs{b}^p \leq p(\Abs{a}^{p-1} + \Abs{b}^{p-1}) \Abs{a-b}$, (B) follows from triangle inequality and $(a+b)^2 \leq 2(a^2 + b^2)$, (C) follows from~\eqref{eqn:max-crd-v} and (E) is obtained by $\norm{v}_p \leq \norm{SA\bm{\theta_1}}_p + \norm{SA\bm{\theta_2}}_p \leq 2R$.

For $1\leq p \leq 2$, we have
\begin{align*}
    &\quad\ \delta^2((\bm{\theta}_1, \bm{\theta}_2),(\bm{\theta}'_1,\bm{\theta}'_2)) \\
    & \leq \sum_{k\in J}\lambda_k \Abs{\lambda_k^{-\frac 1p} \bm{u}_{k} - \lambda_k^{-\frac 1p} \bm{u}'_{k}}^2 \left(\Abs{\lambda_k^{-\frac 1p} \bm{u}_{k}}^{p-1} + \Abs{\lambda_k^{-\frac 1p} \bm{u}'_{k}}^{p-1}\right)^2\\
    &\leq \max_{k\in J} \Abs{\lambda_k^{-\frac 1p} \bm{u}_{k} - \lambda_k^{-\frac 1p} \bm{u}'_{k}}^{p} \cdot \sum_{k\in J} \lambda_k\Abs{\lambda_k^{-\frac 1p} \bm{u}_{k} - \lambda_k^{-\frac 1p} \bm{u}'_{k}}^{2-p} \left(\Abs{\lambda_k^{-\frac 1p} \bm{u}_{k}}^{2p-2} + \Abs{\lambda_k^{-\frac 1p} \bm{u}'_{k}}^{2p-2}\right) \\
    &\leq L^{p} (\norm{\bm{\theta}_1 - \bm{\theta}'_1}_J + \norm{\bm{\theta}_2 - \bm{\theta}'_2}_J)^p \left(\sum_{k\in J} \lambda_k \Abs{\lambda_k^{-\frac 1p} \bm{u}_{k} - \lambda_k^{-\frac 1p} \bm{u}'_{k}}^{p}\right)^{\frac{2-p}{p}}\\
    &\hspace{4.5cm}\cdot\left[ \left(\sum_{k\in J} \lambda_k \Abs{\lambda_k^{-\frac 1p} \bm{u}_{k}}^{p}\right)^{\frac{2p-2}{p}} + \left(\sum_{k\in J} \lambda_k \Abs{\lambda_k^{-\frac 1p} \bm{u}'_{k}}^{p}\right)^{\frac{2p-2}{p}} \right]\\
    &\leq  L^{p} (\norm{\bm{\theta}_1 - \bm{\theta}'_1}_J + \norm{\bm{\theta}_2 - \bm{\theta}'_2}_J)^p \left(\sum_{k\in J} \lambda_k \Abs{\lambda_k^{-\frac 1p} \bm{u}_{k}}^p +  \lambda_k \Abs{\lambda_k^{-\frac 1p} \bm{u}'_{k}}^{p}\right)^{\frac{2-p}{p}} \\
    &\hspace{4.5cm}\cdot\left[ \left(\sum_{k\in J} \lambda_k \Abs{\lambda_k^{-\frac 1p} \bm{u}_{k}}^{p}\right)^{\frac{2p-2}{p}} + \left(\sum_{k\in J} \lambda_k \Abs{\lambda_k^{-\frac 1p} \bm{u}'_{k}}^{p}\right)^{\frac{2p-2}{p}} \right]\\
    &\leq 2^{p} L^{2p} R^{p}(\norm{\bm{\theta}_1 - \bm{\theta}'_1}_J + \norm{\bm{\theta}_2 - \bm{\theta}'_2}_J)^p,
\end{align*}
where we use H\"older's inequality $\norm{fg}_1\leq \norm{f}_\alpha \norm{g}_\beta$ with $\alpha = \frac{p}{2-p}$ and $\beta = \frac{p}{2p-2}$ in the third line.

For $p \geq 2$, the diameter of $T$ is upper bounded by 
\begin{align*}
&\quad\ \max_{(\bm{\theta}_1,\bm{\theta}_2)\in T, (\bm{\theta}'_1,\bm{\theta}'_2)\in T} \delta((\bm{\theta}_1,\bm{\theta}_2),(\bm{\theta}'_1,\bm{\theta}'_2)) \\
&\leq 2^{\frac{2p-1}{2}} pd^{\frac{p-2}{4}} L^p R^{p-1} (\norm{\bm{\theta}_1 - \bm{\theta}'_1}_J + \norm{\bm{\theta}_2 - \bm{\theta}'_2}_J) \\
&\leq 2^{\frac{2p-1}{2}} pd^{\frac{p}{4}} L^p R^{p},
\end{align*}
where we use the fact that $\norm{\bm{\theta}_1 - \bm{\theta}'_1}_J \leq d^{\frac 12}R$ from~\eqref{eqn:max-crd-v}. For $1\leq p\leq 2$, the diameter of $T$ is upper bounded by $L^p R^{\frac{p}{2}}  (\norm{\bm{\theta}_1 - \bm{\theta}'_1}_J + \norm{\bm{\theta}_2 - \bm{\theta}'_2}_J)^{\frac p2} \leq d^{\frac 12}L^p R^{p} $ where we obtain $\norm{\bm{\theta}_1 - \bm{\theta}'_1}_J \leq d^{\frac 1p}R$ from~\eqref{eqn:max-crd-v}.
\end{proof}

\begin{lemma}\label{lem:bounded-LS}
    Let $p>0$. Suppose that $A \in \R^{n \times d}$ and $t_1,\dots,t_n\in \R$ such that $t_i \geq \beta w_i(A)$ for all $i$. Let $m = \sum_i t_i$ and  $S \in \R^{m \times n}$ be a reweighted sampling matrix of with row sampling probabilities $p_1,\dots,p_n$, where $p_i = \frac{t_i}{m}$.  If $\beta \geq \eps^{-2} \log(d/\delta)$ , then the $\ell_p$ Lewis weights of $SA$ are upper bounded by $2/\beta$ with probability at least $1-\delta$.
\end{lemma}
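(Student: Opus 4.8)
The plan is to exhibit explicit candidate weights for the rows of $SA$ that are bounded by $1/\beta$ by construction, to show via matrix concentration that they nearly satisfy the Lewis fixed-point equation of \cref{def:lewis}, and then to invoke the stability result \cref{lem:good_Lewis_initial_strong} to transfer the bound to the true Lewis weights. Writing the $k$-th row of $SA$ as $\tilde{\bm a}_k = (m p_{i_k})^{-1/p}\bm a_{i_k}$, where $i_k$ is the index drawn at step $k$, I would set the candidate $\hat w_k := w_{i_k}(A)/(m p_{i_k}) = w_{i_k}(A)/t_{i_k}$. Since $t_i\ge\beta w_i(A)$, we have $\hat w_k\le 1/\beta$ immediately, so it remains only to show that the actual Lewis weights $\overline w_k$ of $SA$ lie within a factor $2$ of the $\hat w_k$.

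The core step is a spectral comparison. Let $M_A = \sum_i w_i(A)^{1-2/p}\bm a_i\bm a_i^\top$ be the matrix defining the Lewis weights of $A$ and $M = \sum_k \hat w_k^{1-2/p}\tilde{\bm a}_k\tilde{\bm a}_k^\top$ its analogue for $SA$ under the candidate weights. A short computation gives $\hat w_k^{1-2/p}\tilde{\bm a}_k\tilde{\bm a}_k^\top = (m p_{i_k})^{-1}w_{i_k}(A)^{1-2/p}\bm a_{i_k}\bm a_{i_k}^\top$, so $\E M = M_A$. Setting $Y_k = M_A^{-1/2}\hat w_k^{1-2/p}\tilde{\bm a}_k\tilde{\bm a}_k^\top M_A^{-1/2}$, each $Y_k$ is a rank-one PSD matrix with $\sum_k\E Y_k = I_d$, and the Lewis fixed point for $A$, namely $\bm a_i^\top M_A^{-1}\bm a_i = w_i(A)^{2/p}$, collapses its operator norm to $\norm{Y_k}_2 = w_{i_k}(A)/t_{i_k}\le 1/\beta$. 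A matrix Chernoff bound then yields $(1-\eps)M_A\preceq M\preceq(1+\eps)M_A$ with probability at least $1-\delta$ as soon as $\beta\gtrsim\eps^{-2}\log(d/\delta)$, matching the hypothesis.

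Given $M\approx M_A$ spectrally, I would use the fixed point once more to evaluate $\tilde{\bm a}_k^\top M^{-1}\tilde{\bm a}_k = (1\pm O(\eps))(m p_{i_k})^{-2/p}\bm a_{i_k}^\top M_A^{-1}\bm a_{i_k} = (1\pm O(\eps))\hat w_k^{2/p}$. This is precisely the two-sided hypothesis of \cref{lem:good_Lewis_initial_strong} for $SA$ with weights $\hat w_k$ and distortion $\alpha = (1+\eps)/(1-\eps)$, so that lemma gives $\overline w_k\le\alpha^{c(p,d)}\hat w_k\le\alpha^{c(p,d)}/\beta$; taking $\eps$ small enough that $\alpha^{c(p,d)}\le 2$ yields $\overline w_k\le 2/\beta$.

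The main obstacle is controlling the stability exponent $c(p,d)$. For $p<4$ it is a $p$-dependent constant, so a constant accuracy $\eps$ suffices and $\beta\gtrsim_p\log(d/\delta)$ is enough; but for $p\ge 4$ one has $c(p,d)\sim_p\sqrt d$, which forces $\eps\lesssim 1/\sqrt d$ and hence a larger $\beta$. The saving grace is that the only use of this lemma is inside the proof of \cref{lem:key}, where $\beta$ already carries a factor $d^{\max\{p/2-1,0\}}\ge d$ when $p\ge 4$, providing exactly the slack needed to absorb the stability exponent; I would make this instantiation explicit rather than leaving it hidden in the stated bound $\beta\ge\eps^{-2}\log(d/\delta)$.
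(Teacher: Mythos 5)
Your proof has the same skeleton as the paper's: the same candidate weights $\hat w_k = w_{i_k}(A)/t_{i_k}\le 1/\beta$, the same matrix-concentration step showing that these candidates nearly satisfy the Lewis fixed-point equation for $SA$ (the paper normalizes $A^\top W^{1-2/p}A=I_d$ and works with $X_k = \frac{\bm{a}_{i_k}\bm{a}_{i_k}^\top}{p_{i_k}}w_{i_k}^{1-2/p}$; your $Y_k$ formulation is the same computation, and is if anything stated more cleanly than the paper's), and then a final stability step transferring the approximate fixed point to the true Lewis weights of $SA$.

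The substantive difference is in that stability step, and it is exactly where your argument falls short of the statement. The paper does \emph{not} use the two-sided stability result Lemma~\ref{lem:good_Lewis_initial_strong}: it establishes only the upper-bound half of the fixed-point condition and then applies the one-sided stability bound of \cite[Lemma A.2]{CLS2022}, which converts $\tilde{\bm a}_k^\top M^{-1}\tilde{\bm a}_k\le(1+2\eps)\hat w_k^{2/p}$ (in your notation) directly into $w_k(SA)\le(1+2\eps)\hat w_k$, with no exponent depending on $d$ or $p$. This is what allows the paper to set $\eps=1/2$ and obtain the constant $2$ for every $p>0$ under the stated hypothesis $\beta\gtrsim\eps^{-2}\log(d/\delta)$. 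Your route through Lemma~\ref{lem:good_Lewis_initial_strong} pays $\alpha^{c(p,d)}$, and since $c(p,d)\sim_p\sqrt d$ for $p\ge4$ (and $c(p,d)=(p/2)/(2-p/2)$ blows up as $p\to4^-$), keeping $\alpha^{c(p,d)}\le2$ forces $\eps\lesssim1/\sqrt d$, hence $\beta\gtrsim d\log(d/\delta)$, for $p\ge4$. So, as you yourself observe, what you prove is a weaker lemma than the one stated. Your patch---absorbing the extra factor into the $d^{\max\{p/2-1,0\}}\ge d$ slack that $\beta$ carries in Lemma~\ref{lem:key}---is valid there, and the same slack exists at the other place the lemma is invoked (Theorem~\ref{thm:iid-se} in the appendix), so the paper's downstream results would survive with your weaker version. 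But the lemma as written, for all $p>0$ with a dimension-free requirement on $\beta$ and the constant $2$, is not established by your argument; to get it, replace the two-sided Lemma~\ref{lem:good_Lewis_initial_strong} in your last step by a one-sided stability estimate (\cite[Lemma A.2]{CLS2022}, or the one-sided Lewis weights of \cite{WY2023online}), after which the rest of your write-up goes through verbatim.
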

\begin{proof}
    Let $\bm{a}_i\in \R^{d\times 1}$ be the $i$-th row of $A$. Without loss of generality, suppose $A^\top W^{1-\frac p2} A = I_d$. Hence, the Lewis weights of $A$ are $w_i^{\frac{2}{p}}=\bm{a}_i^\top (A^\top W^{1-\frac p2} A)^{-1} \bm{a}_i = \bm{a}_i^\top \bm{a}_i = \norm{\bm{a}_i}_2^2$. We claim that 
    \[
    (1-\eps)I_d\preceq \sum_{k=1}^{m} \frac{\bm{a}_{i_k}\bm{a}_{i_k}^\top}{mp_{i_k}} w_{i_k}^{1-\frac{2}{p}} \preceq (1+\eps) I_d
    \]
    holds with probability at least $1-\delta$.
    Let $X_k = \frac{\bm{a}_{i_k}\bm{a}_{i_k}^\top}{p_{i_k}} w_{i_k}^{1-\frac{2}{p}}$ and then we have $\E X_k = I_d$. First, we have $\E X_k = I_d$ and $\norm{X_k - I_d}_2\leq 1 +  \frac{\norm{\bm{a}_{i_k}}_2^2}{ w_{i_k}/d} w_{i_k}^{1-\frac{2}{p}} = 1 + \frac{m}{\beta}$. Besides, we have that 
    \begin{align*}
        \Norm{ \E \left(X_k - I_d\right) }_2^2 &= \Norm{ \E (X_k - I_d)^\top(X_k - I_d)}_2\\
        &= \Norm{ \E X_k^\top X_k - I_d}_2\\
        &= \Norm{\frac{w_{i_k}}{p_{i_k}} \cdot \E \frac{\bm{a}_{i_k}\bm{a}_{i_k}^\top w_{i_k}^{1-\frac{2}{p}}}{p_{i_k}} - I_d }_2 \\
        &=  \Norm{ \frac{w_{i_k}}{p_{i_k}} \sum_{i=1}^{n} \bm{a}_{i} \bm{a}_{i}^\top w_{i_k}^{1-2/p} + I_d }_2 \\
        &\leq 1 + \frac{m}{\beta}.
    \end{align*}
    By matrix Chernoff bound, it follows that
    \begin{align*}
        \Pr\left\{{\Norm{ \frac{1}{m} \sum_{k=1}^{m} \left( X_k - I_d \right) }_2\geq \eps}\right\} &\leq 2d\exp\left(\frac{- m \eps^2}{ 1 + d + (1+d)\cdot \eps/3}\right)\\
        &\leq 2d \exp\left(- \beta\eps^2 \right)
    \end{align*}
    Setting $\beta = \Theta(\frac{d}{\eps^2} \log \frac{d}{\delta})$ guarantees the failure probability to be at most $\delta$, proving the claim. 
    Therefore, we have that $$(1-\eps) \left(\frac{d}{m}\right)^{1-\frac 2p} I_d \preceq \left[ \sum_{k=1}^{m} \frac{\bm{a}_{i_k}}{(m p_{i_k})^\frac{1}{p}} \left(\frac{w_{i_k}}{d p_{i_k}}\right)^{1-\frac 2p} \frac{\bm{a}_{i_k}^\top}{(m p_{i_k})^\frac{1}{p}} \right]^{-1} \preceq (1+2\eps) \left(\frac{d}{m}\right)^{1-\frac 2p}I_d$$ holds with probability at least $1-\delta$.
    Hence, it follows that
    \begin{align*}
        \frac{\bm{a}_i^\top}{(m p_i)^{1/p}} \cdot \left[ \sum_{k=1}^{m} \frac{\bm{a}_{i_k}}{(m p_{i_k})^\frac{1}{p}} \left(\frac{d p_{i_k}}{w_{i_k}}\right)^{\frac 2p -1} \frac{\bm{a}_{i_k}^\top}{(m p_{i_k})^\frac{1}{p}} \right]^{-\!1} \cdot \frac{\bm{a}_i}{(m p_i)^{1/p}}
        \leq (1+2\eps) \frac{d}{m} \left(\frac{w_{i_k}}{d p_{i_k}}\right)^{2/p}.
    \end{align*}
    Applying \cite[Lemma A.2]{CLS2022} and setting $\eps = \frac{1}{2}$ gives that $w_i(SA) \leq 2\frac dm \frac{w_{i}}{d p_{i}} \leq \frac{2}{\beta}$.
\end{proof}

\begin{lemma}\label{lem:rademacher-tail-bound}
    Let $p\geq 1$. Suppose that $A\in \R^{n\times d}$ and $w_i(A)\leq 1/\beta$ for all $i$ and $\beta>1$.
    Let $\Lambda = \max_{x:\Norm{Ax}_p \leq 1} \Abs {\sum_{i=1}^{n} \sigma_i \Abs{(Ax)_i}^p }$, where $\sigma_1,\dots,\sigma_n$ are independent Rademacher variables.
    Then the following tail bound holds: \[
        \Pr\left\{ \Lambda \geq \left[C \frac{d^{\max\{\frac p2-1, 0\}}}{\beta} \right]^{\frac 12} \left [\log^2d \log n + z \right ]  \right\} \leq 2\exp(-z^2).
    \]
\end{lemma}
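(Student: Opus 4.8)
The plan is to view $\Lambda$ as the supremum of a Rademacher (hence subgaussian) process $X_x = \sum_{i=1}^n \sigma_i \abs{(Ax)_i}^p$ indexed by $K = \{x : \norm{Ax}_p \leq 1\}$, and to control it by generic chaining. This is essentially the ``large Lewis weight'' part of the proof of \cref{lem:key}, but in a clean deterministic-$A$ setting: $A$ is fixed and the only randomness is in the signs $\sigma_i$, so neither symmetrization nor conditioning is needed. Writing $y = Ax$, the increments $X_x - X_{x'} = \sum_i \sigma_i(\abs{y_i}^p - \abs{y_i'}^p)$ are subgaussian with respect to $\rho_0(x,x')^2 = \sum_i (\abs{y_i}^p - \abs{y_i'}^p)^2$. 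The key observation is that the uniform Lewis-weight bound lets us factor out $\beta^{-1/2}$: since $\abs{y_i}^p = w_i \abs{w_i^{-1/p} y_i}^p$, setting $c_i(x) = \abs{w_i^{-1/p}(Ax)_i}^p$ gives $\abs{y_i}^p - \abs{y_i'}^p = w_i(c_i(x) - c_i(x'))$, whence
\[
\rho_0(x,x')^2 = \sum_i w_i^2 \left(c_i(x) - c_i(x')\right)^2 \leq \left(\max_i w_i\right) \sum_i w_i \left(c_i(x) - c_i(x')\right)^2 \leq \frac{1}{\beta}\, \delta(x,x')^2,
\]
where $\delta$ is exactly the metric \eqref{def:delta-dist} of \cref{lem:delta-bound}, specialized to $f = \mathrm{id}$, $L = R = 1$ and with the sampling reweights removed, i.e.\ with $\lambda_k$ replaced by the Lewis weight $w_i$ of $A$ and $\norm{\cdot}_J$ replaced by $\norm{x}_* := \max_i w_i^{-1/p}\abs{(Ax)_i}$. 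Thus $\rho_0 \leq \beta^{-1/2}\delta$.

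Next I would feed $\delta$ into the generic-chaining / Dudley tail bound \cite[Theorem 8.1.6]{Vers18}. Since $0 \in K$ and $X_0 = 0$, monotonicity of the chaining bound under $\rho_0 \leq \beta^{-1/2}\delta$ yields
\[
\Pr\left\{ \Lambda \gtrsim \frac{1}{\sqrt{\beta}}\left( \int_0^\infty \left(\log N(K, \delta, \eps)\right)^{\frac 12}\, d\eps + z\, \diam_\delta(K) \right) \right\} \leq 2\exp(-z^2).
\]
It then remains to show that both the diameter and the entropy integral are $O(d^{\max\{p/4 - 1/2,\, 0\}})$, up to the polylogarithmic factor $\log^2 d\,\sqrt{\log n}$ on the integral; multiplying by $\beta^{-1/2}$ produces the prefactor $(C d^{\max\{p/2-1,0\}}/\beta)^{1/2}$ and, after the crude bound $\log d\sqrt{\log n} \leq \log^2 d\log n$, the stated tail.

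For the two geometric quantities I would reuse the computation of \cref{lem:delta-bound} (again with $f=\mathrm{id}$, $L=R=1$, $\lambda_i \to w_i$, $\norm{\cdot}_J \to \norm{\cdot}_*$) together with the coordinate bound \eqref{eqn:max-crd-v}, which in the present normalization reads $\norm{x}_* \leq d^{\max\{0,\, 1/2 - 1/p\}}\norm{Ax}_p$. For $p \geq 2$ a short computation gives $\delta(x,x') \lesssim_p d^{(p-2)^2/(4p)}\norm{x-x'}_*$, and since $\norm{x-x'}_* \lesssim d^{1/2-1/p}$ on $K$, the exponents collapse to $\diam_\delta(K) \lesssim_p d^{p/4-1/2}$; for $1 \leq p \leq 2$ the H\"older-type increment bound of \cref{lem:delta-bound} with $\norm{x-x'}_* \lesssim 1$ gives $\diam_\delta(K) = O(1)$. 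The entropy integral is split at the diameter scale: a volumetric estimate $\log N \lesssim d\log(1 + d^{1/2}/\eps)$ controls the small-$\eps$ regime, while the large-$\eps$ regime is handled by the Ledoux--Talagrand entropy estimates \cite[Proposition 15.18]{LT91} ($p \geq 2$) and \cite[Proposition 15.19]{LT91} ($1 \leq p \leq 2$), which contribute the $\sqrt{\log n}$ factor. Together they yield $\int_0^\infty (\log N(K,\delta,\eps))^{1/2}\,d\eps \lesssim_p d^{\max\{p/4-1/2,0\}}\log d\,\sqrt{\log n}$.

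The hard part will be the entropy-integral estimate: one must track the exact power of $d$ through the $\delta$-to-$\norm{\cdot}_*$ comparison and the two-regime split, and verify that the volumetric bound (small scales) and the dual-Sudakov/Ledoux--Talagrand bound (large scales) combine to $d^{\max\{p/4-1/2,0\}}\log^2 d\sqrt{\log n}$ without an extra factor of $d$ or $\log$. A second delicate point is that $p\geq 2$ and $1\leq p\leq 2$ require genuinely different increment comparisons---Lipschitz-type versus H\"older-type---and the coordinate bound \eqref{eqn:max-crd-v} changes exponent at $p=2$; it is precisely the cancellation between this coordinate exponent, the increment-comparison exponent, and the factored-out $\beta^{-1/2}$ that produces the clean $\max\{p/2-1,0\}$ in the final bound, so these must be checked to agree in both regimes. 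One should also confirm that, because $\max_i w_i \leq 1/\beta$ holds uniformly, all coordinates can be carried through the chaining directly (or handled by the same small/large split as in \cref{lem:key} with the small-weight bucket contributing only a lower-order term), so that no additive error survives in the final clean statement.
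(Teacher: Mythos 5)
Your proposal is correct in outline and follows essentially the same route as the paper's proof: both are reruns of the chaining argument behind \cite[Theorem 15.13]{LT91} (equivalently, the large-Lewis-weight half of the proof of \cref{lem:key}, specialized to $f=\mathrm{id}$, $L=R=1$ and no sampling), in which the uniform bound $w_i(A)\le 1/\beta$ is factored out of the subgaussian metric as $\beta^{-1/2}$, and Dudley's tail bound is combined with the entropy estimates of \cite[Propositions 15.18 and 15.19]{LT91} and a diameter bound. Your exponent bookkeeping is also right: $\rho_0\le\beta^{-1/2}\delta$, the increment exponent $(p-2)^2/(4p)$ for $p\ge 2$, and the collapse of the diameter to $d^{p/4-1/2}$ (resp.\ $O(1)$ for $1\le p\le 2$) all match the paper's computation after the change of normalization $\lambda_i=w_i/d$.

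The one point that must be firmed up is exactly the one you flag at the end, and it resolves against your primary route: chaining over all $n$ coordinates at once is not licensed by the estimates you cite. The restriction to the index set $J$ of not-too-small weights in \cite[Theorem 15.13]{LT91} and in \cref{lem:key} is not cosmetic; the large-scale entropy bounds are invoked for the sup-type norm over an index set on which the normalized weights are bounded below (this is why both proofs first discard $\{i:\lambda_i<1/n^2\}$), and the uniform \emph{upper} bound $w_i\le 1/\beta$ --- which is what produces the $\beta^{-1/2}$ prefactor --- gives no control whatsoever on arbitrarily small weights, so it cannot justify carrying all coordinates through the chaining. Hence your parenthetical fallback is mandatory, not optional: restrict the chaining to $J$ and bound the complement deterministically, $\max_{\norm{Ax}_p\le 1}\abs{\sum_{i\notin J}\sigma_i\abs{(Ax)_i}^p}\le \sum_{i\notin J}\lambda_i\, d^{\max(1,p/2)}\le d^{\max(1,p/2)}/n$, which is precisely the additive term $3d^{\max(p/2,1)}/(2n)$ in the paper's modified version of \cite[Eq.~(15.17)]{LT91}. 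This term is then absorbed into the stated bound because $n\ge \beta d$ (from $\sum_i w_i=d$ and $w_i\le 1/\beta$), combined with the observation that $\Lambda\le 1$ always, so the claimed tail is vacuous unless $\beta\gtrsim d^{\max(p/2-1,0)}$. With that repair, your argument coincides with the paper's.
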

\begin{proof}
    The tail bound is proven by Dudley's integral tail bound\[
    \Pr\left\{\sup_{t\in T} X_t \gtrsim \int_{0}^{\infty} \sqrt{\ln N(T, d, \eps)} d\eps + z\cdot \text{diam}(T) \right \} \leq 2\exp(-z^2),
    \] where $N(T, d, \eps)$ is the $\eps$-covering number of $T$ and diam$(T)$ is the diameter of the space $T$. In our setting, $T$ is the subspace $\{y=Ax: x\in \R^d\}$. From \cite[Equation~(15.17) and (15.18)]{LT91}, the diameter is bounded by $d^{\max(\frac p4, \frac 12)}$. 
    By Dudley's integral, we have $\E_\sigma \Lambda \lesssim \int_{0}^{\infty} \sqrt{\ln N(T, d, \eps)} d\eps$. The upper bound of the integral was proven in \cite[Theorem 15.13]{LT91}, assuming that $w_i(A)\leq d/n$. The same proof can go through when the upper bound of $w_i(A)\leq 1/\beta$, 
with \cite[Eq.~(15.17)]{LT91} replaced with
\[
    \E \Lambda \leq \frac{3d^{\max(\frac p2,1)}}{2n} + \left(\frac{2}{d\beta}\right)^{\frac 12}\Xi,
\]
where
\[
\Xi = \E_{\sigma_i} \sup_{x:\norm{W^{-\frac{1}{p}} Ax}_p \leq 1} \Abs{\sum_{i \in J} \left(\frac{w_i}{d}\right)^{\frac 12} \sigma_i \Abs{x_i}^p}
\]
In the proof of \cite{LT91}, $\lambda_i$ is our $\frac{w_i}{d}$, and the factor $\frac{1}{M}$ is replaced with $\frac{1}{d\beta}$ due to the change of Lewis weights' upper bound from $\frac{n}{M}$ to $\frac{1}{\beta}$.
 
The main difficulty is to upper bound $\Xi$, which is again done by using Dudley's integral. 
The argument to upper bound the integral in \cite[Theorem 15.13]{LT91} still goes through when the upper bound of Lewis weights is changed,  yielding that $\Xi \leq C  d^{\max(\frac p4, \frac 12)} \log d \sqrt{\log n}$. Combining the diameter of $T$ and the inequality for $\E \Lambda$ gives us the result.
\end{proof}

\section{Experiment} \label{sec:experiment}

In this section, we conduct experiments to validate the effectiveness of our method\footnote{All experiments are conducted on a machine with four GeForce RTX 3090 graphic cards and an Intel Xeon Gold 5317 CPU. The source code is included in the supplementary material for experiment reproducibility.}. Due to the space limitation, some empirical settings and experimental results are presented in the appendix.

%\subsection{Empirical Settings}

\paragraph{Empirical Settings.} We incorporate two learning scenarios in our experiments, i.e., fine-tuning and vanilla deep learning. The first one is a common learning scenario for big models. It first pre-trains the model on preliminary tasks. Then, the weights of the network backbone are fixed, and only the prediction heads are fine-tuned on downstream tasks. This setting aligns well with our problem formulation. The second scenario is the default learning scheme, i.e., updating all the parameters of the network with the training dataset.

We employ 50 distinct network architectures as the target models. These architectures are published by a recent NAS method OFA \cite{cai2019once} for accommodating diverse resource-constraint devices, ranging from NVIDIA Tesla V100 GPU to mobile devices. It aligns well with our problem setting. We conduct experiments on 11 datasets, including 8 classification benchmarks:  MNIST \cite{lecun1998gradient}, Fashion-MNIST \cite{xiao2017/online}, Kuzushiji-MNIST \cite{clanuwat2018deep}, SVHN \cite{netzer2011reading}, EMNIST-letters and EMNIST-digits \cite{CohenATS17}, CIFAR-10 and CIFAR-100 \cite{krizhevsky2009learning}; and 3 regression benchmarks: Biwi \cite{fanelli2013random}, FLD \cite{sun2013deep} and CelebA~\cite{liu2015faceattributes}.  The specifications of the datasets and model configurations are deferred to the Appendix~\ref{sec:detailsetting}. The active learning settings are outlined as follows.
\begin{itemize}
    \item For the scenario of vanilla deep learning, we conduct performance comparisons on the classification benchmarks. Specifically, 3000 instances are sampled uniformly from the training set to initialize the models. The other compared methods will then select 3000 unlabeled instances from the remaining data points for querying at each iteration, while our method conducts one-shot querying with budgets of 9000 and 15000 instances. The cross-entropy loss is employed in model training. In this scenario, the one-shot methods also query 3000 instances per batch for better comparison. However, these methods select batches independently.
    \item For the fine-tuning scenario, we use the regression datasets. Initially, 500 instances are sampled uniformly from the training set to fine-tune each network. Then, we fix the backbone parameters and actively query the labels among the remaining instances. Afterwards, 50 linear prediction layers with mean squared error~(MSE) loss and ReLU activation function are trained on the updated labeled dataset, utilizing the features extracted by different network backbones. In this scenario, all the compared methods have the same query budgets  of 3000 and 6000 instances.
\end{itemize}

\begin{figure*}[t]
%	\centering
	\subfloat[MNIST]{
		\includegraphics[width=0.3\textwidth]{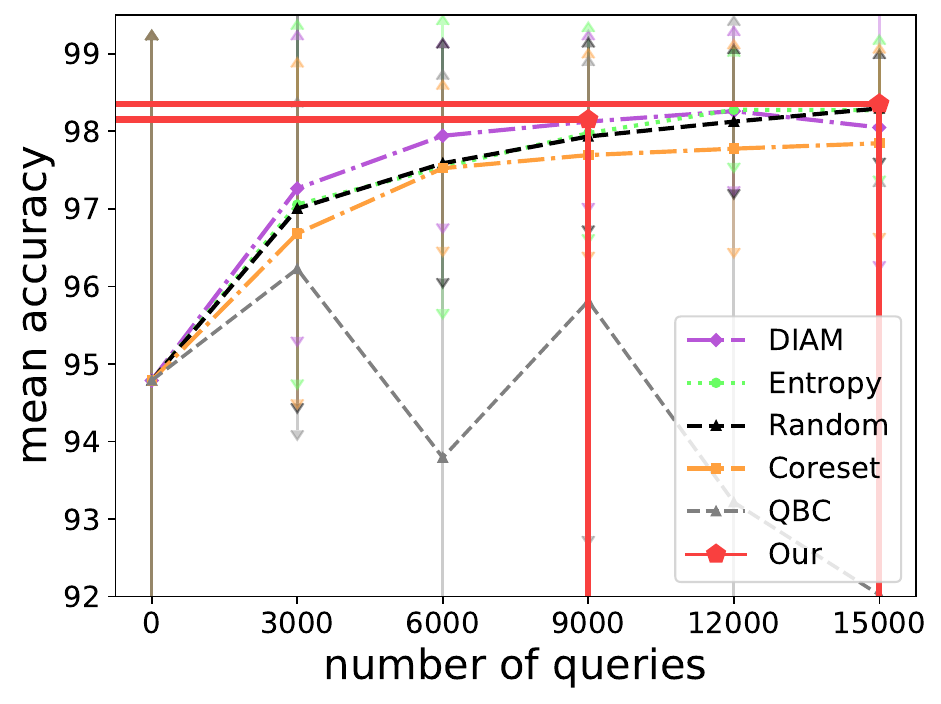}\label{subfig:1}
	}
	\subfloat[Kuzushiji-MNIST]{
		\includegraphics[width=0.3\textwidth]{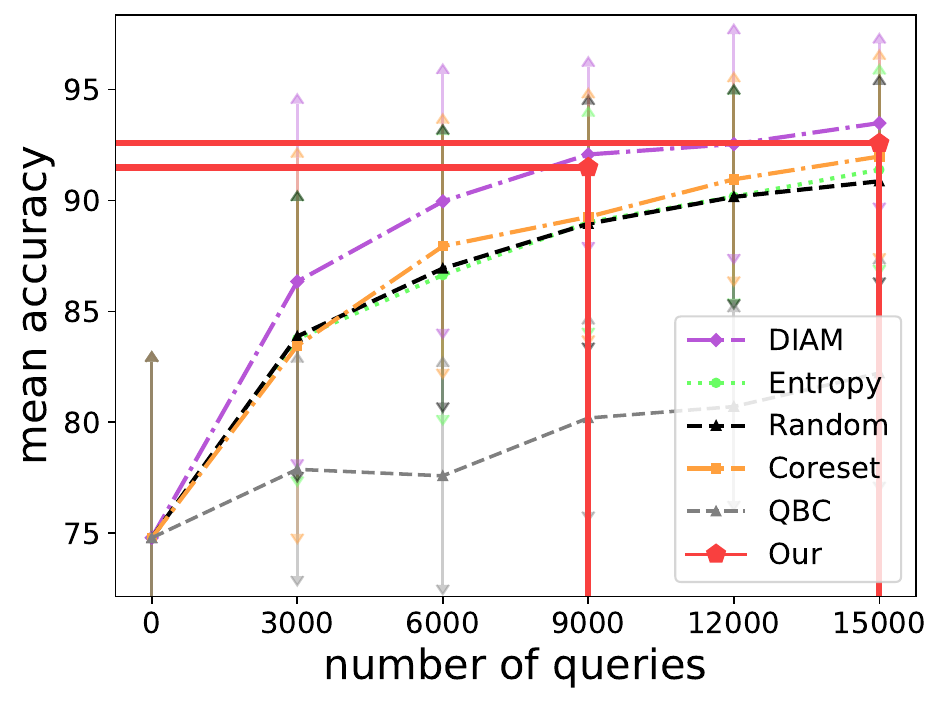}
	}
	\subfloat[Fashion-MNIST]{
		\includegraphics[width=0.3\textwidth]{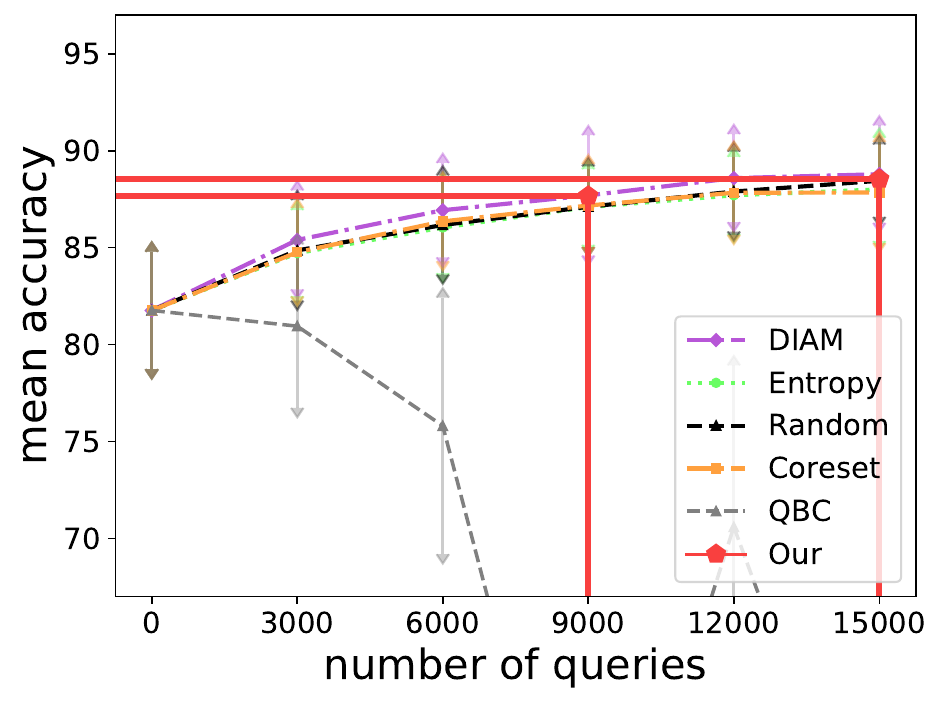}
	}\\
	\subfloat[SVHN]{
		\includegraphics[width=0.3\textwidth]{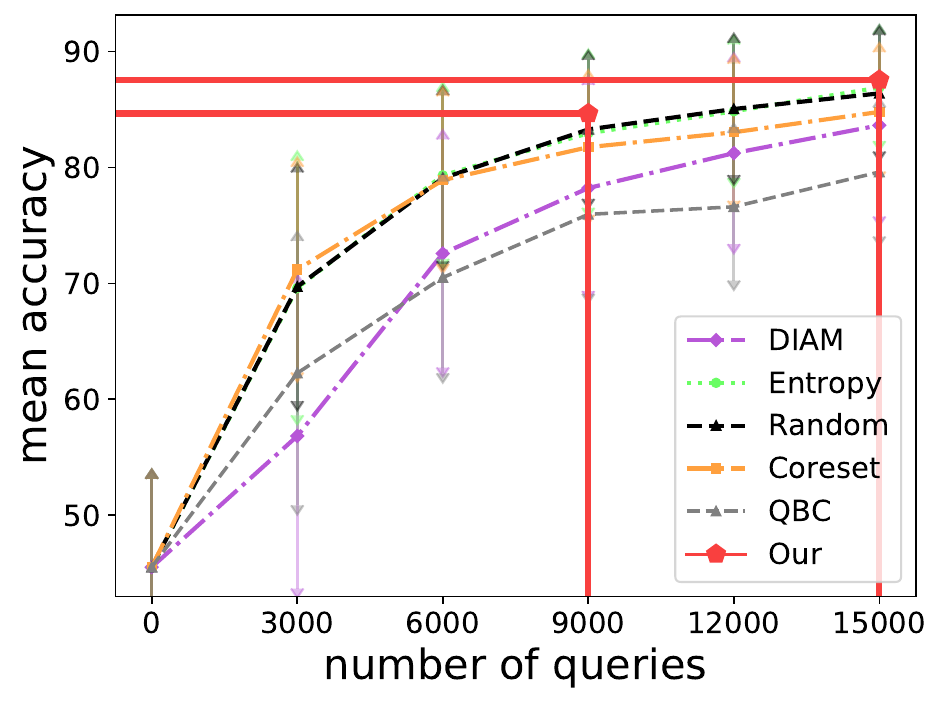}
	}
	\subfloat[CIFAR-10]{
		\includegraphics[width=0.3\textwidth]{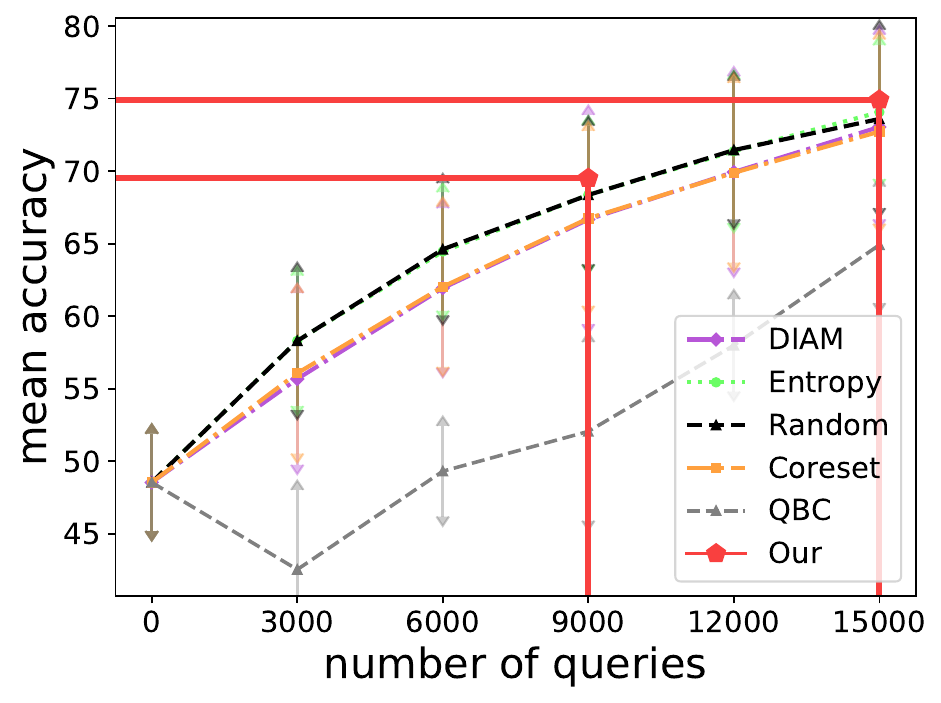}\label{subfig:2}
	}
 	\subfloat[CIFAR-100]{
		\includegraphics[width=0.3\textwidth]{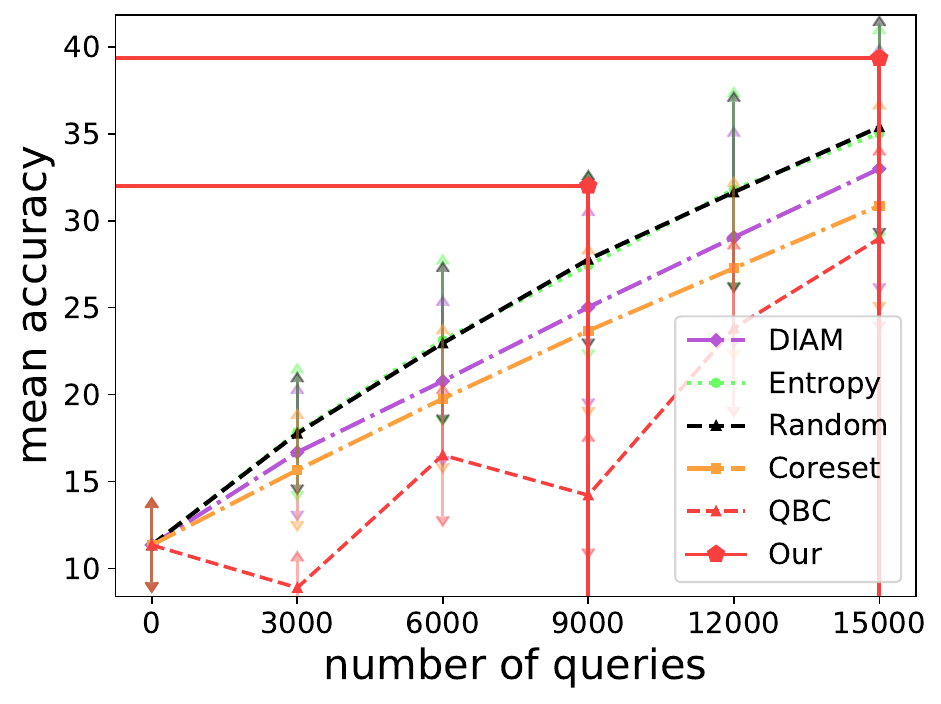}\label{subfig:3}
	}\\
 	\subfloat[EMNIST-letters]{
		\includegraphics[width=0.3\textwidth]{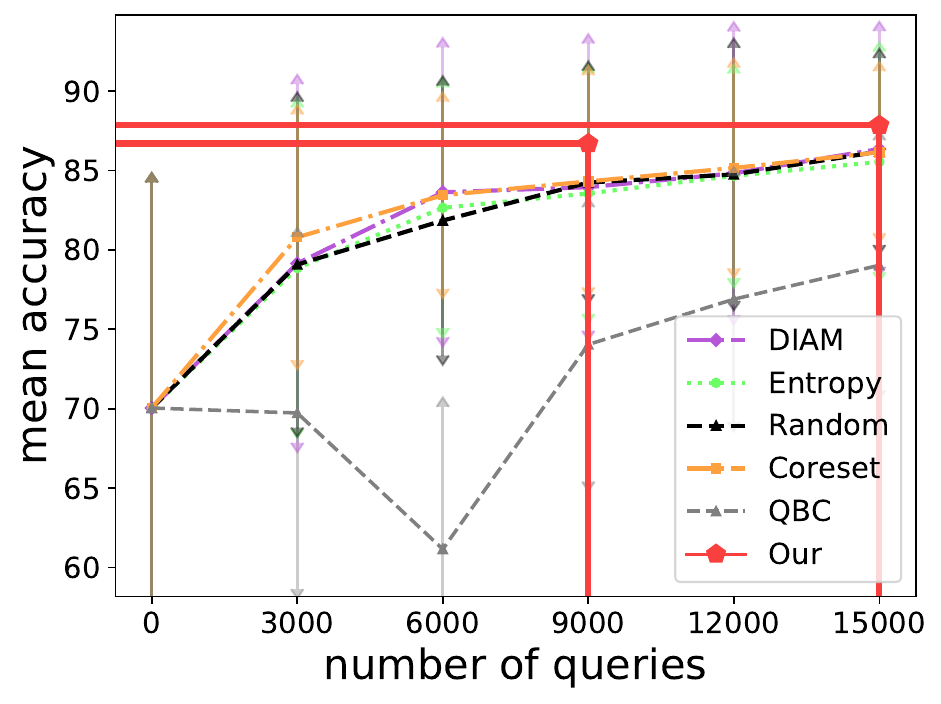}
	}
  	\subfloat[EMNIST-digits]{
		\includegraphics[width=0.3\textwidth]{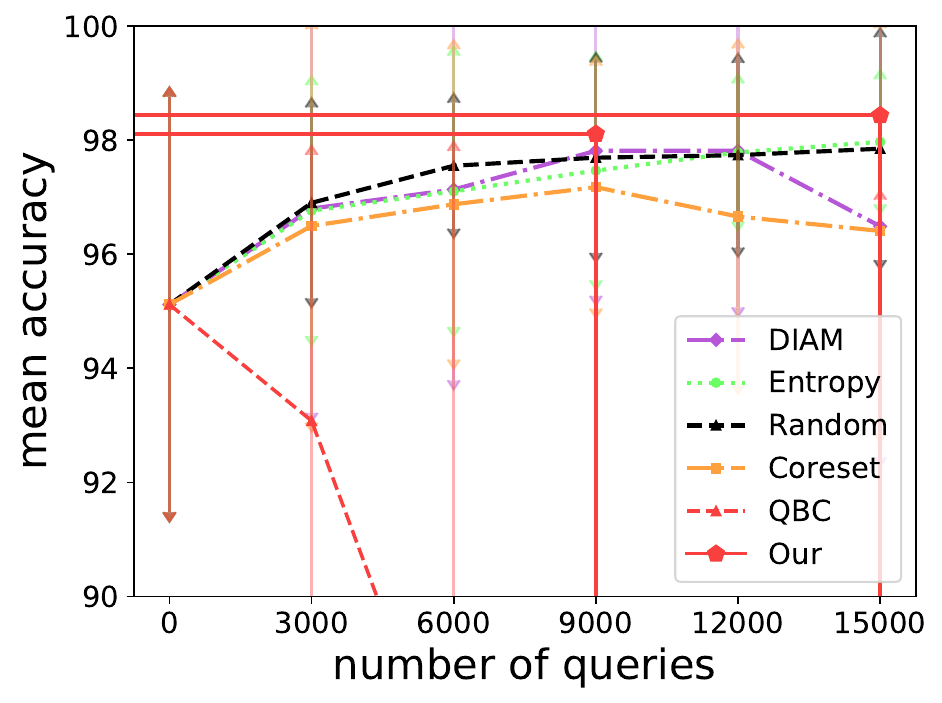}
	}
	\caption{Results of Performance comparison in classification datasets. The error bars indicate the standard deviation of the performances of multiple models.}\label{fig:mainresults}
\end{figure*}

\begin{table}[b]
\centering
  \vspace{-0.5cm}
  \caption{Comparisons on the running time between our method and the other baselines with a query budget 15000 instances. The running time includes data querying and model training (GPU hours).} \label{tab:runningtime}
  \setlength{\tabcolsep}{2mm}{
    \begin{tabular}{c|c| c| c| c| c|c|c| c}
    \toprule
	\hline
        &MNIST & F.MNIST & K.MNIST & SVHN & CIF.10 & CIF.100 & EMN.l. &  EMN.d. \\
        \hline
        $\mathsf{DIAM}$    &  46.643  & 47.597  &  46.765  & 52.228  &  45.493 & 53.532 & 73.522 & 120.840 \\
        % $\mathsf{Coreset}$ &  22.551  & 22.676  &  22.963  &23.975 &  23.912 & &   26.130 & \\
        $\mathsf{QBC}$ &  23.937  & 24.419 &  24.502  & 26.011  &  25.541  &  30.498 &  36.280 & 40.231 \\
        $\mathsf{Entropy}$ &  24.060  & 24.293 &  24.455  & 25.792  &  25.173  &  28.655 &  34.291 & 42.719 \\
        \hline
        Our     &  {5.299}  & {5.366}  &  {5.354}  & {5.605}  &  {5.350}  &  {5.57} & {9.717} & {12.711}\\
        Coreset & 5.200 & 5.201 & 5.285 & 5.466 &5.450 & 5.745 & 8.984 & 11.043 \\
        Random & 4.317 & 4.333 & 4.402 & 4.583 & 4.567 & 4.712 & 7.317 & 8.027 \\
        \hline
        \bottomrule
    \end{tabular}
    }
\end{table}

We compare our algorithm with the following methods in the vanilla deep learning scenario. 
\begin{itemize}
    \item (iterative) $\mathsf{DIAM}$ \cite{tang2022active}: The state-of-the-art iterative AL method for multiple target models, which prefers the instances in the joint disagreement regions of multiple models.
    \item (iterative) $\mathsf{Entropy}$ \cite{lewis1994heterogeneous}: This strategy selects instances with the highest prediction entropy. We follow the implementation in \cite{tang2022active} to adapt it to multiple models. It queries the instances with the highest mean prediction entropy.
    \item (iterative) $\mathsf{QBC}$ \cite{seung1992query}: This strategy selects the instances that the target models have the most inconsistent predictions. The inconsistency is evaluated by KL divergence.
    \item (one-shot) $\mathsf{Coreset}$ \cite{sener2018active}: This strategy selects the most representative instances. We follow the implementation in \cite{tang2022active} to adapt it to multiple models. It solves the coreset problem based on the features extracted by the supernet in OFA.
    \item (one-shot) $\mathsf{Random}$: This strategy selects instances uniformly from the unlabeled pool.
\end{itemize}

In the fine-tuning scenario, fewer existing methods are available.
% , as our work is the first for active regression with multiple target models. 
Specifically, we compare our algorithm with $\mathsf{Coreset}$, $\mathsf{Random}$ and $\mathsf{QBC}$ methods. Although $\mathsf{QBC}$ is usually implemented in an iterative fashion, we employ a large query batch size for it to unify the query settings.
Our method selects and reweights the unlabeled instances based on the leverage scores (i.e., $p=2$) in both scenarios. Note that, in the fine-tuning scenario, our implementations remove the constraint $E$ in Line~\ref{line:learnlinear} in Algorithm~\ref{alg:max_lewis_weight} for better examination of the practicability. In the vanilla deep learning scenario, we use the default training scheme of deep models to replace Line~\ref{line:learnlinear} in Algorithm~\ref{alg:max_lewis_weight}. The mean accuracy and the mean MSE are used to evaluate the performances of multiple target models for classification and regression tasks, respectively. 

\begin{figure*}[t]
    \vspace*{-1cm}
	\centering
	\subfloat[Biwi (3000 instances)]{
		\includegraphics[width=\threefigsmargin]{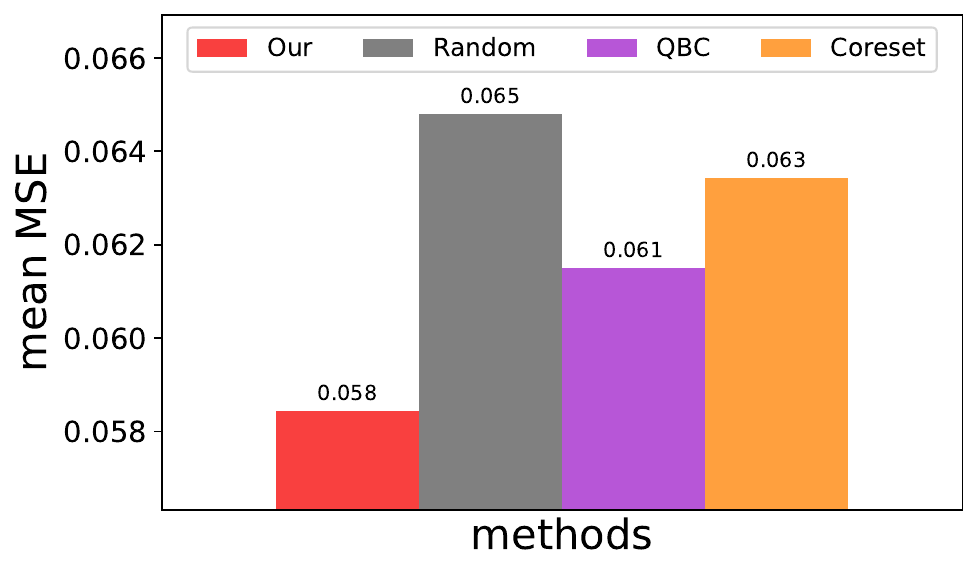}
	}
	\subfloat[FLD (3000 instances)]{
		\includegraphics[width=\threefigsmargin]{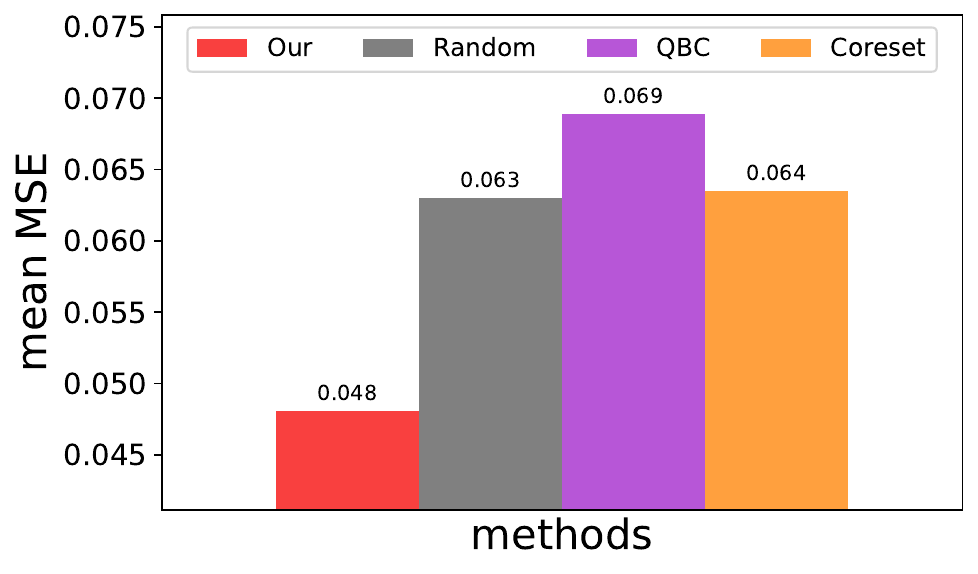}
	}
        \subfloat[CelebA (3000 instances)]{
		\includegraphics[width=\threefigsmargin]{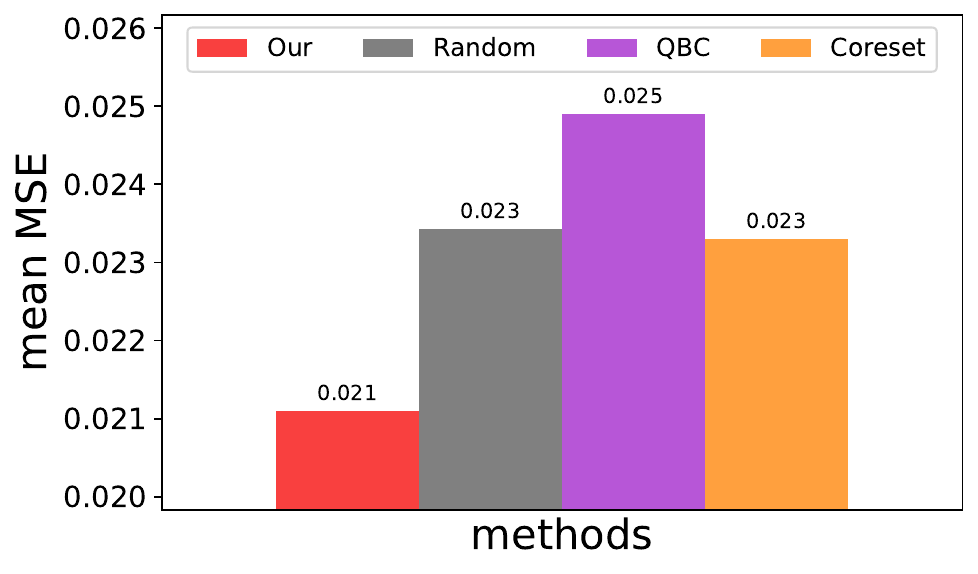}
	}\\[-4mm]
 	\subfloat[Biwi (6000 instances)]{
		\includegraphics[width=\threefigsmargin]{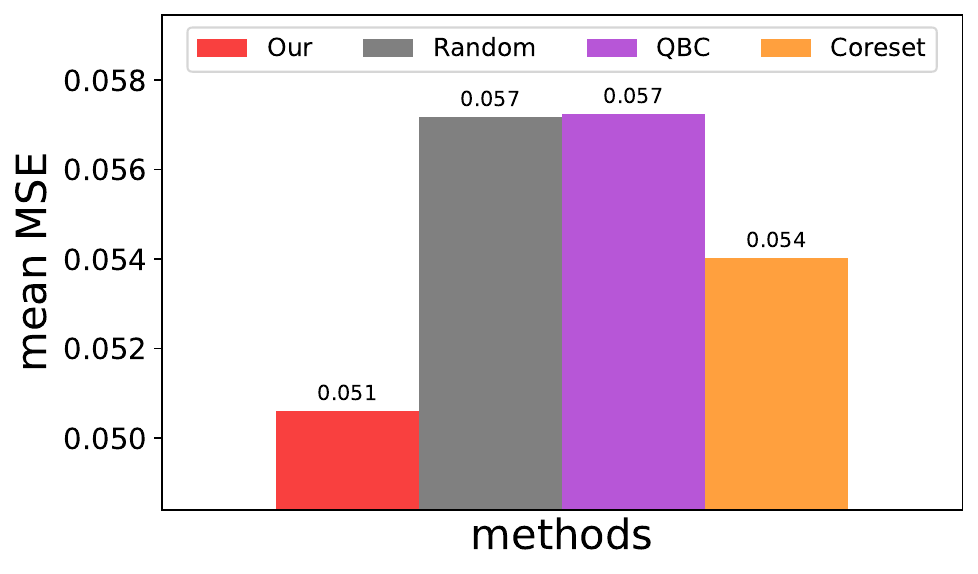}
	}
	\subfloat[FLD (6000 instances)]{
		\includegraphics[width=\threefigsmargin]{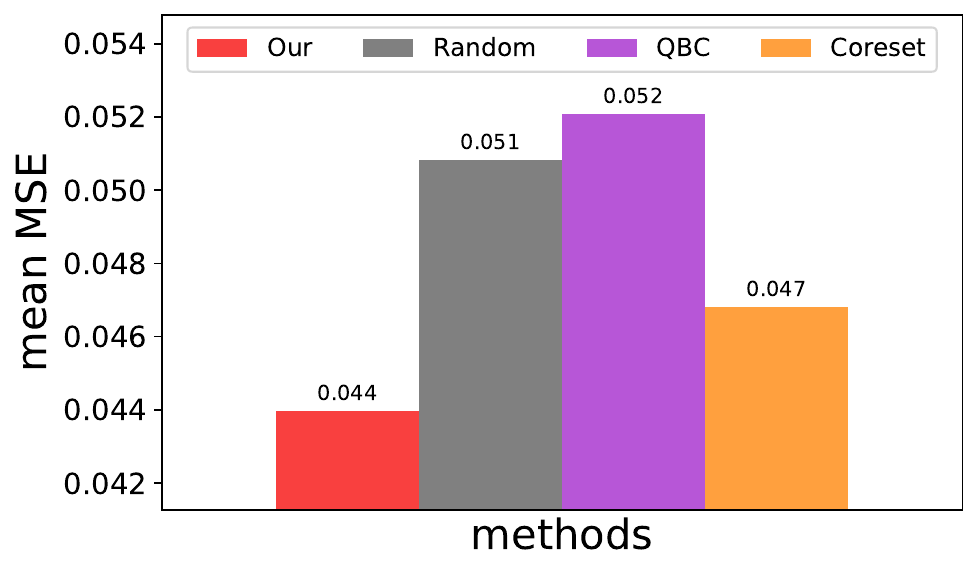}
	}
 	\subfloat[CelebA (6000 instances)]{
		\includegraphics[width=\threefigsmargin]{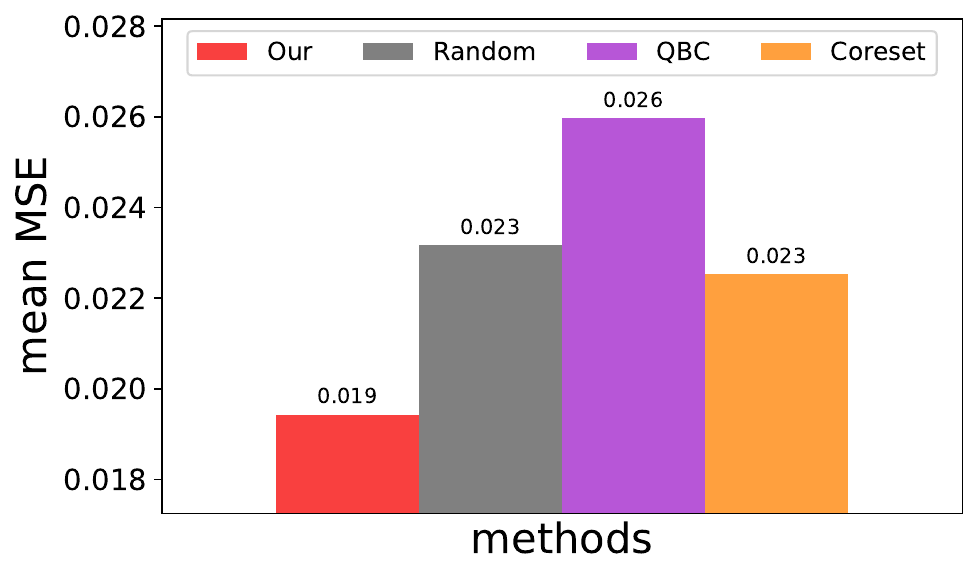}
	}
	\caption{Results of performance comparisons in regression datasets with different query budgets.}\label{fig:regresults}
\end{figure*}

\paragraph{Experiment Results.} We report the performance comparison results in Figure~\ref{fig:mainresults} and Figure~\ref{fig:regresults}.
In the scenario of vanilla deep learning, we can observe that our one-shot method achieves comparable performances with the other iterative AL methods in most cases. This phenomenon indicates that our method can significantly reduce the costs of training multiple deep model while preserving its proficiency in the ability of query saving. $\mathsf{QBC}$ is the worst one. We find that it causes a severe class imbalance according to the results in Table~\ref{tab:classbalance} in the appendix. This may explain its inferior performances.
$\mathsf{Coreset}$ is usually worse than $\mathsf{Random}$. Note that, the problem settings of \cite{sener2018active} and our work are different. there are 50 distinct target networks to be learned in our experiment. The $\mathsf{Coreset}$ implementation in \cite{tang2022active} solves the coreset problem based on the features extracted by the supernet. A drawback of this approach is that the selected instances may not be useful for other models, because the data representations are different. We believe this is the reason that why $\mathsf{Coreset}$ is less effective than Random in our setting.
$\mathsf{Entropy}$ method achieves comparable performances with $\mathsf{Random}$.  
The reason may also be evidenced by the results in Table~\ref{tab:classbalance} in the appendix that their class imbalance ratios are highly consistent, implies that the mean entropy scores tend to have an extremely small standard deviation. The performances of $\mathsf{DIAM}$ are less stable. It is effective in the datasets associated with MNIST, but fails on the others. This deficiency has not been observed in our method.

In the scenario of fine-tuning, Figure~\ref{fig:regresults} shows that our approach outperforms than the other baselines with different querying budgets in terms of achieving better mean MSE. These results indicate that our method is effective and robust to different query budgets, it can effectively identify the desired number of useful unlabeled instances under diverse representations to learn linear prediction layers. 

We further examine the running time of different AL methods. The results are reported in Table~\ref{tab:runningtime}. For the one-shot methods \textsf{Coreset} and \textsf{Random}, we report their running time of one-shot querying 15000 instances.
It can be observed that the cost of repeated model training is prohibitive in AL for multiple deep models, demonstrating the advantages of one-shot querying. Among the active selection methods, $\mathsf{DIAM}$ is the slowest approach because it selects instances based on the predictions of the unlabeled dataset in the latter half of training epochs of each target model. Generating the predictions from multiple models could be expensive, particularly with a large unlabeled pool. $\mathsf{QBC}$ and $\mathsf{Entropy}$ exhibit similar time costs. Both of them need to feed the unlabeled instances into 50 models to obtain their predictions.

In the fine-tuning scenario, all the compared methods conduct one-shot querying and linear prediction layers are trained with the same computational costs. As a result, the running time of the compared methods is comparable. The results are deferred to Table~\ref{tab:runtime_oneshot} in the appendix.

\section{Conclusion}

In this paper, we propose a one-shot AL algorithm for multiple deep models. The task is formulated as seeking a shared reweighted sampling matrix to approximately solve multiple $\ell_p$-regression problems for neuron models on distinct deep representations. Our approach is to sample and reweight the unlabeled instances based on their maximum Lewis weights across different representations. We establish an upper bound on the number of samples needed by our algorithm to achieve constant-factor approximations for multiple models and general $p$. Our techniques on the one hand substantially improve the upper bound on the number of samples of \cite{CHC} in the case of single model and $p=2$, on the other hand remove the $\log n$ factor in \cite{WY2023online} for Lewis weight sampling to obtain $\ell_p$-subspace-embedding. Extensive experiments are conducted on 11 benchmarks and 50 deep models. We observe that the sum of the maximum Lewis weights with $p=2$ grows very slowly as the number of target models increases, providing a direction for interpreting deep representation learning. The performance comparisons show that our algorithm achieves competitive performances with the state-of-the-art AL methods for multiple deep models.

\section*{Acknowledgments}
S.-J. Huang is supported in part by the National Science and Technology Major Project (2020AAA0107000), the Natural Science Foundation of Jiangsu Province of China (BK20222012, BK20211517), and NSFC (62222605).
Y. Li is supported in part by the Singapore Ministry of Education (AcRF) Tier 2 grant MOE-T2EP20122-0001 and Tier 1 grant RG75/21.
Y.-P. Tang was supported in part by the China Scholarship Council during his visit to Nanyang Technological University, where most of this work was done.
The authors would like to thank Aarshvi Gajjar, Chinmay Hedge, Christopher Musco and Xingyu Xu for pointing out an error in an earlier proof of Theorem~\ref{thm:main}.

\bibliographystyle{plain}
\bibliography{reference}
\appendix
\section{Subspace Embedding}\label{sec:SE_appendix} 

We note that there are mainly two kinds of $\ell_p$ Lewis weight sampling. The first kind is to retain or discard each row independently. Specifically, the $i$-th row of $A$ is retained with probability $p_i$ and discarded with probability $1-p_i$. The resulting sampled matrix $SA$ has a random number of rows. The second kind has a fixed, prescribed number $m$ of sampled rows. Each sample is i.i.d.\ chosen to be the $i$-th row of $A$ with probability $t_i/m$ , where $t_1,\dots,t_n$ are weights satisfying that $\sum_i t_i = m$. We use sampling of the second kind (recall Definition~\ref{def:sampling matrix}) in our algorithm. However, our main result (Theorem~\ref{thm:main}) still works for the first kind of sampling matrices, see Appendix~\ref{sec:coin4main_appendix} for details.

In this section, we give the sample complexity for $\ell_p$ subspace embedding with distortion $1+\eps$ for $p> 2$, using both kinds of sampling schemes.

For $p>2$, Woodruff and Yasuda \cite{WY2023online} consider the first kind of sampling and give a sample complexity of $O(\eps^{-2}d^{p/2}(\log^2 d \log n + \log \frac{1}{\delta}))$ for $\ell_p$-subspace embeddings. This is the first result for $p>2$ that has an $\eps^{-2}$ dependence, as the only prior result was $O(\eps^{-5}d^{p/2}\log d)$ with an $\eps^{-5}$ dependence~\cite{BLM89}. Still, based on the result of \cite{BLM89}, we can improve the analysis of \cite{WY2023online} and remove the undesired $\log n$ factor in their sample complexity. We have the following theorem.

\begin{theorem}\label{thm:coin-se}
    Let $A\in \R^{n\times d}$, $2<p<\infty$ and $0<\eps, \delta<1$. Let $p_i = \min \{\beta w_i, 1\}$ where $w_i$ is $\ell_p$ Lewis weight of $a_i$ for $A$ and $\beta = \Omega(\frac{d^{\frac{p}{2}-1}}{\eps^2}  (\log d + \log \frac{1}{\delta}))$ be the oversampling parameter. Let $S \in \R^{n \times n}$ be the reweighted sampling matrix in which the $i$-th row 
    \[
        S_i  = \begin{cases}
                    \frac{1}{(p_i)^{1/p}} e_i^\top, & \text{with prob. } p_i \\
                    0, & \text{with prob. } 1 - p_i.
               \end{cases} 
    \]
    With probability at least $1-\delta$, $S$ has $m = \Omega(\frac{d^{\frac{p}{2}}}{\eps^2}  (\log^2 d \log \frac{d}{\eps} + \log \frac{1}{\delta}))$ nonzero rows and $\Norm{SAx}_p^p = (1 \pm \eps)\Norm{Ax}_p^p$.
\end{theorem}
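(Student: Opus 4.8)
The plan is to control the uniform deviation $\sup_{\bm{x}:\norm{A\bm{x}}_p\leq 1}\Abs{\norm{SA\bm{x}}_p^p - \norm{A\bm{x}}_p^p}$ and show it is at most $\eps$ with probability $1-\delta$; by homogeneity of degree $p$ this is equivalent to the claimed $(1\pm\eps)$ guarantee. Writing $\xi_i\in\{0,1\}$ for the indicator that row $i$ is retained (so $\Pr\{\xi_i=1\}=p_i$), we have $\norm{SA\bm{x}}_p^p = \sum_i \frac{\xi_i}{p_i}\abs{(A\bm{x})_i}^p$ with $\E\norm{SA\bm{x}}_p^p = \norm{A\bm{x}}_p^p$, so the deviation is a centered empirical process. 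I would bound its $\ell$-th moment for $\ell=\log(1/\delta)$ and finish by Markov's inequality. The first step is the standard symmetrization inequality for moments, which replaces the centered process by the Rademacher process $\sum_i \sigma_i \frac{\xi_i}{p_i}\abs{(A\bm{x})_i}^p = \sum_{k:\,\xi_k=1}\sigma_k\abs{(SA\bm{x})_k}^p$, losing only a factor $2^\ell$. The key observation is that this is exactly the process $\Lambda$ of \cref{lem:rademacher-tail-bound}, but read off the \emph{sampled} matrix $SA$ rather than $A$.

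The whole point of removing the $\log n$ factor is to condition on a good event before invoking chaining. Let $\mathcal{E}$ be the event that (i) $S$ is a $\tfrac12$-subspace-embedding for $A$, which holds with probability $\geq 1-\delta/2$ by \cref{lem:constant-SE} (whose $p>2$ case is due to \cite{BLM89}), and (ii) the $\ell_p$ Lewis weights of $SA$ are bounded by $2/\beta$, which holds with probability $\geq 1-\delta/2$ by the retain/discard analogue of \cref{lem:bounded-LS}; the latter follows from the same matrix-Chernoff argument applied to the independent summands $\frac{\xi_i}{p_i}\bm{a}_i\bm{a}_i^\top w_i^{1-2/p}$. Conditioned on $\mathcal{E}$, the Rademacher process runs only over the $m=O(\beta d)$ retained rows, whose Lewis weights obey the hypothesis $w_k(SA)\leq 2/\beta$ of \cref{lem:rademacher-tail-bound}. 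Applying that lemma to the $m\times d$ submatrix of retained rows of $SA$ (and using $\norm{SA\bm{x}}_p\asymp\norm{A\bm{x}}_p$ on $\mathcal{E}$ to match normalizations) therefore yields a subgaussian tail in which $\log n$ is replaced by $\log m = O(\log(d/\eps))$, since $m=O(\beta d)=\poly(d/\eps)$.

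To conclude, I would convert the subgaussian tail of \cref{lem:rademacher-tail-bound} into the moment estimate $\E_\sigma[\Lambda^\ell\mid\mathcal{E}]\lesssim \big(\sqrt{d^{\max\{p/2-1,0\}}/\beta}\,(\log^2 d\,\log m + \sqrt{\ell})\big)^\ell$, then take the expectation over the sampling pattern conditioned on $\mathcal{E}$. Choosing $\beta \gtrsim \eps^{-2}d^{p/2-1}(\log^2 d\,\log(d/\eps)+\log(1/\delta))$ drives this bound below $(\eps/2)^\ell$, so Markov's inequality together with a union bound over $\mathcal{E}^c$ gives the deviation bound with the desired probability. The count of nonzero rows is handled separately: $\E[\sum_i\xi_i]=\sum_i p_i\leq \beta\sum_i w_i=\beta d$, and a Chernoff bound shows $\sum_i\xi_i = O(\beta d)=O(\eps^{-2}d^{p/2}(\log^2 d\,\log(d/\eps)+\log(1/\delta)))$ with probability $1-\delta$, matching the stated $m$.

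The main obstacle is the $\log n\to\log m$ reduction, i.e.\ rigorously localizing the chaining to the retained rows. This hinges on the fact, repackaged in \cref{lem:rademacher-tail-bound}, that the Dudley-integral bound of \cite[Theorem 15.13]{LT91} survives with the Lewis-weight ceiling $1/\beta$ in place of the textbook $d/n$, so that once $\mathcal{E}$ forces $w_k(SA)\leq 2/\beta$ the only logarithmic dependence on the ambient dimension enters through $\log m$ rather than $\log n$. A secondary, more routine point is adapting \cref{lem:bounded-LS} from the fixed-sample-size scheme to the independent retain/discard scheme of the present theorem, which changes only the definition of the independent summands in the matrix-Chernoff step and not its conclusion.
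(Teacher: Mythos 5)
Your proposal follows essentially the same route as the paper's proof: symmetrization, conditioning on the good event that $S$ is a $\tfrac12$-subspace embedding of $A$ (via \cite{BLM89}) and that the Lewis weights of $SA$ are uniformly at most $2/\beta$, then the Dudley/chaining moment bound with the Lewis-weight ceiling $1/\beta$ in place of $d/n$ so that only $\log m = O(\log(d/\eps))$ enters, finished by Markov's inequality. The only cosmetic difference is how the Lewis-weight bound on $SA$ is certified: the paper invokes \cite[Lemma A.1]{CLS2022} together with the stability result (\cref{lem:good_Lewis_initial_strong}), whereas you adapt the matrix-Chernoff proof of \cref{lem:bounded-LS} to the retain/discard scheme; both are valid.
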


We only sketch the changes in the proof of \cite{WY2023online}. First, in the sampling we do not use $\gamma$-one-sided Lewis weights but the exact Lewis weights of $A$. True Lewis weights do not affect the symmetrization trick. After the symmetrization step, we remove the part of flattening matrix $A$ in their proof. Instead, we claim: (1) By \cite[Theorem 7.3]{BLM89}, $S$ is a $\frac{1}{2}$-subspace embedding matrix of $A$. (2) By \cite[Lemma A.1]{CLS2022} and \Cref{lem:good_Lewis_initial_strong}, Lewis weights of $SA$ are uniformly upper bounded by $\frac{2}{\beta}$. Conditioned on (1) and (2), it suffices to prove
\[
\E_{S,\sigma} \max_{x:\norm{SAx}_p\leq 1} \Abs{\sum_{k=1}^{m} \sigma_k \abs{(SA)_{k}x}^p}^{\ell} \leq \eps^\ell.
\]
This $\ell$-th moment upper bound can be derived in the same fashion as the end of the proof of Lemma~\ref{lem:key}. Then applying the Markov inequality gives us $\norm{SAx}_p^p = (1\pm \eps)\norm{Ax}_p^p$ with probability at least $1-\delta$.

The next theorem gives the sample complexity for $\ell_p$ subspace embedding for $p>2$ in which samplings are i.i.d.\ and the probability of every row $a_i$ being sampled is $w_i/d$.
\begin{theorem}\label{thm:iid-se}
    Let $A\in \R^{n\times d}$, $2<p<\infty$ and $0<\eps, \delta<1$. Suppose that the $\ell_p$ Lewis weights of $A$ are $w_1,\dots,w_n$. Let $p_i = w_i/d$ and $S \in \R^{m \times d}$ be a reweighted sampling matrix whose $i$-th row $S_i = \frac{1}{(mp_i)^{1/p}} e_j^{\top}$ with probability $p_j$.
    Set $m = \Omega(\frac{d^{\frac{p}{2}}}{\eps^2}  (\log^2 d \log \frac{d}{\eps} + \log \frac{1}{\delta}))$, then with probability at least $1-\delta$, we have $\Norm{SAx}_p^p = (1 \pm \eps)\Norm{Ax}_p^p$.
\end{theorem}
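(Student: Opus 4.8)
The plan is to mirror the proof sketch of Theorem~\ref{thm:coin-se} and the concluding chaining computation in the proof of Lemma~\ref{lem:key}, exploiting the fact that the i.i.d.\ fixed-size sampling of Definition~\ref{def:sampling matrix} lets us invoke Lemmata~\ref{lem:constant-SE} and~\ref{lem:bounded-LS} \emph{directly}, rather than routing through \cite{BLM89,CLS2022} as the coin-flip version must. Writing $t_i = m p_i = (m/d) w_i$, the present scheme is exactly the reweighted sampling of those lemmata with oversampling parameter $\beta = m/d$. Since $p>2$ and $m = \Omega\!\left(\eps^{-2} d^{p/2}(\log^2 d \log(d/\eps) + \log(1/\delta))\right)$, we have $\beta = m/d \gtrsim_p d^{p/2-1}(\log d + \log(1/\delta))$, so the hypotheses of both lemmata are satisfied.

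First I would fix the conditioning event $\mathcal{E}$ on which (i) $S$ is a $\tfrac12$-subspace-embedding matrix for $A$ (Lemma~\ref{lem:constant-SE}) and (ii) the $\ell_p$ Lewis weights of $SA$ are uniformly bounded by $2/\beta = 2d/m$ (Lemma~\ref{lem:bounded-LS}); a union bound then gives $\Pr(\mathcal{E}) \geq 1-\delta$ after rescaling the two failure probabilities. By homogeneity of $\norm{\cdot}_p^p$ it suffices to certify $\norm{SA\bm\theta}_p^p = (1\pm\eps)\norm{A\bm\theta}_p^p$ on the slice $\{\bm\theta : \norm{A\bm\theta}_p \le 1\}$, so the target quantity becomes $\max_{\norm{A\bm\theta}_p\le 1}\big|\,\norm{SA\bm\theta}_p^p - \norm{A\bm\theta}_p^p\,\big|$. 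Because the reweighting $(m p_i)^{-1/p}$ makes $\E_S\norm{SA\bm\theta}_p^p = \norm{A\bm\theta}_p^p$ for every fixed $\bm\theta$, the standard symmetrization trick together with convexity of the $\ell$-th moment (with $\ell = \log(1/\delta)$) bounds $\E_S(\max_{\bm\theta}|\cdots|)^\ell$ by $2^\ell\,\E_{S,\sigma}\big(\max_{\bm\theta}\big|\sum_k \sigma_k |(SA\bm\theta)_k|^p\big|\big)^\ell$ with Rademacher signs $\sigma_k$, exactly as at the start of the proof of Lemma~\ref{lem:key}.

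The crux is then to control this Rademacher chaos conditioned on $\mathcal{E}$, which I would do by applying Lemma~\ref{lem:rademacher-tail-bound} \emph{to the sampled matrix} $SA$ (whose Lewis weights are $\le 2d/m$), rather than to $A$. This is precisely the reduction that removes the $\log n$ factor of \cite{WY2023online}: since $SA$ has only $m = \poly(d/\eps)$ rows, the $\log n$ appearing in Lemma~\ref{lem:rademacher-tail-bound} is replaced by $\log m = O(\log(d/\eps))$. Integrating the sub-Gaussian tail of that lemma yields a conditional moment bound of the shape
\[
\E_{\sigma}[\Lambda^\ell \mid \mathcal{E}] \;\lesssim\; \left(\frac{d^{p/2-1}}{\beta}\right)^{\ell/2}\big(\log^2 d\,\log(d/\eps) + \sqrt{\ell}\,\big)^{\ell},
\]
which, with $\beta = m/d$ and the stated choice of $m$, is at most $(\tfrac{\eps}{2})^\ell\,\delta$. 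This is the identical estimate that closes the proof of Lemma~\ref{lem:key} specialized to $f=\mathrm{id}$, $L=1$, $R=1$ and $\bm\theta_2 = 0$. Markov's inequality applied to the conditional moment, followed by removing the conditioning through $\Pr(\mathcal{E})\ge 1-\delta$ and a final union bound, then gives $\norm{SA\bm\theta}_p^p = (1\pm\eps)\norm{A\bm\theta}_p^p$ simultaneously over the slice with probability $1 - O(\delta)$; rescaling $\eps$ and $\delta$ by absolute constants completes the argument.

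I expect the main obstacle to be bookkeeping rather than a new idea: one must verify that the entropy and chaining estimates of \cite{LT91} invoked in Lemma~\ref{lem:rademacher-tail-bound} genuinely remain valid under the weaker uniform Lewis-weight bound $w_i(SA)\le 2d/m$ (in place of the $d/n$ bound assumed in \cite{LT91}), and carefully track the $d^{\max\{p/2-1,0\}}$ prefactor so that the $\ell$-th moment is driven below $(\eps/2)^\ell\delta$. Once these are confirmed, the i.i.d.\ fixed-size scheme needs no probabilistic input beyond the two sampling lemmata already established, so the theorem follows with essentially the same effort as Theorem~\ref{thm:coin-se}.
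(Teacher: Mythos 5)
Your route is structurally the same as the paper's: condition on the event $\mathcal{E}$ that $S$ is a $\tfrac12$-subspace embedding and that the Lewis weights of $SA$ are at most $2/\beta$ with $\beta=m/d$ (Lemmata~\ref{lem:constant-SE} and~\ref{lem:bounded-LS}), symmetrize the $\ell$-th moment with $\ell=\log(1/\delta)$, run the chaining argument on the \emph{sampled} matrix $SA$ so that $\log n$ is replaced by $\log m$, and finish with Markov's inequality. That is exactly the paper's sketch, which defers the moment bound to ``the end of the proof of Lemma~\ref{lem:key}.''

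There is, however, a genuine quantitative gap in your closure step. You invoke Lemma~\ref{lem:rademacher-tail-bound} as a black box; its deterministic term is $\log^2 d\,\log n$, and integrating its tail gives precisely what you wrote,
\[
\E_{\sigma}[\Lambda^\ell\mid\mathcal{E}] \;\lesssim\; \left(\frac{d^{p/2-1}}{\beta}\right)^{\ell/2}\bigl(\log^2 d\,\log m + \sqrt{\ell}\,\bigr)^{\ell}.
\]
But with $\beta=m/d$ and the stated $m$, driving this below $(\eps/2)^\ell\delta$ requires $a+\sqrt{\ell}\lesssim\sqrt{a+\ell}$ for $a=\log^2 d\,\log(d/\eps)$, which is false once $a\gg 1$ (squaring forces $a^2\lesssim a$); e.g.\ it already fails for constant $\delta$ and large $d$. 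Used as a black box, Lemma~\ref{lem:rademacher-tail-bound} therefore only yields $m = O\bigl(\eps^{-2}d^{p/2}(\log^4 d\,\log^2(d/\eps)+\log(1/\delta))\bigr)$ — it removes the $\log n$ but is polylog-worse than the theorem claims. This is not the ``identical estimate that closes Lemma~\ref{lem:key}'': there the chaining produces the \emph{un-squared} quantity $d^{\max(p/4,1/2)}\log d\,\sqrt{\log m}$ from the Dudley integral, so the moment bound carries $\bigl(\log d\,\sqrt{\log m}+\sqrt{\ell}\,\bigr)^{\ell}$, and then $(\log d\sqrt{\log m}+\sqrt{\ell})^2\le 2(\log^2 d\,\log m+\ell)$ matches exactly the factor built into $m$. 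So to prove the theorem as stated you must open up the proof of Lemma~\ref{lem:rademacher-tail-bound} (or rerun the end of Lemma~\ref{lem:key}'s proof on $SA$) and use the sharp Dudley-integral estimate, rather than the packaged tail bound; this is the ``bookkeeping'' you flagged, but it is essential to the claimed sample complexity, not optional.
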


We only highlight the necessary changes in the proof of Theorem~\ref{thm:iid-se}. 
\begin{itemize}
    \item The symmetrization step goes through in the same fashion as the long chain of inequalities in the proof of Lemma~\ref{lem:key}.
    \item By Theorem 7.3 of~\cite{BLM89}, $S$ is a $\frac 12$-subspace embedding matrix of $A$.
    \item By Lemma~\ref{lem:bounded-LS}, Lewis weights of $SA$ are uniformly upper bounded by $\frac{2}{\beta}$. The left steps are the same as the changes mentioned for Theorem~\ref{thm:coin-se}.
\end{itemize} 

\section{Result for the Other Sampling Method}\label{sec:coin4main_appendix}
In this section, we prove that our main result Theorem~\ref{thm:main} still holds if the reweighted sampling matrix $S$ is defined to be of the first kind:
\begin{align*}
    S_i  = \begin{cases}
        \frac{1}{(p_i)^{1/p}} e_i^\top, & \text{with prob. } p_i \\
        0, & \text{with prob. } 1 - p_i,
            \end{cases} 
\end{align*}
where $p_i = \beta w_i$ and $\beta = \Omega(\frac{d^{\frac{p}{2}-1}}{\eps^2}  (\log^2 d \log \frac{d}{\eps} + \log \frac{1}{\delta}))$.
Accordingly, Lines~\ref{line:sample-start}--\ref{line:sample-end} of Algorithm~\ref{alg:max_lewis_weight} are changed to the following lines.
\begin{algorithmic}[1]
    \For{$i = 1, 2, \dots, n$}
        \If{$a_i$ is sampled with probability $p_i = \beta w_i$}
            \State{$S_{i,i} = p_i^{-1/p}$ and query the label of $a_i$}
        \EndIf
    \EndFor
\end{algorithmic}

Compared with the proof of Theorem~\ref{thm:main}, the following modifications are needed:
(1) By Theorem~\ref{thm:coin-se}, $S$ is a $1/2$-subspace embedding matrix of $A$.
(2) To show that Lemma~\ref{lem:key} is true, we observe that by \cite[Lemma A.1]{CLS2022} and \Cref{lem:good_Lewis_initial_strong}, Lewis weights of $SA$ are uniformly upper bounded by $\frac{2}{\beta}$.

\section{Detailed Experimental Settings and Additional Results}\label{sec:adexp}

\subsection{Detailed Settings}\label{sec:detailsetting}

All experiments are run on two GPU servers, each equipped with four GeForce RTX 3090 graphic cards, an Intel Xeon Gold 5317 CPU and 128 GB Memory. 
The details of the datasets are presented in Table~\ref{tab:datasets}. The configurations of 50 deep models are specified in Table~\ref{tab:specs}. Note that the network dentition and pre-trained weights of each configuration can be downloaded from the GitHub page of the OFA project~\cite{cai2019once}.

\begin{table*}[h]
\centering
\caption{The names of the model specifications used in the experiments. The network dentition and pre-trained weights of each configuration can be downloaded from the GitHub page of the OFA
project \cite{cai2019once}.} \label{tab:specs}
\begin{tabular}{l l}
\hline
flops@595M\_top1@80.0\_finetune@75 & flops@482M\_top1@79.6\_finetune@75\\
flops@389M\_top1@79.1\_finetune@75 & LG-G8\_lat@24ms\_top1@76.4\_finetune@25\\
LG-G8\_lat@16ms\_top1@74.7\_finetune@25 & LG-G8\_lat@11ms\_top1@73.0\_finetune@25\\
LG-G8\_lat@8ms\_top1@71.1\_finetune@25 & s7edge\_lat@88ms\_top1@76.3\_finetune@25\\
s7edge\_lat@58ms\_top1@74.7\_finetune@25 & s7edge\_lat@41ms\_top1@73.1\_finetune@25\\
s7edge\_lat@29ms\_top1@70.5\_finetune@25 & note8\_lat@65ms\_top1@76.1\_finetune@25\\
note8\_lat@49ms\_top1@74.9\_finetune@25 & note8\_lat@31ms\_top1@72.8\_finetune@25\\
note8\_lat@22ms\_top1@70.4\_finetune@25 & note10\_lat@64ms\_top1@80.2\_finetune@75\\
note10\_lat@50ms\_top1@79.7\_finetune@75 & note10\_lat@41ms\_top1@79.3\_finetune@75\\
note10\_lat@30ms\_top1@78.4\_finetune@75 & note10\_lat@22ms\_top1@76.6\_finetune@25\\
note10\_lat@16ms\_top1@75.5\_finetune@25 & note10\_lat@11ms\_top1@73.6\_finetune@25\\
note10\_lat@8ms\_top1@71.4\_finetune@25 & pixel1\_lat@143ms\_top1@80.1\_finetune@75\\
pixel1\_lat@132ms\_top1@79.8\_finetune@75 & pixel1\_lat@79ms\_top1@78.7\_finetune@75\\
pixel1\_lat@58ms\_top1@76.9\_finetune@75 & pixel1\_lat@40ms\_top1@74.9\_finetune@25\\
pixel1\_lat@28ms\_top1@73.3\_finetune@25 & pixel1\_lat@20ms\_top1@71.4\_finetune@25\\
pixel2\_lat@62ms\_top1@75.8\_finetune@25 & pixel2\_lat@50ms\_top1@74.7\_finetune@25\\
pixel2\_lat@35ms\_top1@73.4\_finetune@25 & pixel2\_lat@25ms\_top1@71.5\_finetune@25\\
1080ti\_gpu64@27ms\_top1@76.4\_finetune@25 & 1080ti\_gpu64@22ms\_top1@75.3\_finetune@25\\
1080ti\_gpu64@15ms\_top1@73.8\_finetune@25 & 1080ti\_gpu64@12ms\_top1@72.6\_finetune@25\\
v100\_gpu64@11ms\_top1@76.1\_finetune@25 & v100\_gpu64@9ms\_top1@75.3\_finetune@25\\
v100\_gpu64@6ms\_top1@73.0\_finetune@25 & v100\_gpu64@5ms\_top1@71.6\_finetune@25\\
tx2\_gpu16@96ms\_top1@75.8\_finetune@25 & tx2\_gpu16@80ms\_top1@75.4\_finetune@25\\
tx2\_gpu16@47ms\_top1@72.9\_finetune@25 & tx2\_gpu16@35ms\_top1@70.3\_finetune@25\\
cpu\_lat@17ms\_top1@75.7\_finetune@25 & cpu\_lat@15ms\_top1@74.6\_finetune@25\\
cpu\_lat@11ms\_top1@72.0\_finetune@25 & cpu\_lat@10ms\_top1@71.1\_finetune@25\\
\hline
\end{tabular}
\end{table*}
We follow the training settings of \cite{tang2022active} in our experiment of the  vanilla deep learning scenario. Specifically, at each query iteration, each target model will be initialized with the pre-trained weights on ImageNet and trained for 20 epochs on the labeled dataset with batch size 32. SGD optimizer is employed with learning rate $1.5\times 10^{-3}$, momentum coefficient $0.9$ and weight decay factor $3\times 10^{-5}$. A dropout rate of $0.1$ is used in the training process. 

In our experiments of the fine-tuning scenario, we train a linear prediction layer with ReLU activation function using mean squared loss. The training specifications are introduced as follows. The SGD optimizer is employed with a learning rate of $10^{-3}$ and a weight decay coefficient of $10^{-1}$. The layer is trained for 30 epochs with training batch size $128$. We set the random seed to 0 for reproducibility. Please refer to the submitted source code to reproduce our results.

The implementations of each compared method are introduced as follows. We use the code in \cite{tang2022active} to implement \textsf{DIAM}, \textsf{Coreset} and \textsf{Entropy} methods. Specifically, \textsf{DIAM} first obtains the predictions of the unlabeled instances using the models in the latter half of training epochs of each target network. Then, it selects the batch of instances that multiple models have inconsistent predictions. \textsf{Coreset} selects data points based on the representation of the pre-trained super-net in OFA. \textsf{Entropy} calculates the entropy scores of unlabeled instances based on the predictions of each target model. Subsequently, it selects the instances with the highest mean entropy scores across multiple model predictions.

\begin{figure*}[t]
  \centering
	\subfloat[MNIST]{
		\includegraphics[width=0.3\textwidth]{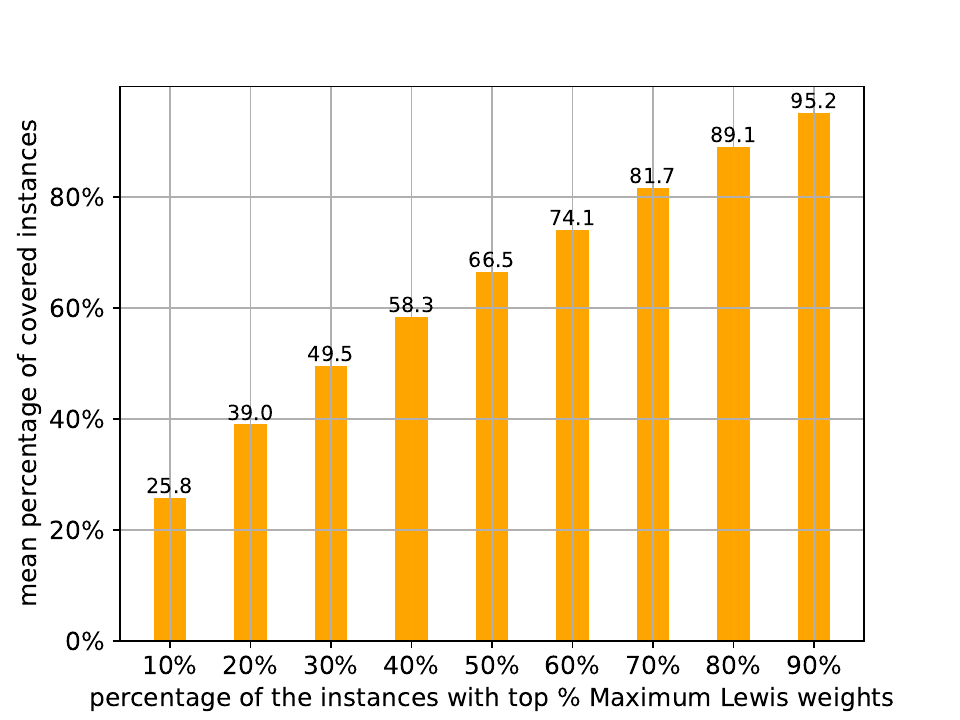}
	}
	\subfloat[Kuzushiji-MNIST]{
		\includegraphics[width=0.3\textwidth]{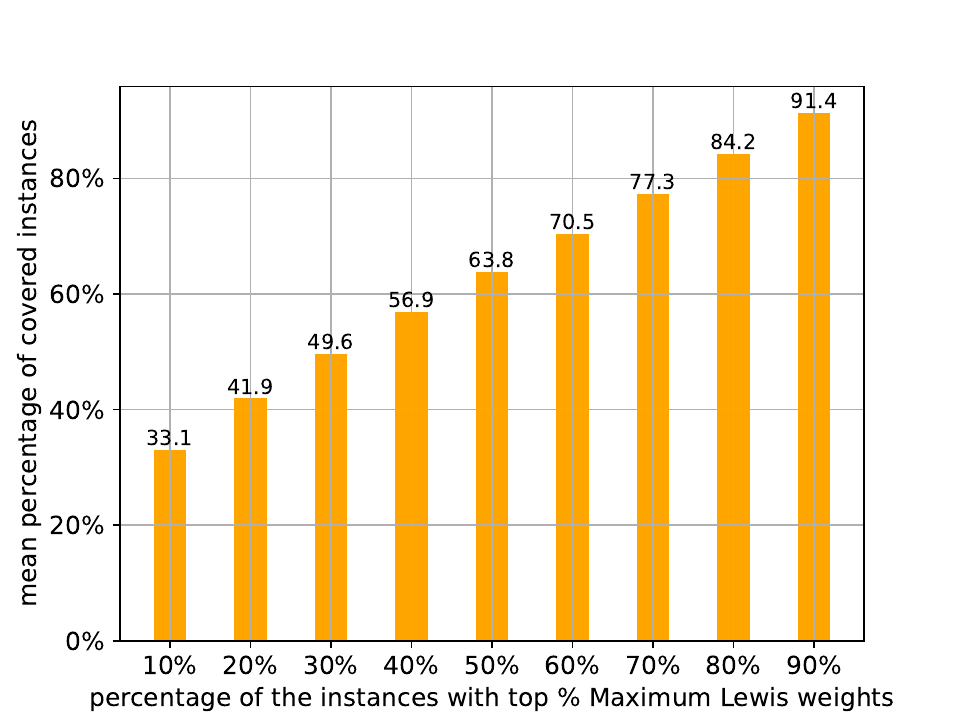}
	}
  \subfloat[Fashion-MNIST]{
		\includegraphics[width=0.3\textwidth]{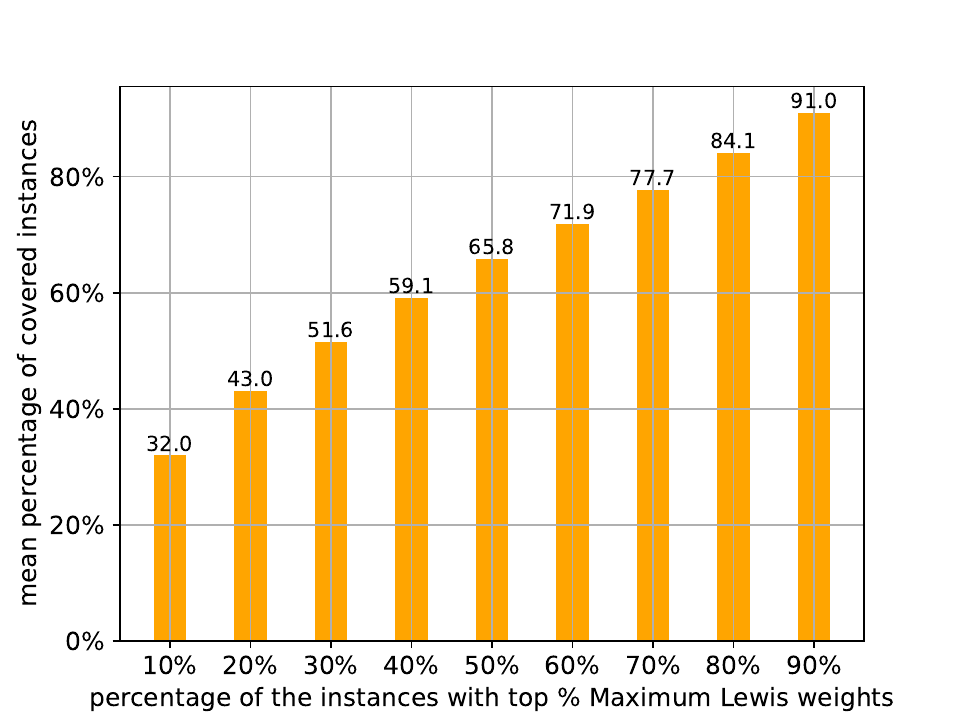}
	}\\
  \subfloat[EMNIST-letters]{
		\includegraphics[width=0.3\textwidth]{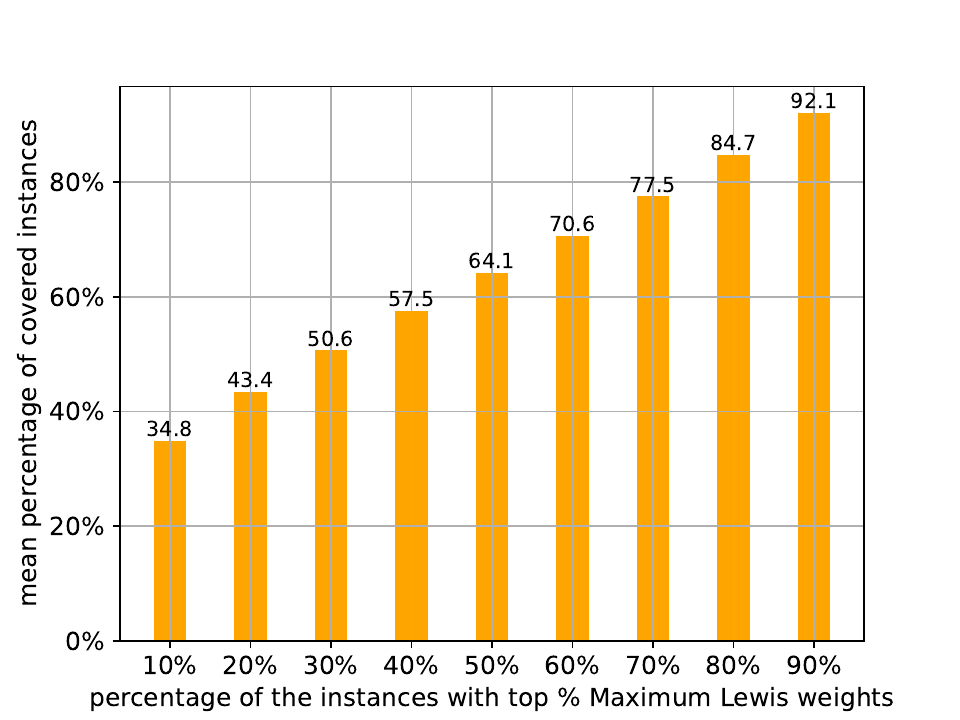}
	}
   \subfloat[EMNIST-digits]{
		\includegraphics[width=0.3\textwidth]{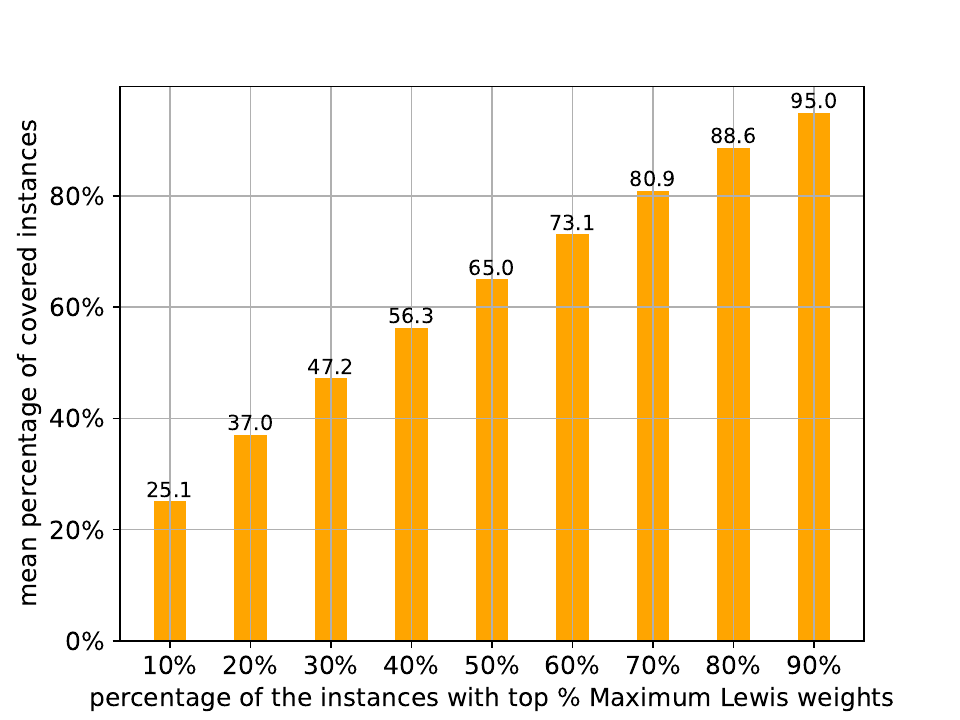}
	}
 \subfloat[SVHN]{
		\includegraphics[width=0.3\textwidth]{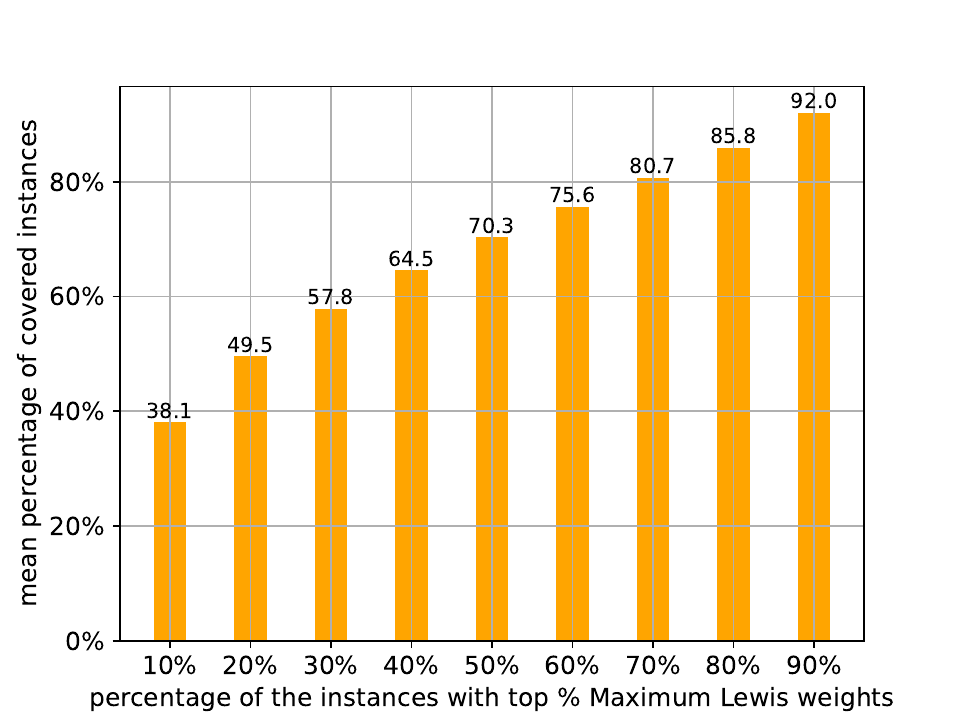}
	}\\
	\subfloat[CIFAR-10]{
		\includegraphics[width=0.3\textwidth]{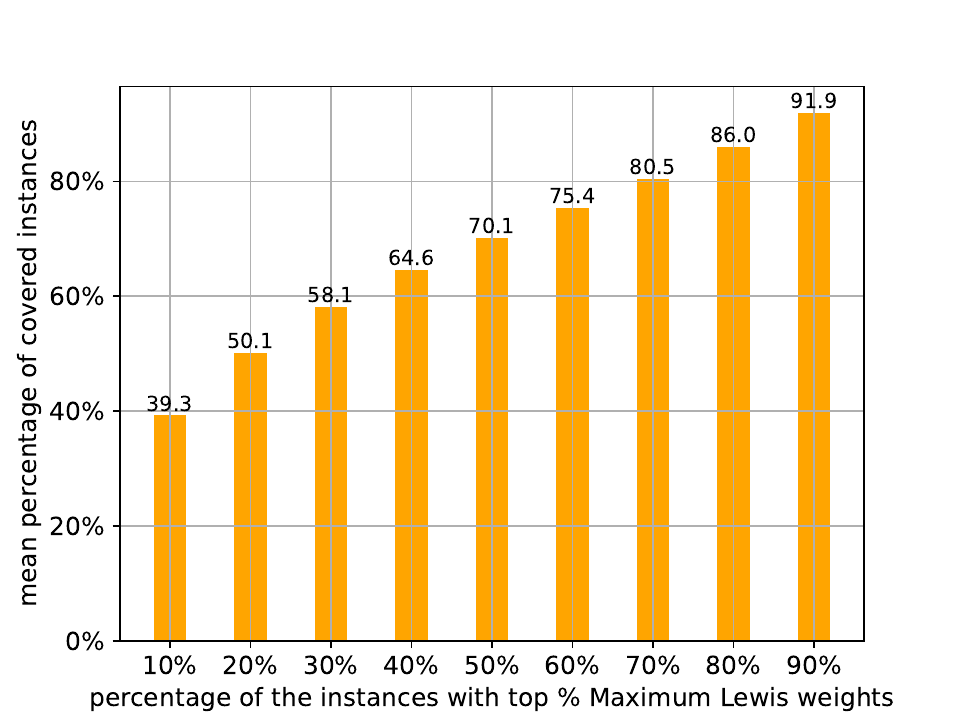}
	}
 	\subfloat[CIFAR-100]{
		\includegraphics[width=0.3\textwidth]{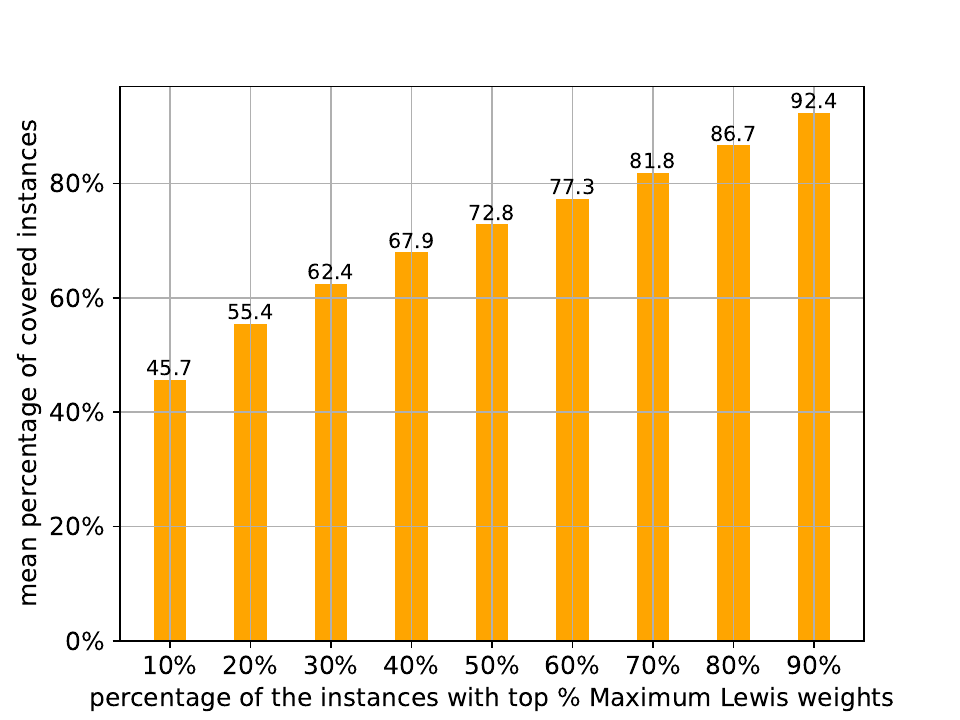}
	}
  \subfloat[Biwi]{
		\includegraphics[width=0.3\textwidth]{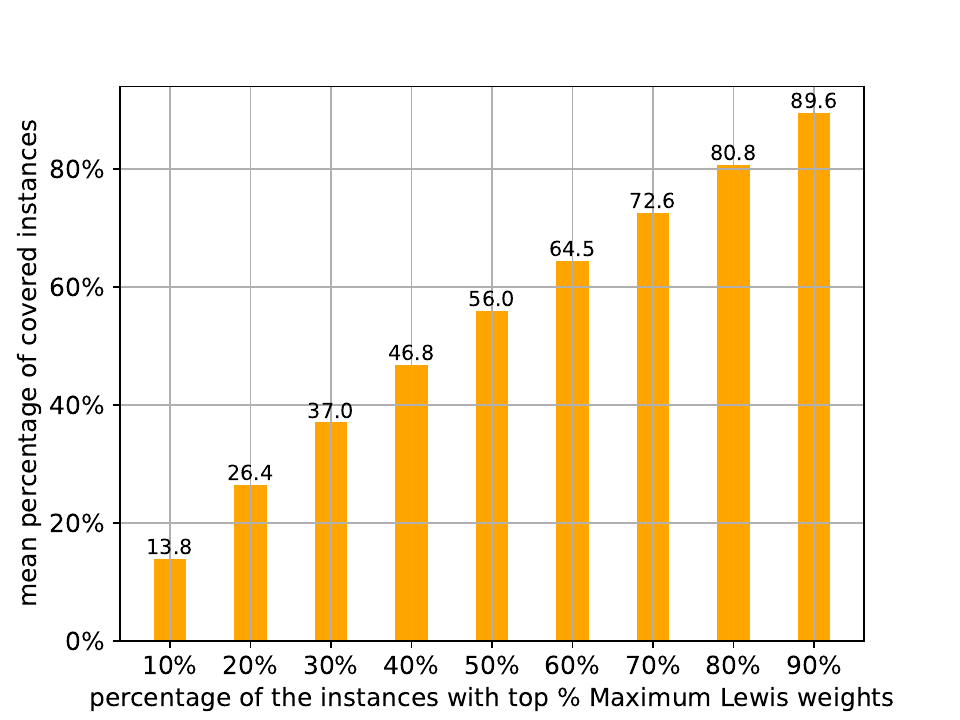}
	}\\
   \subfloat[FLD]{
		\includegraphics[width=0.3\textwidth]{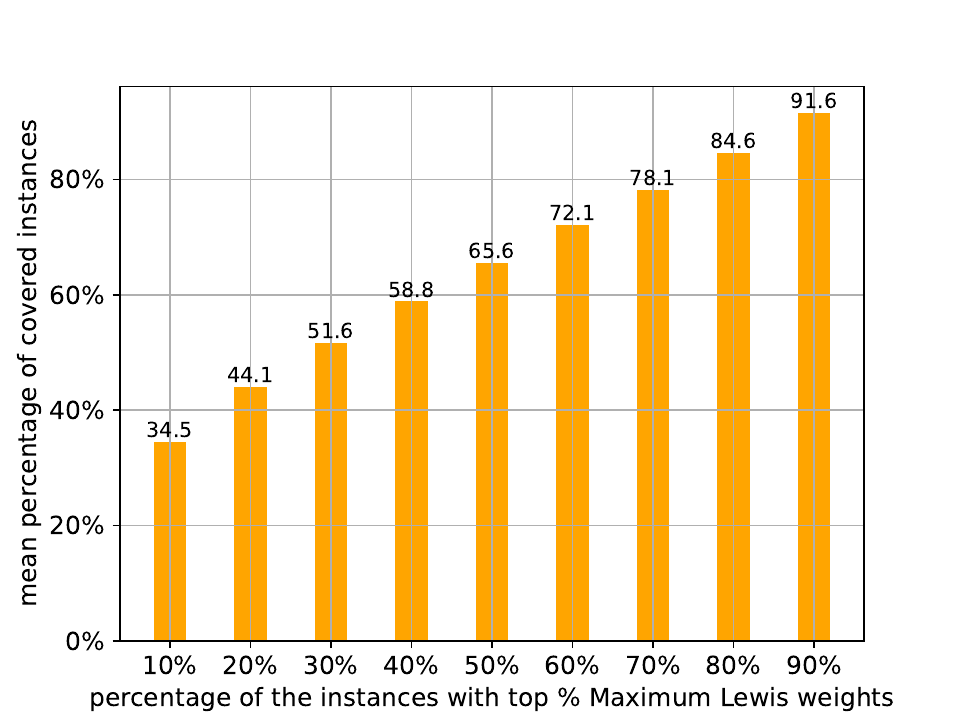}
	}
    \subfloat[CelebA]{
		\includegraphics[width=0.3\textwidth]{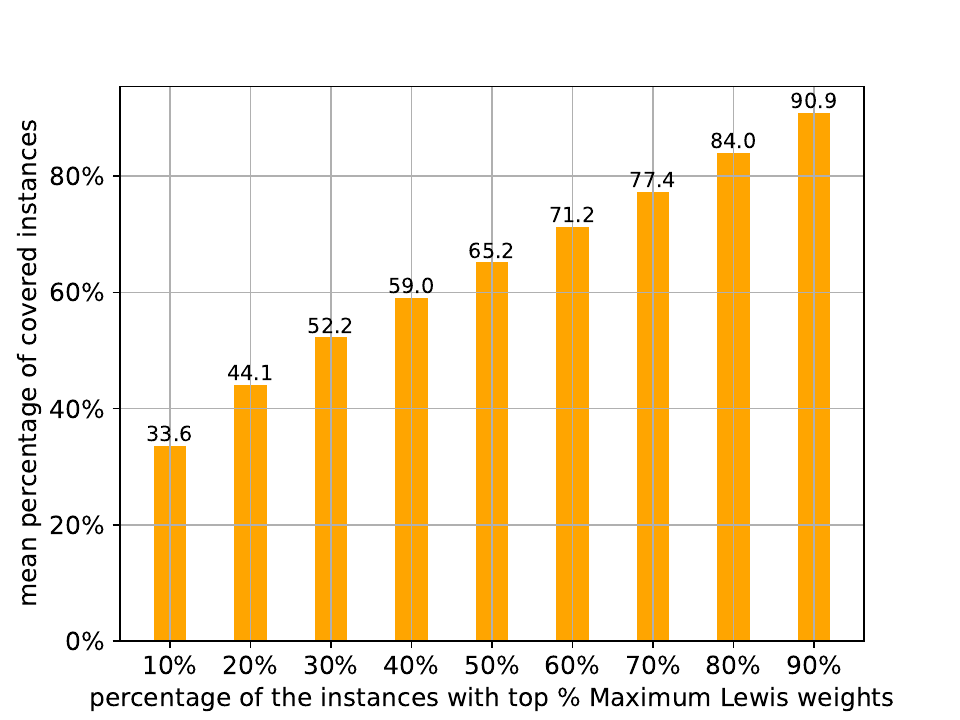}
	}
  \caption{The mean percentage of shared data between instances having the highest maximum leverage score and those having the highest leverage score under the representation of a specific deep model.}\label{fig:cover_rate}
\end{figure*}

\subsection{Additional results}

\subsubsection{Study on Mean Percentage of Covered Instances}
We further examine how many instances with high leverage scores under the representation of a single model can be covered by the maximum leverage score sampling.
The statistics are calculated as follows: We first get the intersection between the sets of instances that have the top $t\%$ highest leverage score under the representation of model $j$ (denoted by $I_j^t$) and top $t\%$ highest Maximum leverage score (denoted by $I^t$). Then, we divide the cardinality of this subset by the number of $t\%$ unlabeled instances. Finally, we calculate this value for each $j\in [k]$ and compute the average to obtain the mean percentage of covered instances, i.e., 
\[
\kappa(t) = \frac{1}{k}\sum_{j=1}^k \frac{|I_j^t\cap I^t|}{|I^t|}.
\]
We report the mean percentage of covered instances of 50 deep models in Figure~\ref{fig:cover_rate}. 
It can be observed that $\kappa(10)$ is about $30\%$ on most datasets (except for Biwi), that is, $I^{10}$ covers on average about $30\%$ of the instances with high leverage scores of each representation for most datasets. For all datasets, as $t$ increases, $\kappa(t)$ increases rapidly. These phenomena suggest that sampling a modest number of instances by maximum leverage scores can effectively train multiple deep models, as there are a significant fraction of instances with high leverage scores shared across different models.

\subsubsection{Study on the Class Imbalance Ratio}
Another metric of interest for AL classification algorithms is the class imbalance ratio, which is defined as $\max_{c} \sum_{i\in [n_l]}\mathbb{I}\{y_i = c\}/\min_{c} \sum_{i\in [n_l]}\mathbb{I}\{y_i = c\}$, where $\mathbb{I}$ is the indicator function. Some active query strategies may cause severe class imbalance, rendering them hardly generalizable  to other target models and learning tasks. This issue becomes more significant for multiple target models. In this experiment, we examine the class imbalance ratios of different AL methods in the classification tasks. Specifically, we compare the ratios after the third AL iteration, where a total of 12000 labeled instances are present, including the initially labeled set. The results are reported in Table~\ref{tab:classbalance}.

We can observe that our proposed method consistently produces a balanced labeled dataset. Its class imbalanced ratio is very close to that of the $\mathsf{Random}$ sampling. On the other hand, $\mathsf{Coreset}$ suffers from the class imbalance, suggesting that the training instances with different classes exhibit diverse intra-class distances under deep representation. This implies that the instances sampled by $\mathsf{Coreset}$ may have a large distribution gap with the dataset.
$\mathsf{Entropy}$ obtains a similar class imbalance ratio to that of $\mathsf{Random}$, which might appear counter-intuitive, given that $\mathsf{Entropy}$ prefers the instances near the decision boundary and such instances are typically less likely to be class-balanced. A possible reason is that the mean entropy scores of the unlabeled instances may have a small standard deviation, potentially diminishing the advantages of using $\mathsf{Entropy}$ for identifying the most informative instances, in the setting of multiple target models.

\begin{table}[tb]
\centering
  \caption{The class imbalance ratio of different query strategies.} \label{tab:classbalance}
\setlength{\tabcolsep}{2mm}{
    \begin{tabular}{c|c| c| c| c| c| c|c|c}
             \toprule
	\hline
         &MNIST & F.MNIST & K.MNIST & SVHN & CIF.10 & CIF.100 & EMN.l. &  EMN.d. \\

        \hline
DIAM  & 2.007 & 3.250 & 1.472 & 2.121 & 1.525 & 4.390 & 2.216 & 2.811\\
\hline
QBC  & 1.867 & 6.721 & 1.987 & 4.474 & 5.212 & 13.607 & 9.178 & 10.664\\
\hline
Coreset  & 3.561 & 3.708 & 2.116 & 2.330 & 1.871 & 6.000 & 5.375 & 4.491\\
\hline
Random  & 1.262 & 1.091 & 1.067 & 3.008 & 1.062 & 1.511 & 1.092 & 1.222\\
\hline
Entropy  & 1.331 & 1.077 & 1.091 & 3.052 & 1.087 & 1.439 & 1.088 & 1.166\\
\hline
Our  & 1.217 & 1.081 & 1.050 & 2.993 & 1.098 & 1.440 & 1.109 & 1.209\\
\hline
\bottomrule
    \end{tabular}
    }
\end{table}

\begin{table}[tb]
    \centering
    \caption{Running time (hours) of the methods in regression benchmarks. The running time includes model training and data querying.}\label{tab:runtime_oneshot}
    \begin{tabular}{c|c|c|c}
    \toprule
        \hline
  Methods  & Biwi  & FLD  & CelebA \\
        \hline
Random  &  1.217  & 1.383  &  1.202  \\
Coreset &  1.483  & 1.883  &  3.667  \\
Our     &  1.551  & 1.850  &  5.817  \\
QBC     &  1.632  & 1.800  &  4.110  \\
    \hline
    \bottomrule
    \end{tabular}

\end{table}

\end{document}